\documentclass[lettersize,journal]{IEEEtran}
\usepackage[colorlinks,urlcolor=blue,linkcolor=blue,citecolor=blue]{hyperref}
\usepackage{amsmath,amsfonts}
\usepackage{algorithmicx}
\usepackage{algpseudocode}
\usepackage{titlesec} \titlespacing*{\paragraph}{0pt}{3pt}{0.5em}
\usepackage{array}
\usepackage{textcomp}
\usepackage{stfloats}
\usepackage{url}
\usepackage{amsmath}
\usepackage[ruled,vlined,linesnumbered]{algorithm2e}
\usepackage{stmaryrd}

\usepackage{amsthm}
\newtheorem{theorem}{Theorem}
\newtheorem{proposition}{Proposition}
\usepackage[table]{xcolor}
\usepackage[caption=false,font=normalsize,labelfont=sf,textfont=sf]{subfig}
\newtheorem{definition}{Definition}
\newtheorem{lemma}{Lemma}
\newtheorem{corollary}{Corollary}

\usepackage{graphicx}
\usepackage{verbatim}
\def\BibTeX{{\rm B\kern-.05em{\sc i\kern-.025em b}\kern-.08em
    T\kern-.1667em\lower.7ex\hbox{E}\kern-.125emX}}
\usepackage{balance}
\begin{document}
\title{Provenance Guided Incremental Learning Under Evolving Concept Definitions}
\author{
Ismail Lamaakal, \IEEEmembership{Student Member, IEEE}

\thanks{Lamaakal. I is with the Department of Computer Science, Faculty of Applied Sciences Nador, Mohammed Premier University, Oujda, Morocco. Corresponding author email: ismail.lamaakal@ieee.org}

}

\maketitle



\begin{abstract}
Learning systems deployed over long periods must adapt not only to
statistical changes in incoming data, but also to revisions of the
definitions that generate their prediction targets. Conventional
concept-drift methods typically infer such changes from observations or
prediction errors, even when the underlying policy, rule, or query has
been explicitly modified. This paper studies \emph{rule-induced concept
shift}, where the target-defining concept is revised directly, causing
previously stored instances to acquire different semantic labels
without requiring any change in their observed data. We introduce a
provenance-guided incremental learning framework that compiles
consecutive concept definitions into a structured rule delta, traces
the changed components through historical provenance, certifies records
whose previous labels remain valid, and restricts reevaluation to a
localized candidate region. Executable revisions are relabeled
automatically, ambiguous cases are handled through selective
supervision, and the resulting changes are used for incremental
predictor repair. A versioned concept memory further supports recurring
definitions. We also introduce RuleShift-Bench, spanning financial,
demographic, cybersecurity, and graph-structured data with threshold,
predicate, logical, relational, recurring, and mixed concept
revisions. Across the benchmark, provenance-guided repair attains
92.3\% accuracy and 90.2\% Macro-F1 while reprocessing 14.7\% of the
historical collection and retaining 94.6\% of affected records. Its
average update latency is 179\,s compared with 993\,s for complete
relabeling and retraining. The results demonstrate that an explicit
concept revision can be exploited as a data-maintenance signal,
allowing learning systems to update the supervision and predictive
state that depend on the change while preserving knowledge that remains
valid.
\end{abstract}



\begin{IEEEkeywords}
Concept drift, data provenance, incremental learning.
\end{IEEEkeywords}

\section{Introduction}
\label{sec:introduction}

Machine-learning models are increasingly deployed in environments where
the meaning of the prediction target evolves over time. Fraud detection,
security monitoring, compliance checking, eligibility assessment,
content moderation, recommendation, and risk management are typical
examples in which a model trained under one operational definition may
later be used under a revised one. Recent studies in online fraud
detection and adaptive intrusion detection illustrate how rapidly
evolving behaviors can reduce the validity of models learned under
earlier operating conditions
\cite{intro_zhu2024,intro_seth2024,intro_shyaa2024}.
Maintaining predictive quality in such environments is therefore not
only a question of learning an accurate model initially, but also of
updating the learning system when the semantics of its target change.

A large body of work studies this problem through the lens of concept
drift and streaming learning. Recent surveys characterize concept drift
as a change in the statistical process governing an evolving data
stream and organize existing approaches according to how such changes
are detected, localized, and handled
\cite{intro_hinderA2024,intro_arora2024,intro_lukats2025}.
In the conventional setting, a change is not directly available to the
learner. Instead, it must be inferred from incoming observations,
prediction errors, changes in class frequencies, or shifts in the
underlying data distribution. Recent work accordingly studies drift
locality and distribution-based detection
\cite{intro_aguiar2024,intro_sun2024}, drift-type-aware adaptation
\cite{intro_li2024}, adaptive ensembles
\cite{intro_wei2024}, streaming neural adaptation
\cite{intro_chambers2025}, proactive adaptation
\cite{intro_guerrero2026}, and drift handling in distributed streaming
settings \cite{intro_chen2026}. Drift detectors, adaptive classifiers,
online learners, replay mechanisms, and recurring-drift methods
consequently focus on recognizing that the current predictor no longer
matches the evolving environment and on adapting the model after
sufficient evidence of change has appeared.

This view is appropriate when the mechanism responsible for the change
is hidden. However, many data-intensive applications evolve in a
different way. The target may be generated by an explicit policy,
business rule, compliance specification, relational condition, graph
criterion, or expert-defined decision procedure, and this definition
can itself be revised. In a transaction-monitoring system, for example,
an earlier policy may classify a transaction as suspicious when its
amount exceeds five thousand units or when it is foreign. A revised
policy may instead combine a lower transaction threshold with device
mismatch and additionally classify transactions involving blocked
merchants. Nothing about a previously stored transaction needs to
change for its correct label to become different; what changes is the
definition used to interpret that transaction. Existing work has
started to move beyond simply detecting drift toward locating,
characterizing, and explaining where statistical changes occur
\cite{intro_hinderA2024,intro_wang2024,intro_yang2025}, but these
approaches still principally reason from observed changes in data,
predictions, or learned relationships rather than from a directly
revised target-generating rule.

This situation exposes an important limitation of treating every target
change as ordinary statistical drift. If the revised definition is
already known, waiting for a sequence of prediction failures to reveal
the change discards information that is directly available to the
system. More importantly, detecting that a concept has changed does not
identify which historical examples have become semantically invalid.
Some records may depend on the modified part of the definition and
require reevaluation, whereas many others may remain correct under both
the previous and revised definitions. This distinction is related to
the broader observation that drift can be localized rather than global
and that different changes can affect different regions of the data
space \cite{intro_aguiar2024,intro_hinderA2024}. However, in our setting
the localization problem begins from an explicit modification of the
target-defining computation rather than from a statistically detected
distributional change.

A straightforward response is to apply the revised definition to the
complete historical database, regenerate every training label, and
retrain the predictor from scratch. Although correct, this strategy can
be unnecessarily expensive for large historical collections. The cost
becomes particularly significant when concept definitions contain
relational joins, graph dependencies, historical aggregates, external
services, or other expensive operations. Recent work on industrial
data streams similarly emphasizes both the computational difficulty of
repeated adaptation and the need to exploit reusable knowledge when
operating conditions evolve
\cite{intro_zhang2024,intro_zhang2025}. Complete recomputation also
ignores the fact that a local revision to a policy may affect only a
small fraction of previously observed instances. At the opposite
extreme, updating a model only from recent observations may be cheaper,
but it provides no guarantee that older examples whose semantic labels
have changed are properly reconsidered. The central problem is therefore not simply whether a concept has
changed, but how the known revision propagates through historical data
and predictive state. A useful maintenance mechanism should determine
which parts of the previous training collection can still be trusted,
which records need to be reevaluated, which revised labels can be
generated automatically, and which cases require additional
supervision. It should then update the predictor using this localized
change without unnecessarily rebuilding knowledge that remains valid.
This motivates a shift from conventional drift detection toward
explicit concept-definition maintenance: rather than rediscovering a
known semantic revision indirectly from subsequent observations, the
revision itself can be used to determine which historical supervision
and predictive knowledge actually require reconsideration.

This paper addresses this problem through provenance-guided incremental
learning under evolving concept definitions. The framework starts from
the previous and revised concept definitions, identifies the structural
difference between them, and combines that difference with provenance
information describing how historical outcomes depended on predicates,
attributes, relations, or computational paths. Historical records that
may be influenced by the revision are reconsidered, while records whose
outcomes can be certified as stable are preserved. Revised supervision
is then used to repair the predictor incrementally, and previously
encountered concept versions are retained so that recurring definitions
can be recovered efficiently. The resulting workflow connects four stages that are typically handled
independently. First, the system determines what changed in the concept
definition. Second, it uses provenance to determine which historical
records depend on that change. Third, it updates supervision only for
the relevant region, including selective human annotation when the
revision cannot be executed deterministically. Finally, it repairs the
predictor while preserving behavior associated with unaffected data.
This creates a direct path from concept-definition maintenance to
training-data maintenance and ultimately to model maintenance.

The main question addressed in this work is consequently the following:
\emph{Given an explicit change in a concept definition, can a learning
system determine which historical data and predictive state actually
need to be reconsidered, rather than rediscovering the change from
subsequent prediction errors or recomputing the entire historical
collection?}

The main contributions of this work are as follows:

\begin{enumerate}

    \item 
    We introduce rule-induced concept shift, a learning setting in which
    the definition generating the target changes explicitly rather than
    only through hidden statistical drift. The formulation separates
    the known semantic revision from its consequences for historical
    supervision and predictive state.

    \item 
    We develop a mechanism that analyzes the difference between
    consecutive concept definitions and uses historical provenance to
    identify records that may require reevaluation. The same mechanism
    certifies unaffected records when their previous outcomes remain
    invariant under the revised definition.

    \item 
    We combine automatic relabeling for executable changes, selective
    human supervision for ambiguous revisions, incremental predictor
    updating, and recovery of recurring concept versions within a
    unified maintenance framework.

    \item 
    We introduce RuleShift-Bench and evaluate the proposed framework
    across multiple data modalities and concept revisions, jointly
    considering predictive quality, historical reprocessing,
    annotation requirements, update latency, storage overhead, and
    recovery under recurring concepts.

\end{enumerate}

The remainder of the paper is organized as follows.
Section~\ref{sec:related_work} reviews the literature on concept drift,
incremental and continual learning, query evolution, and data provenance.
Section~\ref{sec:framework} formalizes learning under evolving concept
definitions and presents the proposed provenance-guided framework,
including rule-delta compilation, affected-data discovery, selective
supervision, incremental repair, and versioned concept memory.
Section~\ref{sec:theory} establishes the main correctness properties and
analyzes the computational conditions under which selective maintenance
is advantageous. Section~\ref{sec:experiments} introduces
RuleShift-Bench and evaluates predictive performance, computational
efficiency, annotation requirements, recurring concepts, component
ablations, and failure modes. Finally, Section~\ref{sec:conclusion}
summarizes the main findings, discusses the limitations of the current
framework, and outlines directions for future work.

\section{Related Work}
\label{sec:related_work}
This section reviews the main research areas related to evolving concept definitions, including concept drift and streaming learning, incremental and continual learning, incremental query processing, and data provenance.

\subsection{Concept Drift and Streaming Learning}

Concept drift and streaming learning study prediction under changing
data-generating environments.
Existing work includes statistical drift
detection, adaptive streaming classifiers, online learning, active
adaptation, dynamic windowing, and mechanisms for recurring drift
\cite{rw_wu2024,rw_din2024}.
These methods typically monitor prediction errors, feature
distributions, label distributions, or other stream statistics and
adapt the learner when sufficient evidence of change is observed
\cite{rw_su2024,rw_komorniczak2024}.
Recurring-drift methods additionally retain or recover previously useful
models when an earlier statistical regime reappears. The central
distinction in our setting is that the change in the target-defining
rule is explicitly available. Rather than statistically inferring that
drift has occurred from subsequent observations, we analyze the known
revision directly and determine how it changes historical supervision.

\subsection{Incremental and Continual Learning}

Incremental and continual learning address how predictive models can
incorporate new information without repeatedly training from scratch.
Representative approaches include replay, regularization against
forgetting, online parameter updates, incremental classifiers, and
dynamic or adaptive ensembles
\cite{rw_adaptive2025,rw_textstream2025}.
These techniques provide mechanisms for
efficiently updating a predictor and preserving previously learned
knowledge, and they are therefore complementary to the model-repair
stage considered in this work
\cite{rw_cgofed2025,rw_onepass2025}.
However, they generally assume that the
examples used for adaptation have already been identified. They do not
normally use an explicit change in the target definition to determine
which historical examples have become semantically invalid and which
remain correct. Our focus therefore precedes conventional incremental
updating: we first determine which historical supervision actually
requires reconsideration and then repair the predictor using that
localized change.

\subsection{Incremental Query Processing and Query Evolution}

Incremental query processing provides an important data-management
foundation for our setting
\cite{rw_insertdelete2024,rw_updatedependent2025}.
Incremental view maintenance and
incremental computation avoid recomputing complete query results when
only part of the underlying computation has changed
\cite{rw_cqupdates2025,rw_aqpupdates2025},
while query
differencing and work on evolving queries analyze how modifications to
predicates, operators, joins, and other query components affect
previously computed outputs. We transfer this principle from database
maintenance to learning-system maintenance. A concept definition can be
viewed as the computation that generates supervision; when this
definition changes, its structural difference determines which
historical labels may become invalid. The resulting affected training
data then determines what must be changed in the predictor. This creates
a direct bridge from \emph{query maintenance} to
\emph{training-data maintenance} and finally to
\emph{model maintenance}.

\subsection{Data Provenance and Lineage}

Data provenance and lineage describe how a result depends on underlying
tuples, attributes, predicates, relations, transformations, or
computational paths
\cite{rw_gregori2025,rw_sambara2024}.
Provenance has been widely used for explanation,
auditing, debugging, reproducibility, impact analysis, and tracing the
origin of query results
\cite{rw_godwin2024,rw_schlegel2026}.
In this work, provenance is used as an active
maintenance mechanism rather than only as an explanatory artifact.
Given a revision from \(Q_t\) to \(Q_{t+1}\), provenance identifies
which historical evaluations depend on the changed components and which
records remain insulated from those changes. We therefore exploit
provenance not only to explain previous query results, but to determine
whether a revised concept definition can change the supervision
associated with a historical instance. This connection enables
selective reevaluation and provides the basis for certifying stable
records before incremental model repair.


\section{Problem Formulation and Proposed Framework}
\label{sec:framework}

We consider long-lived predictive systems in which the definition of
the target concept can be explicitly revised after deployment. Unlike
conventional concept-drift settings, where the existence and location
of a change must typically be inferred from incoming observations,
prediction errors, or distributional statistics, the setting considered
here provides direct information about how the target-generating rule
has evolved. The central problem is therefore not merely to determine
whether a concept has changed, but to identify which historical records
can actually be affected by the revision and how the deployed predictor
should be updated without unnecessarily reconsidering the complete data
collection.

The proposed framework connects concept-rule evolution with
provenance-guided data maintenance and incremental model repair. Given
an old concept definition \(Q_t\) and its revised version
\(Q_{t+1}\), we first represent both definitions as structured predicate
graphs and compile their difference into a typed rule delta describing
which predicates, thresholds, logical operators, or relational
dependencies have changed. The resulting rule delta is then combined
with record-level provenance to determine which historical examples
could have their target assignments modified by the revision. Records
whose outputs can be certified as unchanged are excluded from
unnecessary reprocessing, while potentially affected records are
selectively re-evaluated. Automatically resolvable changes are directly
relabelled, ambiguous cases are routed to selective supervision, and
the resulting data are used to incrementally repair the deployed
predictor. A versioned concept memory additionally maintains previous
concept definitions and associated maintenance information so that
recurring definitions can be handled efficiently. Figure~\ref{fig:framework} summarizes the complete framework. The old
and revised concept definitions are first compared by the rule-delta
compiler, which produces the changed predicates and operators. The
provenance analyzer then partitions the historical data into certified
stable, exactly affected, and ambiguous regions. The exactly affected
subset is relabelled automatically, the ambiguous subset is passed to
selective annotation, and the resulting supervision is combined with a
stability buffer drawn from certified stable records to perform
incremental predictor repair. Beneath this pipeline, the versioned
concept memory stores concept graphs, provenance summaries, affected
regions, and repaired model states so that future revisions, including
recurring ones, can be handled more efficiently.
\begin{figure*}[htbp]
    \centering
    \includegraphics[width=\textwidth]
    {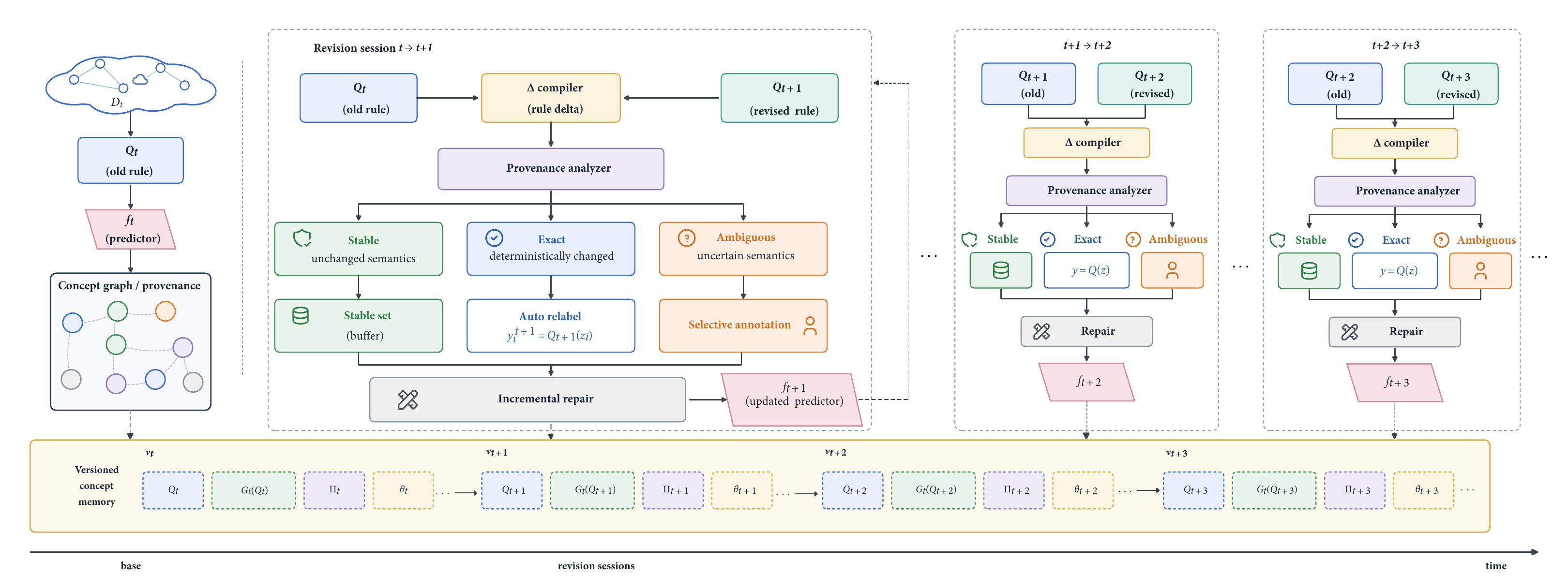}
    \caption{
    \textbf{Overview of the proposed provenance-guided incremental learning
    framework under evolving concept definitions.}
    At revision session \(t\!\rightarrow\!t+1\), the previous concept
    definition \(Q_t\) and the revised definition \(Q_{t+1}\) are first
    compared by the rule-delta compiler to identify predicates, parameters,
    logical operators, or relational dependencies that have changed.
    The provenance analyzer then traces these changes through the historical
    data and separates records into three groups: certified stable records,
    whose previous targets remain valid; exactly affected records, whose
    revised targets can be computed deterministically; and ambiguous records,
    for which additional supervision is required. Certified stable examples
    provide a stability buffer, exactly affected examples are relabeled
    automatically using \(Q_{t+1}\), and ambiguous examples are handled
    through selective annotation. The resulting revision-specific data are
    used to incrementally repair the deployed predictor from
    \(f_{\theta_t}\) to \(f_{\theta_{t+1}}\) without recomputing the complete
    historical dataset. Successive revisions are processed in the same
    manner, while a versioned concept memory stores each concept definition,
    predicate graph, provenance state, affected-data information, and model
    state to support efficient future updates and recurring concept
    definitions.}
    \label{fig:framework}
\end{figure*}
Taken together, the four components introduced in
Sections~\ref{subsec:provenance_discovery}--
\ref{subsec:versioned_memory} complete the main operational logic of
the method. The rule-delta compiler determines what changed, the
provenance analyzer determines where that change can matter, the
exact--ambiguous decomposition determines which updated labels can be
obtained automatically and which require supervision, the repair module
updates the predictor accordingly, and the versioned concept memory
maintains the system state across successive concept revisions.

\subsection{Learning Under Evolving Concept Definitions}
\label{subsec:problem_formulation}

Let \(\mathcal{Z}\) denote the complete information space maintained by
the underlying data system, and let \(z_i\in\mathcal{Z}\) represent the
information associated with entity \(i\). This representation may
include attributes distributed across relational tables, historical
aggregates, temporal information, graph relations, external knowledge,
or expert-provided information. At time \(t\), the target concept is
specified by a concept program
\(Q_t:\mathcal{Z}\rightarrow\mathcal{Y}_t\), which assigns
\(y_i^t=Q_t(z_i)\). Depending on the application, \(Q_t\) may contain
threshold predicates, Boolean conditions, relational operations,
graph-path constraints, or combinations thereof.

Although \(Q_t\) defines the target, directly executing the complete
concept program for every prediction request may be impractical.
Production systems often operate under tighter latency and information
constraints than offline data processing. We therefore distinguish
\(z_i\) from the information available to the deployed predictor. Let
\(\pi:\mathcal{Z}\rightarrow\mathcal{X}\) produce the prediction-time
representation \(x_i=\pi(z_i)\). The deployed model
\(f_{\theta_t}\) then provides an efficient approximation of the
current concept, \(f_{\theta_t}(x_i)\approx Q_t(z_i)\). This distinction
is important because evaluating \(Q_t\) may require expensive joins,
long historical windows, external services, privileged attributes,
graph traversal, or manual verification that are unavailable or too
costly at prediction time.

\paragraph{Rule-induced concept shift.}
We consider settings in which the concept program is explicitly revised
from \(Q_t\) to \(Q_{t+1}\), which we refer to as
\emph{rule-induced concept shift}. Such a shift does not require the
underlying record, prediction-time features, or their distributions to
change. The same historical instance may instead acquire a different
correct target solely because the definition used to interpret it has
changed.

Let the historical collection at the time of revision be
\(\mathcal{D}_t=\{z_i\}_{i=1}^{N}\). The records whose targets change
form the affected set
\[
\mathcal{D}_t^{\Delta}
=
\{z_i\in\mathcal{D}_t:
Q_t(z_i)\neq Q_{t+1}(z_i)\},
\]
while its complement
\(\mathcal{D}_t^{\mathrm{st}}
=\mathcal{D}_t\setminus\mathcal{D}_t^{\Delta}\)
contains records whose previous targets remain valid. Importantly,
\(Q_t\neq Q_{t+1}\) does not imply that every historical record is
affected. For example, revising a threshold from
\(\texttt{amount}>5000\) to \(\texttt{amount}>3000\) can change the
decision only for records lying between the two thresholds; records
outside this disagreement region retain the same outcome.

This distinction shows that the structural size of a rule revision and
the number of affected records are different quantities. A small rule
change may influence a large portion of the data, whereas a more
substantial structural revision may affect only a localized subset.
Identifying which historical examples require reconsideration therefore
requires reasoning jointly about what changed in the concept definition
and how individual records depend on the modified components.

A straightforward adaptation strategy evaluates \(Q_{t+1}\) over the
entire historical collection, reconstructs all labels, and retrains the
predictor from scratch. This becomes unnecessarily expensive when
\(|\mathcal{D}_t^{\Delta}|\ll|\mathcal{D}_t|\), particularly when
concept evaluation involves costly database, relational, graph, or
human operations. We therefore formulate adaptation as a joint
data-maintenance and predictive-update problem: given the previous
predictor, the historical collection, and the explicit transition from
\(Q_t\) to \(Q_{t+1}\), the objective is to obtain a predictor aligned
with the revised concept while minimizing unnecessary reevaluation,
human supervision, model-update cost, and cross-version storage.

The explicit concept transition thus provides structured information
about the origin of the change rather than treating adaptation solely
as the response to an unknown statistical event. The next step is to
convert this revision into a computational representation that exposes
which components of the concept definition have changed and can
therefore influence historical supervision.

\subsection{Concept Rule Representation and Delta Compilation}
\label{subsec:rule_delta}

The first operational step determines \emph{what changed} between two
consecutive concept definitions. Treating \(Q_t\) and \(Q_{t+1}\) as
opaque functions would hide the internal structure of the revision and
provide little guidance about which historical records may be affected.
We therefore represent each concept program as a canonical predicate
directed acyclic graph,
\(G_t=G(Q_t)=(\mathcal{V}_t,\mathcal{E}_t,\phi_t)\), where
\(\mathcal{V}_t\) contains computational nodes,
\(\mathcal{E}_t\) represents their dependencies, and \(\phi_t\)
associates each node with its operator type and parameters. The graph
is directed from lower-level predicates toward the final concept output,
making the decision dependencies explicit.

The representation supports the principal components required by the
concept definitions considered in this work. Atomic and threshold nodes
encode conditions on attributes, logical nodes represent conjunction,
disjunction, and negation, relational nodes capture dependencies across
entities or tables, and graph-path nodes describe one-hop or multi-hop
relations. More complex concepts are obtained by composing these
elements. Unlike the final target label alone, this representation
retains the computational structure needed to reason about how a local
rule revision can propagate to historical data.

Given \(G_t\) and \(G_{t+1}\), the rule-delta compiler aligns persistent
components using predicate identity, referenced attributes or relations,
operator type, parameters, and canonical graph position. Differences
that remain after alignment are summarized by the typed rule delta
\[
\mathcal{C}_t=\operatorname{Diff}(G_t,G_{t+1}),
\]
where each element records the modified component, its edit type, and,
when relevant, its previous and revised parameters.

We consider five principal edit families: threshold revisions,
predicate insertions, predicate deletions, logical rewrites, and
relational or graph-path rewrites. For example, replacing
\(Q_t=P_1\land P_2\) with \(Q_{t+1}=P_1\land P_3\) preserves
\(P_1\), removes \(P_2\), and inserts \(P_3\), yielding
\(\mathcal{C}_t=\{-P_2,+P_3\}\). A threshold revision instead preserves
the predicate identity while recording the parameter change, whereas a
logical rewrite preserves the predicates but modifies the operator that
connects them. A typed delta is useful because different edit classes induce different
patterns of affected records. Threshold revisions are localized to the
region where the old and new thresholds disagree; predicate insertions
or deletions affect records whose decisions depend on the modified
condition; logical rewrites depend on combinations of predicate values;
and relational changes may propagate through connected tuples or graph
neighborhoods. The compiler can therefore narrow the relevant
dependencies before executing the complete revised concept program.

The structural delta alone, however, does not identify the true
affected set \(\mathcal{D}_t^{\Delta}\). Two records evaluated by the
same concept may depend on different branches, tuples, or relations, so
a changed component can be decisive for one record and irrelevant to
another. Selecting every record associated with an edited rule component
would therefore remain overly conservative. The rule-delta compiler consequently provides the changed dependencies
that must be traced through the historical collection. The next stage
combines \(\mathcal{C}_t\) with record-level provenance to separate
records that can be certified stable from those requiring further
evaluation, thereby connecting structural concept change to
provenance-guided affected-data discovery.

\subsection{Provenance Guided Affected Data Discovery}
\label{subsec:provenance_discovery}

The structural delta \(\mathcal{C}_t\) identifies which components of
the concept definition have changed, but not which historical records
are affected. Records evaluated under the same concept may depend on
different predicates, tuples, relations, or graph paths; consequently,
a modified component can be decisive for one record and irrelevant to
another. We therefore reconsider a historical record only when its
previous concept evaluation may depend on components altered by the
revision. For each \(z_i\in\mathcal{D}_t\), we maintain a provenance
representation \(\operatorname{Prov}(z_i,Q_t)\) describing the
predicates, source tuples, relations, graph paths, or other
computational dependencies that contributed to \(Q_t(z_i)\).
Depending on the rule language, this information can be represented by
activated predicate nodes, tuple lineage, path signatures, or compressed
dependency indices. Unlike the final label alone, provenance records
\emph{how} the previous concept decision was obtained.

A direct dependency test provides useful intuition: if the provenance
of a record does not involve any changed component, the record may be
unaffected. However, a simple intersection test is not sufficient in
general because revisions to logical operators or ancestor nodes can
alter the evaluation path even when the same leaf predicates remain
present. The provenance analyzer therefore considers the changed
components together with their relevant dependency closure in the
concept graph.

This analysis partitions the historical collection into a
\emph{certified stable set}
\(\mathcal{D}_t^{\mathrm{safe}}\), containing records whose targets are
guaranteed to remain unchanged, and a \emph{candidate set}
\[
\mathcal{D}_t^{\mathrm{cand}}
=
\mathcal{D}_t\setminus\mathcal{D}_t^{\mathrm{safe}},
\]
containing records for which invariance cannot be certified. Only the
candidate set is reevaluated under the revised concept, and the actual
changed-label set is recovered as
\[
\mathcal{D}_t^{\Delta}
=
\{z_i\in\mathcal{D}_t^{\mathrm{cand}}:
Q_t(z_i)\neq Q_{t+1}(z_i)\}.
\]
Thus, provenance does not predict the revised label directly; it reduces
the region over which the revised concept must be executed.

The partition is deliberately conservative. A record is excluded from
reevaluation only when its stability can be certified; otherwise it
remains in \(\mathcal{D}_t^{\mathrm{cand}}\), even if subsequent
evaluation shows that its label is unchanged. This avoids retaining
obsolete supervision while allowing substantial reductions in data
access, query execution, relabeling, and update latency when the
revision is localized and provenance is informative. The rule-delta
compiler therefore determines \emph{what changed}, while provenance
determines \emph{where that change can matter}. The formal conditions
under which stable records can be certified are established in
Section~\ref{subsec:stability_certificate}.

\subsection{Exact and Ambiguous Concept Changes}
\label{subsec:exact_ambiguous}

After identifying the candidate set
\(\mathcal{D}_t^{\mathrm{cand}}\), the next step is to determine whether
the revised concept can be resolved automatically for each candidate.
If \(Q_{t+1}\) is fully executable, the updated target can be obtained
directly. Otherwise, the revision may depend on missing, delayed,
uncertain, external, or expert-interpreted information and therefore
require additional supervision.

We partition the affected region as
\[
\mathcal{D}_t^{\Delta}
=
\mathcal{D}_t^{\mathrm{exact}}
\cup
\mathcal{D}_t^{\mathrm{amb}},
\]
where \(\mathcal{D}_t^{\mathrm{exact}}\) contains records whose revised
targets can be computed deterministically and relabeled automatically,
while \(\mathcal{D}_t^{\mathrm{amb}}\) contains records for which direct
execution is unavailable or insufficiently reliable. Ambiguity may
arise from expert judgment, incomplete or delayed attributes, uncertain
learned predicates, noisy relational or graph information, or external
knowledge not fully represented in the data system.

This distinction avoids assuming that every concept revision is fully
machine-executable. In practical settings, a revised policy or concept
definition may contain both deterministic and ambiguous components, so
the framework exploits automatic relabeling wherever possible and
reserves human intervention only for unresolved cases.

Rather than annotating the entire ambiguous region, we select a smaller
subset
\[
\mathcal{D}_t^{\mathrm{label}}
\subseteq
\mathcal{D}_t^{\mathrm{amb}}
\]
using uncertainty and representativeness. Uncertainty prioritizes
records whose revised targets are difficult to infer, while
representativeness reduces redundant annotation. Human supervision is
therefore concentrated on the portion of the concept revision where it
provides the greatest information.

This decomposition reduces both computational and supervision cost:
exact revisions are handled automatically, while annotation is limited
to the ambiguous concept-delta region rather than the full historical
collection. The experiments in
Section~\ref{subsec:annotation_recurrence} evaluate this selective
annotation strategy under limited labeling budgets.

\subsection{Incremental Predictor Repair}
\label{subsec:incremental_repair}

After identifying the exact and labeled ambiguous examples, the next
step is to update the deployed predictor. The framework constructs a
repair dataset
\(\mathcal{D}_t^{\mathrm{repair}}=
\mathcal{D}_t^{\mathrm{exact}}\cup
\mathcal{D}_t^{\mathrm{label}}\), which contains records whose revised
targets are either obtained deterministically or provided through
selective annotation. This repair set represents the portion of the
historical data that carries direct evidence about how the predictor
must change under the new concept definition.

At the same time, the framework retains a smaller subset
\(\mathcal{D}_t^{\mathrm{stable}}\subseteq
\mathcal{D}_t^{\mathrm{safe}}\), sampled from the certified stable
region. The purpose of
\(\mathcal{D}_t^{\mathrm{stable}}\) is not to teach new semantics, but
to preserve previously correct predictive behavior on regions that are
known to remain unchanged. This is an important distinction. Under
rule-induced concept shift, the goal is not indiscriminate preservation
of all prior knowledge, because part of that knowledge may have become
obsolete. Instead, preservation should be targeted specifically at
those records and decision regions whose validity has been certified by
the provenance analysis.

To keep the framework general, we deliberately avoid tying the repair
mechanism to a specific predictor family. Let \(f_{\theta_t}\) denote
the deployed model before the concept revision and
\(f_{\theta_{t+1}}\) the repaired model after adaptation. The update is
driven by a simple objective
\(\mathcal{L}=
\mathcal{L}_{\mathrm{repair}}+
\lambda \mathcal{L}_{\mathrm{stable}}\), where
\(\mathcal{L}_{\mathrm{repair}}\) encourages the model to fit the
revised targets in
\(\mathcal{D}_t^{\mathrm{repair}}\), while
\(\mathcal{L}_{\mathrm{stable}}\) constrains the model to preserve
appropriate behavior on
\(\mathcal{D}_t^{\mathrm{stable}}\). The scalar \(\lambda\geq0\)
balances the strength of preservation relative to revision.

The role of \(\mathcal{L}_{\mathrm{repair}}\) is conceptually simple:
it teaches the model which decisions should change after the concept
update. Depending on the predictor family, this term may correspond to
a cross-entropy loss for classification, a logistic objective for
binary risk prediction, or an online update objective for stream-based
models. The role of \(\mathcal{L}_{\mathrm{stable}}\) is equally
important: it prevents unnecessary drift in regions of the input space
whose semantic interpretation remains valid. Without this term, even a
small repair dataset may induce broader parameter movement that harms
performance on stable records.

The resulting learning procedure should be understood as an
\emph{incremental repair} rather than full retraining. The model is not
discarded and relearned from scratch after every concept revision.
Instead, it is selectively adjusted using the smallest supervision set
that reflects the revised concept while preserving stable knowledge
identified through provenance-based certification. This design directly
matches the objective of the paper: to update predictive behavior only
where the validity of previous knowledge has actually changed.

An important advantage of this formulation is that it is predictor
agnostic. The same maintenance pipeline can be instantiated with
gradient boosting models, neural tabular predictors, or online
decision-tree methods. This is strategically important because the core
contribution of the framework lies in how concept evolution is compiled
into data maintenance and targeted model updating, not in the design of
a new task-specific backbone. Demonstrating the same framework across
several predictive families will later support the claim that the
method is broadly applicable rather than architecture dependent.

\subsection{Versioned Concept Memory}
\label{subsec:versioned_memory}

Many deployed systems do not experience a single isolated concept
revision. Instead, concept definitions evolve repeatedly over time and
may even return to previously used states. A robust maintenance
framework should therefore preserve not only the current predictor, but
also a structured record of past concept versions and the information
needed to update between them efficiently.

To support this setting, we maintain a concept history
\(\mathcal{H}_t=\{\mathcal{M}_1,\ldots,\mathcal{M}_t\}\), where each
memory item is
\(\mathcal{M}_t=
\{Q_t,G(Q_t),\Pi_t,\mathcal{D}_t^{\Delta},\theta_t\}\). Here,
\(Q_t\) is the concept definition valid at time \(t\), \(G(Q_t)\) is
its predicate-graph representation, \(\Pi_t\) denotes the associated
provenance or index state, \(\mathcal{D}_t^{\Delta}\) stores the
observed affected-data information generated by the transition into
this concept version, and \(\theta_t\) denotes the model state after
adaptation to that version.

This memory serves several purposes. First, it maintains a compact
representation of how the concept has evolved over time, which is
useful for auditing, reproducibility, and rollback. Second, it allows
the framework to reuse structural knowledge from previous revisions.
For example, previously constructed predicate graphs and provenance
indices can reduce the cost of analyzing a new update. Third, it
supports recurring concept definitions, which are common in practice
when policies are revised temporarily, seasonal definitions recur, or a
later update reverts to a prior rule.

Consider a sequence
\(Q_1\rightarrow Q_2\rightarrow Q_3\rightarrow Q_1\). When the concept
returns to \(Q_1\), the system does not need to treat the new state as
completely unseen. Instead, it can retrieve the corresponding memory
item \(\mathcal{M}_1\) and reuse several types of previously computed
information: the query structure \(G(Q_1)\), the relevant provenance
summaries \(\Pi_1\), earlier affected-data patterns associated with the
same definition, and a compact model state \(\theta_1\) that already
approximates the target induced by \(Q_1\). This ability to reuse prior
knowledge distinguishes versioned concept maintenance from repeatedly
solving each revision as an independent adaptation problem.

The versioned concept memory also provides a coherent place to attach
metadata about each revision, such as timestamps, rule-delta types,
annotation budgets, or storage statistics. Although these auxiliary
elements are not all required in the basic formulation, they are useful
for later experimental analysis, particularly when comparing update
latency, provenance overhead, and recurring-definition recovery across
multiple revision sequences.

\section{Theoretical Properties}
\label{sec:theory}

We next establish several properties of the proposed maintenance
procedure. The analysis addresses three questions that are central to
the framework: whether the changed-label set can be recovered exactly
when the concept definitions are executable, when provenance analysis
can safely certify that a historical record does not require
reevaluation, and under what conditions the resulting incremental
procedure is computationally preferable to full recomputation. To keep
the main manuscript compact, we provide proof sketches here and defer
the complete proofs and additional cases to the supplementary
material. For the formal analysis, we consider deterministic concept programs
represented by finite directed acyclic graphs. Each leaf evaluates an
atomic, threshold, relational, or graph predicate, while internal nodes
compose their outputs through deterministic operators. The analysis
below is stated for the Boolean rule language used by the principal
benchmark protocols; multiclass definitions can be represented through
multiple decision outputs or an equivalent deterministic decision
graph.

\subsection{Exactness of Concept Delta}
\label{subsec:delta_exactness}

The first property concerns the relation between the explicit concept
revision and the historical records whose targets actually change. For
a deterministic pair of executable definitions \(Q_t\) and
\(Q_{t+1}\), recall that the affected historical set is
\(\mathcal{D}_t^{\Delta}=
\{z_i\in\mathcal{D}_t:
Q_t(z_i)\neq Q_{t+1}(z_i)\}\). Thus, when both concept versions can be
executed, membership in \(\mathcal{D}_t^{\Delta}\) is determined
directly by disagreement between their outputs rather than by a
statistical estimate of concept drift.

\begin{proposition}[Exact Concept-Delta Recovery]
\label{prop:exact_delta_recovery}
Let \(Q_t\) and \(Q_{t+1}\) be deterministic executable concept
programs over the historical collection \(\mathcal{D}_t\). Suppose that
\(\mathcal{D}_t^{\mathrm{safe}}\) is a sound stable set, i.e., every
\(z_i\in\mathcal{D}_t^{\mathrm{safe}}\) satisfies
\[
Q_t(z_i)=Q_{t+1}(z_i).
\]
Define
\[
\mathcal{D}_t^{\mathrm{cand}}
=
\mathcal{D}_t\setminus\mathcal{D}_t^{\mathrm{safe}}.
\]
Then evaluating \(Q_t\) and \(Q_{t+1}\) only over
\(\mathcal{D}_t^{\mathrm{cand}}\) recovers exactly the same changed-label
set as evaluating them over the entire historical collection.
\end{proposition}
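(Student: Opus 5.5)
The plan is a direct set-equality argument. Write the changed-label set computed by full evaluation as
\[
\mathcal{D}_t^{\Delta}=\{z_i\in\mathcal{D}_t: Q_t(z_i)\neq Q_{t+1}(z_i)\},
\]
and the set computed by restricted evaluation as
\[
\widehat{\mathcal{D}}_t^{\Delta}=\{z_i\in\mathcal{D}_t^{\mathrm{cand}}: Q_t(z_i)\neq Q_{t+1}(z_i)\}.
\]
The claim is exactly \(\widehat{\mathcal{D}}_t^{\Delta}=\mathcal{D}_t^{\Delta}\), so I will prove the two inclusions separately.

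The first inclusion, \(\widehat{\mathcal{D}}_t^{\Delta}\subseteq\mathcal{D}_t^{\Delta}\), is immediate because \(\mathcal{D}_t^{\mathrm{cand}}\subseteq\mathcal{D}_t\). Determinism matters here. Since \(Q_t\) and \(Q_{t+1}\) are deterministic executable programs, the value \(Q_t(z_i)\) computed during restricted evaluation is the same value computed during full evaluation, and likewise for \(Q_{t+1}(z_i)\). So the disagreement test gives the same verdict on each candidate record in both procedures.

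For the reverse inclusion, take \(z_i\in\mathcal{D}_t^{\Delta}\), so \(Q_t(z_i)\neq Q_{t+1}(z_i)\). Since \(\mathcal{D}_t=\mathcal{D}_t^{\mathrm{safe}}\cup\mathcal{D}_t^{\mathrm{cand}}\) is a disjoint union, either \(z_i\in\mathcal{D}_t^{\mathrm{safe}}\) or \(z_i\in\mathcal{D}_t^{\mathrm{cand}}\). The first case is impossible, since soundness of the stable set would force \(Q_t(z_i)=Q_{t+1}(z_i)\). Hence \(z_i\in\mathcal{D}_t^{\mathrm{cand}}\), and because it disagrees it lies in \(\widehat{\mathcal{D}}_t^{\Delta}\).

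Put differently, \(\mathcal{D}_t^{\Delta}\cap\mathcal{D}_t^{\mathrm{safe}}=\emptyset\), so intersecting \(\mathcal{D}_t^{\Delta}\) with the complement \(\mathcal{D}_t^{\mathrm{cand}}\) loses nothing.

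There is no substantive obstacle. The only point that needs care is to state explicitly that determinism ensures the outputs on candidate records do not depend on which subset is being evaluated. Without it, restricted and full evaluation could in principle disagree on the same record, for example through nondeterministic external calls. I will also note that soundness alone is what the proof uses. Completeness of the certification (maximality of \(\mathcal{D}_t^{\mathrm{safe}}\)) is not needed, which matches the conservative design described in Section~\ref{subsec:provenance_discovery}.
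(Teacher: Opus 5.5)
Your proposal is correct and matches the paper's proof essentially step for step. Both arguments prove the two inclusions: \(\mathcal{D}_t^{\Delta}\subseteq\mathcal{D}_t^{\mathrm{cand}}\) follows because soundness rules out any affected record in \(\mathcal{D}_t^{\mathrm{safe}}\), and \(\widehat{\mathcal{D}}_t^{\Delta}\subseteq\mathcal{D}_t^{\Delta}\) follows from \(\mathcal{D}_t^{\mathrm{cand}}\subseteq\mathcal{D}_t\). Your remark that determinism makes each record's verdict independent of which subset is evaluated is left implicit in the paper, but it does not change the argument.
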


To see this, soundness of
\(\mathcal{D}_t^{\mathrm{safe}}\) guarantees that no changed-label
record can occur inside the certified stable set. Hence
\(\mathcal{D}_t^{\Delta}\subseteq
\mathcal{D}_t^{\mathrm{cand}}\). Comparing \(Q_t(z_i)\) and
\(Q_{t+1}(z_i)\) for every candidate therefore returns
\(\{z_i\in\mathcal{D}_t^{\mathrm{cand}}:
Q_t(z_i)\neq Q_{t+1}(z_i)\}=
\mathcal{D}_t^{\Delta}\). The candidate restriction can consequently
reduce computation without altering the resulting set of changed
targets, provided that stable-record certification is sound.

This result also clarifies the role of provenance analysis. The
proposed framework does not approximate
\(\mathcal{D}_t^{\Delta}\) by assigning heuristic drift scores to
historical examples. Instead, provenance is used to eliminate records
whose outputs can be proven invariant to the rule revision. Exact
execution of the changed portions of the concept program is then
reserved for the remaining candidates.

\subsection{Provenance Stability Certificate}
\label{subsec:stability_certificate}

The main theoretical question is therefore when a record can be placed
in \(\mathcal{D}_t^{\mathrm{safe}}\) without executing the complete
revised concept program. The answer depends jointly on the structural
rule delta and the evaluation provenance of the record.

For a record \(z_i\), let
\(\operatorname{Prov}(z_i,Q_t)\) denote its evaluation provenance under
the old concept program. In the Boolean rule DAG, each internal node
takes the outputs of its children and produces a deterministic result.
A change originating from a modified predicate can influence the final
concept output only if that change can propagate through a directed
path from the modified component to the root.

We call a persistent logical node \emph{blocking} for a particular
record when one of its unchanged inputs already fixes its output
independently of the changed branch. Specifically, an AND node is
blocking when it has an unchanged child evaluating to false, because
its output remains false regardless of the values of its other
children. Similarly, an OR node is blocking when it has an unchanged
child evaluating to true. These conditions correspond directly to
record-specific provenance: the unchanged child provides a sufficient
witness for the node output.

Let \(\mathcal{C}_t^{\mathrm{eff}}(z_i)\) denote the set of components
identified by the rule delta whose local values or local operators can
differ for \(z_i\) between \(Q_t\) and \(Q_{t+1}\). A threshold change,
for example, belongs to
\(\mathcal{C}_t^{\mathrm{eff}}(z_i)\) only when the record lies in the
interval where the old and revised threshold evaluations can disagree.
A changed component that produces the same local result for \(z_i\)
cannot by itself alter the final concept output.

\begin{theorem}[Provenance Stability Certificate]
\label{thm:provenance_stability}
Consider a deterministic Boolean concept DAG constructed from atomic
predicates and the logical operators AND, OR, and NOT. For a record
\(z_i\), suppose that every directed path from each effective changed
component in \(\mathcal{C}_t^{\mathrm{eff}}(z_i)\) to the concept-output
node contains a persistent blocking node whose decisive input remains
unchanged under the revision. Then the concept assignment is invariant,
\[
Q_t(z_i)=Q_{t+1}(z_i),
\]
and consequently \(z_i\) can be safely included in
\(\mathcal{D}_t^{\mathrm{safe}}\).
\end{theorem}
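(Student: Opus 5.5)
I would argue by induction over the concept DAG in topological order, from the leaves toward the output node. The invariant compared is the value of each node under \(Q_t\) and under \(Q_{t+1}\) for the fixed record \(z_i\).

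The first step sets up a common indexing. Persistent nodes are aligned by the rule-delta compiler, so they are shared between \(G_t\) and \(G_{t+1}\). Every node created, removed, or re-parameterized appears in \(\mathcal{C}_t\). Call a node \emph{tainted} if it lies on some directed path that starts at an effective changed component in \(\mathcal{C}_t^{\mathrm{eff}}(z_i)\) and has not yet passed through a blocking node. Every other node on the way to the root is \emph{clean}. The claim to prove by induction is that every clean persistent node has the same value under both programs.

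The cases are as follows.
\begin{enumerate}
\item A leaf not in \(\mathcal{C}_t\) is unchanged. A leaf in \(\mathcal{C}_t\setminus\mathcal{C}_t^{\mathrm{eff}}(z_i)\) produces the same local result for \(z_i\) by definition, so it behaves as unchanged. In both cases the values agree.
\item For an internal persistent node whose operator is unchanged and all of whose children are clean, the induction hypothesis gives equal inputs. Since the operator is deterministic (AND, OR, NOT), the outputs agree.
\item A blocking node has a persistent unchanged child that is clean and equal to false for AND, or true for OR. Because that child's value is preserved, the node output is fixed under both versions, whatever the changed branches do.
\end{enumerate}
The hypothesis says every path from an effective change to the root meets a blocking node. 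So the root is never tainted, and the root is persistent as the concept-output node. Hence \(Q_t(z_i)=Q_{t+1}(z_i)\), and membership in \(\mathcal{D}_t^{\mathrm{safe}}\) follows by soundness. This connects to Proposition~\ref{prop:exact_delta_recovery}.

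The main obstacle is making ``tainted'' precise in a DAG that differs between versions. Under \(Q_{t+1}\) a node may gain new children through inserted predicates, or its operator may be rewritten. A changed operator must itself count as an element of \(\mathcal{C}_t^{\mathrm{eff}}(z_i)\) whenever its local output differs for \(z_i\). An inserted child must count as a changed component whose paths start at the parent edge. With these conventions, a blocking node's decisive input is by hypothesis persistent and unchanged. I would also need its own operator to be unchanged (``persistent'' should mean exactly this). Then case 3 goes through even when its other children are new or removed, because an AND with one false input is false and an OR with one true input is true, regardless of the arity or identity of the remaining inputs.

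NOT nodes never block. They simply propagate, and are covered by case 2 and by the taint definition.

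The remaining subtlety is that the decisive child must itself be clean. This holds because the hypothesis states that the decisive input ``remains unchanged under the revision.'' I would read that condition as part of the induction hypothesis, or derive it from the same path condition applied to that child's ancestors.
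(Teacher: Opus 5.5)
Your argument is correct; it is the paper's argument organized in the opposite direction. The paper argues by contradiction. Assuming $Q_t(z_i)\neq Q_{t+1}(z_i)$, its Unblocked Difference Propagation lemma follows nodes whose values differ backward from the root until it reaches a site that, by change-completeness and conservative effective-change analysis, lies in $\mathcal{C}_t^{\mathrm{eff}}(z_i)$. Every node on the reversed path carries a difference, so by the Blocking-Node Invariance lemma none can be blocking, which contradicts the hypothesis. You instead prove, by forward induction, that every clean node and every blocking node takes the same value under both versions; this is essentially the contrapositive of that lemma.

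The backward trace is more economical. It never needs a notion of ``tainted'' on a merged cross-version DAG, because it only visits nodes whose values actually differ, and any node whose operator or child structure was edited simply becomes the origin of the path. Your forward version gives a node-wise invariant that matches what the upward traversal in the certification algorithm computes. In exchange, it needs the bookkeeping you identified, including a case you did not state: a persistent node that loses a child. For example, take $\mathrm{AND}(P_1,P_2,P_4)\to\mathrm{AND}(P_1,P_4)$ with $P_2(z_i)=0$ and $P_1(z_i)=P_4(z_i)=1$. That node must be tainted through the old edge from $P_2$, or treated as an edited node. Otherwise your case 2 would certify it even though its output flips from $0$ to $1$.

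Finally, keep the invariance of the decisive input as an explicit hypothesis, as the paper does: its blocking lemma assumes $\nu_t(u,z_i)=\nu_{t+1}(u,z_i)$, and condition C5 demands it. Your fallback of deriving it from the path condition fails, because the only blocking node on a path through the decisive child may be the very node that child certifies. An example is $Q_t=P_1\land P_2$ with a threshold revision flipping $P_1(z_i)$ from $0$ to $1$ and $P_2(z_i)=1$.
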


The intuition follows from how changes propagate through a Boolean
DAG. If \(Q_t(z_i)\neq Q_{t+1}(z_i)\), at least one changed local
evaluation must influence the output node. Such influence requires an
uninterrupted path from an effective changed component to the root. At
an AND node with an unchanged false input, however, the node output is
fixed to false irrespective of the changed branch. Likewise, an OR
node with an unchanged true input remains true independently of the
changed branch. A blocking node therefore stops the propagation of a
local change. If every possible influence path contains such a node,
no effective rule change can reach the output, contradicting the
assumption that the final concept assignment changes. As a simple example, consider
\(Q_t=P_1\land P_2\) and a revision that changes only \(P_2\). For any
record satisfying \(P_1(z_i)=0\), the conjunction remains false
irrespective of how \(P_2\) is revised. The unchanged false evaluation
of \(P_1\) therefore serves as a stability certificate, and such a
record does not require execution of the revised \(P_2\). Conversely,
when \(P_1(z_i)=1\), the changed predicate can determine the final
output and the record must remain in
\(\mathcal{D}_t^{\mathrm{cand}}\).

A similar effect occurs for disjunction. If
\(Q_t=P_1\lor P_2\) and only \(P_2\) changes, then a record satisfying
the persistent condition \(P_1(z_i)=1\) remains positive regardless of
the revised value of \(P_2\). In contrast, when \(P_1(z_i)=0\), the
output depends on the changed branch and cannot be certified without
further evaluation. The theorem provides a stronger guarantee than conventional
uncertainty- or similarity-based data selection. Such methods estimate
whether a record is likely to be affected, whereas the proposed
certificate identifies sufficient conditions under which a record is
provably unaffected. The certificate is deliberately conservative:
failure to certify a record does not imply that its target changes; it
only means that invariance cannot be established from the available
provenance information. Such records remain in
\(\mathcal{D}_t^{\mathrm{cand}}\) and are resolved by selective
reevaluation. The same principle extends beyond simple Boolean leaves. Threshold,
relational, and graph predicates can first be tested for local
invariance under the rule delta. If their local output is unchanged for
a record, they do not enter
\(\mathcal{C}_t^{\mathrm{eff}}(z_i)\); if their local output may change,
their influence is traced upward through the predicate graph using the
same propagation criterion. 

An immediate consequence of Proposition~\ref{prop:exact_delta_recovery} and Theorem~\ref{thm:provenance_stability} is that
provenance-based pruning preserves exact affected-set recovery. Because
the certificate places only truly invariant records in
\(\mathcal{D}_t^{\mathrm{safe}}\), all records whose labels actually
change remain in \(\mathcal{D}_t^{\mathrm{cand}}\), where explicit
evaluation of the revised concept can recover
\(\mathcal{D}_t^{\Delta}\) exactly.

\subsection{Incremental Processing Complexity}
\label{subsec:complexity}

We finally analyze the computational benefit of restricting concept
maintenance to the candidate region. Let \(N=|\mathcal{D}_t|\) denote
the number of historical records and let \(C_Q\) denote the average
cost of executing the complete revised concept program on one record.
A straightforward full-recomputation strategy therefore requires
\(O(NC_Q)\) concept-evaluation work before model retraining. This cost
can be particularly large when \(Q_{t+1}\) includes relational joins,
historical aggregation, external lookups, or multi-hop graph
operations.

The proposed procedure introduces several smaller costs. Let
\(C_{\Delta Q}\) denote the one-time cost of comparing
\(G(Q_t)\) and \(G(Q_{t+1})\) and constructing the typed rule delta,
\(C_{\mathrm{prov}}\) the cost of retrieving and processing provenance
information required for candidate discovery, and \(C_{\Delta}\) the
average cost of evaluating only the changed or unresolved portions of
the revised concept for one candidate record. Ignoring the subsequent
predictor-specific optimization for the moment, the concept-maintenance
cost is therefore
\(O(C_{\Delta Q}+C_{\mathrm{prov}}+
|\mathcal{D}_t^{\mathrm{cand}}|C_{\Delta})\).

The provenance term depends on the indexing strategy. A naive
implementation may inspect provenance metadata for all \(N\) records,
which reduces the achievable systems-level speedup even though complete
query execution is avoided. In the implementation considered here, an
inverted provenance index associates each predicate or rule component
with the records whose evaluation depends on it. Candidate discovery
can then be driven directly by the changed component set
\(\mathcal{C}_t\). If \(\mathcal{I}_t(c)\) denotes the provenance
posting list associated with component \(c\), the lookup work is
proportional to the changed-component postings rather than necessarily
to the complete database, approximately
\(O(|\mathcal{C}_t|+
\sum_{c\in\mathcal{C}_t}|\mathcal{I}_t(c)|)\), excluding duplicate
elimination and index-maintenance overhead.

The total end-to-end update additionally includes predictor repair.
Let \(C_{\mathrm{repair}}\) denote the cost of updating the deployed
predictor from the repair and stable subsets, and
\(C_{\mathrm{index}}\) the cost of maintaining provenance structures
after the revision. The resulting update cost can be summarized as
\(O(C_{\Delta Q}+C_{\mathrm{prov}}+
|\mathcal{D}_t^{\mathrm{cand}}|C_{\Delta}
+C_{\mathrm{repair}}+C_{\mathrm{index}})\). By comparison, a
full-relabel-and-retrain pipeline incurs approximately
\(O(NC_Q+C_{\mathrm{train}}^{\mathrm{full}})\), where
\(C_{\mathrm{train}}^{\mathrm{full}}\) denotes the cost of rebuilding
the predictive model from the complete revised training collection.

\begin{proposition}[Incremental Advantage]
\label{prop:incremental_advantage}
Assume that \(C_{\Delta Q}\), \(C_{\mathrm{prov}}\), and
\(C_{\mathrm{index}}\) are asymptotically smaller than \(NC_Q\), that
\(C_{\Delta}\leq C_Q\), and that
\[
\frac{|\mathcal{D}_t^{\mathrm{cand}}|}{N}\rightarrow 0
\]
as the historical collection grows. Then the concept-recomputation cost
of the proposed procedure is asymptotically smaller than the cost of
full concept recomputation.
\end{proposition}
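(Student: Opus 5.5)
The plan is to compare the two cost expressions already derived in Section~\ref{subsec:complexity} and show that their ratio tends to zero. Write $n_c=|\mathcal{D}_t^{\mathrm{cand}}|$. The concept-recomputation cost of the proposed procedure is
\[
T_{\mathrm{inc}}(N)=C_{\Delta Q}+C_{\mathrm{prov}}+n_c C_{\Delta}.
\]
Whether $C_{\mathrm{index}}$ is included is optional, since it is also assumed $o(NC_Q)$. The cost of full concept recomputation is $T_{\mathrm{full}}(N)=NC_Q$. Predictor-specific training terms are excluded, exactly as the statement restricts attention to concept recomputation.

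First I fix the asymptotic reading of the hypotheses. ``Asymptotically smaller than $NC_Q$'' is taken to mean $C_{\Delta Q}/(NC_Q)\to 0$, $C_{\mathrm{prov}}/(NC_Q)\to 0$ and $C_{\mathrm{index}}/(NC_Q)\to 0$ as $N\to\infty$. Here $C_Q$ and $C_\Delta$ are average per-record costs, which may depend on $N$ but satisfy $C_\Delta\le C_Q$. I also need $C_Q>0$, which is harmless since executing a concept program has positive cost.

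Second, I bound the ratio term by term:
\[
\frac{T_{\mathrm{inc}}(N)}{T_{\mathrm{full}}(N)}
\le \frac{C_{\Delta Q}}{NC_Q}+\frac{C_{\mathrm{prov}}}{NC_Q}+\frac{n_c}{N}\cdot\frac{C_{\Delta}}{C_Q}
\le \frac{C_{\Delta Q}}{NC_Q}+\frac{C_{\mathrm{prov}}}{NC_Q}+\frac{n_c}{N}.
\]
The last inequality uses $C_\Delta\le C_Q$. Each of the three terms tends to zero, the first two by assumption and the third by $n_c/N\to 0$. Hence $T_{\mathrm{inc}}=o(T_{\mathrm{full}})$.

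Finally, I would note two consequences. By Proposition~\ref{prop:exact_delta_recovery} together with Theorem~\ref{thm:provenance_stability}, this saving is achieved without any loss in recovery of $\mathcal{D}_t^{\Delta}$. Since the candidate set is what is actually evaluated, the advantage is obtained at no cost to correctness.

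The only real obstacle is definitional rather than technical. The per-record costs $C_Q$ and $C_\Delta$ may vary with $N$, for example when joins grow with the database. The argument is robust to this only because it bounds ratios, and the hypothesis $C_\Delta\le C_Q$ is stated pointwise in $N$. I would state this explicitly.

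I would also remark on the indexed provenance cost $O(|\mathcal{C}_t|+\sum_c|\mathcal{I}_t(c)|)$. For it to be $o(NC_Q)$, it suffices that the posting lists have size $o(N)$ or that $C_Q$ dominates the per-posting lookup cost. This explains why a naive $\Theta(N)$ provenance scan can still satisfy the hypothesis when $C_Q\to\infty$ or is large, but weakens the constant-factor speedup.
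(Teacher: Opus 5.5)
Your proof is correct and follows essentially the same route as the paper's. Both divide the incremental concept-maintenance cost by $NC_Q$, send the overhead terms to zero using the little-$o$ hypotheses, and bound the candidate term by $\frac{|\mathcal{D}_t^{\mathrm{cand}}|}{N}\cdot\frac{C_\Delta}{C_Q}\le \frac{|\mathcal{D}_t^{\mathrm{cand}}|}{N}\to 0$; the paper additionally keeps $C_{\mathrm{index}}$ inside $T_{\mathrm{inc}}^{Q}$, which changes nothing. Your closing remarks fall outside what the proposition asserts: exact recovery needs the soundness conditions C1--C7, and in the naive-scan remark only $C_Q\to\infty$, not merely a large constant $C_Q$, makes a $\Theta(N)$ scan $o(NC_Q)$.
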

The result follows directly because the dominant record-level term
changes from \(NC_Q\) to
\(|\mathcal{D}_t^{\mathrm{cand}}|C_{\Delta}\). Defining the candidate
ratio as
\(\rho_t=|\mathcal{D}_t^{\mathrm{cand}}|/N\), the record-level
recomputation ratio is approximately
\(\rho_t C_{\Delta}/C_Q\). Consequently, the largest computational
advantage is expected when the concept revision is localized, the
provenance index is selective, and evaluating the changed rule fragment
is substantially cheaper than executing the complete concept program.

This analysis also exposes an important operating limit of the method.
When the revision is global and
\(|\mathcal{D}_t^{\mathrm{cand}}|\approx N\), or when nearly every
record depends on the changed predicates, the advantage of selective
maintenance naturally decreases. In the limiting case where every
record must be reevaluated and \(C_{\Delta}\approx C_Q\), the
record-processing cost approaches that of full recomputation. This
behavior is expected rather than pathological: selective maintenance
is most beneficial precisely when the semantic change is localized
relative to the historical data.

\section{Experiments}
\label{sec:experiments}

We evaluate the proposed framework from three complementary
perspectives: predictive performance under evolving concept
definitions, efficiency in identifying and reprocessing affected
records, and robustness across different predictive model families.
The experiments use the proposed RuleShift-Bench, which combines
financial, demographic, cybersecurity, and graph-structured data with
controlled concept-rule revisions. Unless otherwise stated, each
experiment is repeated with three random seeds and stochastic results
are reported as mean$\pm$standard deviation.

\subsection{RuleShift-Bench}
\label{subsec:ruleshift_bench}

\textbf{PaySim.}
For financial transactions, we use PaySim, which contains
6,362,620 synthetic mobile-money transactions spanning 744 hourly
steps
\cite{bench_khan2024,bench_vijayanand2025}.
The released data contain 11 fields; we use \texttt{isFraud} as
the target and the remaining 10 fields as the raw predictor variables.
Concept versions represent evolving
fraud-risk policies through amount-threshold changes, insertion or
removal of transaction conditions, logical recombination of risk
predicates, recurring rules, and mixed rule--temporal distribution
shift.

\textbf{Census-Income.}
For policy-oriented concept evolution, we use the Census-Income (KDD)
dataset containing 299,285 records from the 1994--1995 U.S. Current
Population Surveys and 40 demographic and employment attributes
\cite{bench_yin2024,bench_chen2024}.
We construct changing eligibility definitions
from income, age, employment, education, household, and work-related
predicates. This dataset supports threshold revisions, predicate
addition and removal, logical restructuring, recurrence, and mixed
concept--population shift.

\textbf{UNSW-NB15.}
For cybersecurity, we use the complete UNSW-NB15 collection containing
2,540,044 network-flow records
\cite{bench_zoghi2024}.
The released representation contains
49 fields, including the attack category and binary attack label,
leaving 47 predictive fields when these two supervision variables are
excluded.
Revised security concepts are
constructed from flow, protocol, service, byte-volume, state, and
relational endpoint conditions, allowing us to study changing
risk thresholds, predicate compositions, and network-relation rules.

\textbf{ogbn-arxiv.}
For graph-structured concepts, we use \texttt{ogbn-arxiv}, containing
169,343 paper nodes and 1,166,243 directed citation edges
\cite{bench_su2024}.
Each node is
represented by a 128-dimensional feature vector and is associated with
publication time and one of 40 subject areas.
Concept definitions combine node attributes,
publication time, citation relations, neighborhood properties, and
multi-hop graph conditions, making this dataset particularly suitable
for evaluating relational and graph-path revisions.

We instantiate eight concept versions
\(Q_0,\ldots,Q_7\) for each benchmark family: one initial definition
followed by one controlled instance of each shift type R1--R7.
Threshold values are determined from the training partition only; the
default threshold transition changes the selected predicate from its
75th to its 60th percentile. R2 inserts one previously unused
predicate, R3 removes one active predicate, R4 replaces conjunction by
disjunction or vice versa, R5 modifies a relation or graph-path
condition, R6 evaluates \(Q_1\rightarrow Q_2\rightarrow Q_1\), and R7
combines the corresponding rule update with a temporal or
distributional change in the data (see Table \ref{tab:benchmark_stats}).

\begin{table}[htbp]
\centering
\caption{Principal datasets used in RuleShift-Bench. Rule versions
include the initial definition and seven subsequent rule-shift
protocols.}
\label{tab:benchmark_stats}
\setlength{\tabcolsep}{2.7pt}
\renewcommand{\arraystretch}{0.95}
\footnotesize
\resizebox{\columnwidth}{!}{%
\begin{tabular}{|l|r|c|c|c|c|c|c|}
\hline
\textbf{Dataset} &
\textbf{Records} &
\textbf{Features} &
\textbf{Versions} &
\textbf{Thresh.} &
\textbf{Pred.} &
\textbf{Logic} &
\textbf{Relation} \\
\hline
PaySim &
6,362,620 &
10 &
8 &
\checkmark &
\checkmark &
\checkmark &
-- \\
\hline
Census-Income &
299,285 &
40 &
8 &
\checkmark &
\checkmark &
\checkmark &
-- \\
\hline
UNSW-NB15 &
2,540,044 &
47 &
8 &
\checkmark &
\checkmark &
\checkmark &
\checkmark \\
\hline
ogbn-arxiv &
169,343 &
128 &
8 &
\checkmark &
\checkmark &
\checkmark &
\checkmark \\
\hline
\end{tabular}}
\end{table}

The seven benchmark transformations are summarized in
Table~\ref{tab:rule_shifts}. R1--R4 modify attribute-level rule
structure, R5 specifically evaluates relational dependence, R6 tests
whether a previous definition can be recovered after an intermediate
revision, and R7 evaluates the more difficult setting in which concept
evolution and data-distribution change occur simultaneously.

\begin{table}[htbp]
\centering
\caption{Rule-induced concept-shift protocols.}
\label{tab:rule_shifts}
\setlength{\tabcolsep}{3.0pt}
\renewcommand{\arraystretch}{0.95}
\footnotesize
\begin{tabular}{|c|l|l|}
\hline
\textbf{ID} & \textbf{Shift} & \textbf{Example} \\
\hline
R1 & Threshold revision &
\(x>q_{0.75}\rightarrow x>q_{0.60}\) \\
\hline
R2 & Predicate insertion &
\(A\rightarrow A\land B\) \\
\hline
R3 & Predicate deletion &
\(A\land B\rightarrow A\) \\
\hline
R4 & Logical rewrite &
\(A\lor B\rightarrow A\land B\) \\
\hline
R5 & Relational rewrite &
\(R_1\rightarrow R_2\) / path change \\
\hline
R6 & Recurring definition &
\(Q_1\rightarrow Q_2\rightarrow Q_1\) \\
\hline
R7 & Mixed shift &
rule change + data shift \\
\hline
\end{tabular}
\end{table}

\subsection{Experimental Setup}
\label{subsec:experimental_setup}

We evaluate three predictor families to verify that the proposed
maintenance mechanism is not tied to a particular learning
architecture. The first is XGBoost with 300 trees, maximum depth 6,
and learning rate 0.05. The second is a neural tabular predictor with
three hidden layers of dimensions 256, 128, and 64, dropout 0.2, and
AdamW optimization with learning rate \(10^{-3}\). The third is a
Hoeffding Adaptive Tree for streaming evaluation. The same predictor
configuration is retained across concept versions unless explicitly
stated otherwise.

PaySim and the complete UNSW-NB15 collection are ordered using their
temporal information and divided into 70\% training, 10\% validation,
and 20\% test partitions. Census-Income uses its published
199,523-record training partition and 99,762-record test partition,
with 10\% of the original training set reserved for validation.
For \texttt{ogbn-arxiv}, we retain the official temporal split: papers
published through 2017 are used for training, papers from 2018 for
validation, and papers from 2019 onward for testing. All rule
thresholds, predicate statistics, and provenance structures are
constructed using training data only.

\subsection{Comparison Methods}
\label{subsec:comparison_methods}

We compare against several complementary adaptation strategies.
\emph{Old Predictor} retains \(f_{\theta_t}\) without adaptation.
\emph{Full Relabel and Retrain} executes \(Q_{t+1}\) over all available
historical records and retrains the predictor from the complete revised
dataset, providing the principal computational upper reference.
\emph{Sliding Window} updates the model using the most recent 20\% of
training records, while \emph{Online Update} processes revised examples
sequentially
\cite{cmp_hu2024,cmp_stevanoski2024}
and \emph{Replay} combines the incoming revision data with
a memory containing 5\% of the original training records
\cite{cmp_lin2025,cmp_qiu2025}.
We further
include an ADWIN-triggered adaptation pipeline as a representative
drift-detection baseline.

For data-selection comparisons, \emph{Random Reevaluation} processes
the same number of historical records as the proposed candidate set but
selects them uniformly at random, while \emph{Uncertainty Reevaluation}
selects records with the highest predictive uncertainty
\cite{cmp_fajri2024,cmp_hoarau2024}.
\emph{Provenance Selection + Full Retraining} uses the candidate set
identified by provenance but subsequently performs conventional global
model retraining, thereby isolating the contribution of incremental
predictor repair. Finally, \emph{Oracle Affected Set} receives the true
set \(\mathcal{D}_t^{\Delta}\) and therefore represents the upper
reference for affected-data identification. In particular, comparison
with \emph{Full Relabel and Retrain} measures the computational benefit
relative to complete maintenance, whereas comparison with
\emph{Oracle Affected Set} quantifies the remaining gap caused by
candidate-set discovery.

\subsection{Main Results}
\label{subsec:main_results}

We first evaluate whether provenance-guided maintenance can preserve the
predictive quality of complete recomputation while substantially
reducing the amount of historical data that must be reconsidered.
Table~\ref{tab:main_results} reports accuracy, Macro-F1, affected
recall, the percentage of historical records reprocessed, annotation
demand, and end-to-end update latency across PaySim, Census-Income,
UNSW-NB15, and \texttt{ogbn-arxiv}, averaged over rule revisions
R1--R5.

\begin{table*}[htbp]
\centering
\caption{Overall comparison across the four RuleShift-Bench data
families under rule revisions R1--R5.}
\label{tab:main_results}
\setlength{\tabcolsep}{5.0pt}
\renewcommand{\arraystretch}{1.05}
\footnotesize
\begin{tabular}{|l|c|c|c|c|c|c|}
\hline
\rowcolor{gray!8}
\textbf{Method}
&
\textbf{Acc. (\%)}
&
\textbf{Macro-F1 (\%)}
&
\textbf{Affected Recall (\%)}
&
\textbf{Reprocessed (\%)}
&
\textbf{Labels}
&
\textbf{Time (s)}
\\
\hline

Old Predictor
& 82.7 & 78.9 & 0.0 & 0.0 & 0 & 0.0 \\
\hline

Full Relabel + Retrain
& \textbf{92.8} & \textbf{90.7} & 100.0 & 100.0 & 0 & 993 \\
\hline

Sliding Window
& 88.5 & 85.8 & 63.2 & 20.0 & 0 & 389 \\
\hline

Online Update
& 87.9 & 85.1 & 68.5 & 12.5 & 0 & 241 \\
\hline

Replay
& 89.2 & 86.7 & 74.6 & 17.5 & 0 & 325 \\
\hline

ADWIN + Retrain
& 87.6 & 84.9 & 66.0 & 28.4 & 0 & 441 \\
\hline

Random Reevaluation
& 88.8 & 85.9 & 71.1 & 14.7 & 0 & 210 \\
\hline

Uncertainty Reevaluation
& 90.1 & 87.6 & 79.0 & 14.7 & 0 & 226 \\
\hline

Provenance Selection + Full Retraining
& 92.1 & 89.9 & 94.6 & 14.7 & 0 & 515 \\
\hline

\rowcolor{green!6}
\textbf{Provenance-Guided Repair}
& \textbf{92.3}
& \textbf{90.2}
& \textbf{94.6}
& \textbf{14.7}
& 0
& \textbf{179}
\\
\hline

\rowcolor{blue!5}
Oracle Affected Set
& 92.6 & 90.5 & 100.0 & 10.8 & 0 & 154 \\
\hline

\end{tabular}
\end{table*}

Provenance-Guided Repair achieves 92.3\% accuracy and 90.2\%
Macro-F1, remaining within 0.5 percentage points of Full Relabel +
Retrain while processing only 14.7\% of the historical collection.
The update time decreases from 993\,s to 179\,s, corresponding to an
approximately \(5.5\times\) reduction in latency. At the same
reprocessing budget, the proposed method also retains 94.6\% of the
affected records, compared with 79.0\% for Uncertainty Reevaluation
and 71.1\% for Random Reevaluation. The comparison with Provenance Selection + Full Retraining isolates
the effect of incremental predictor repair. Both methods identify the
same candidate region and obtain 94.6\% affected recall, whereas the
incremental update reduces processing time from 515\,s to 179\,s.
The Oracle Affected Set reaches 90.5\% Macro-F1 while processing
10.8\% of the historical data, leaving only a small margin between
provenance-guided candidate discovery and exact knowledge of the
affected region.

Table~\ref{tab:dataset_efficiency} further separates the comparison by
benchmark. The predictive difference between complete recomputation
and incremental repair remains small across all four data families,
whereas the reduction in update cost is substantially larger.

\begin{table}[htbp]
\centering
\caption{Dataset-wise comparison between complete recomputation and
provenance-guided incremental repair.}
\label{tab:dataset_efficiency}
\setlength{\tabcolsep}{5.3pt}
\renewcommand{\arraystretch}{1.05}
\footnotesize
\resizebox{\columnwidth}{!}{%
\begin{tabular}{|l|r|c|c|c|c|c|}
\hline
\rowcolor{gray!8}
\textbf{Dataset}
&
\textbf{Records}
&
\textbf{Full F1}
&
\textbf{Repair F1}
&
\textbf{Reprocessed}
&
\textbf{Full Time}
&
\textbf{Repair Time}
\\
\hline

PaySim
& 6,362,620
& 91.4
& 90.9
& 12.1\%
& 1775 s
& 181 s
\\
\hline

Census-Income
& 299,285
& 89.2
& 88.7
& 16.3\%
& 154 s
& 28 s
\\
\hline

UNSW-NB15
& 2,540,044
& 92.6
& 92.2
& 13.8\%
& 903 s
& 148 s
\\
\hline

\texttt{ogbn-arxiv}
& 169,343
& 89.5
& 89.0
& 16.6\%
& 742 s
& 121 s
\\
\hline

\end{tabular}}
\end{table}

On PaySim, update latency decreases from 1775\,s to 181\,s while
Macro-F1 changes from 91.4\% to 90.9\%. Census-Income reduces the
update from 154\,s to 28\,s, and UNSW-NB15 from 903\,s to 148\,s.
For \texttt{ogbn-arxiv}, the update decreases from 742\,s to 121\,s
while maintaining a 0.5-point Macro-F1 difference. These results show
that selective concept maintenance remains effective across
transactional, demographic, network, and graph-structured data.

\subsection{Scalability and Computational Efficiency}
\label{subsec:scalability}

We next examine the computational and predictive behavior of the
different maintenance strategies. Figure~\ref{fig:scalability}
summarizes dataset-level latency, predictive performance, and
affected-data coverage using complementary bar-chart views.

Figure~\ref{fig:scalability}(a) compares the update latency of complete
recomputation and provenance-guided repair across the four datasets.
The largest reduction occurs on PaySim, where the update decreases by
approximately \(9.8\times\). Census-Income obtains a \(5.5\times\)
reduction, while UNSW-NB15 and \texttt{ogbn-arxiv} both obtain
approximately \(6.1\times\). The graph benchmark remains relatively
costly despite its smaller number of records because relational and
path-based concept evaluation requires additional structural
processing.

Figure~\ref{fig:scalability}(b) compares Accuracy and Macro-F1 across
all adaptation strategies. Full Relabel + Retrain provides the
strongest overall predictive reference, while Provenance-Guided Repair
remains close to it and also approaches the Oracle Affected Set.
Conventional online, window-based, replay, and data-selection methods
show larger predictive degradation because they do not explicitly use
the dependencies induced by the revised concept definition. Figure~\ref{fig:scalability}(c) directly compares affected recall and
the fraction of historical data reprocessed. This comparison
highlights the main efficiency property of the framework. Full
recomputation obtains complete affected coverage by processing the
entire database, whereas Provenance-Guided Repair reaches 94.6\%
affected recall while processing only 14.7\%. Random and uncertainty
reevaluation operate at the same 14.7\% data budget but recover only
71.1\% and 79.0\% of affected records, respectively. The difference
shows that provenance-based selection concentrates computation more
effectively on records whose semantic validity depends on the revised
rule.

\begin{figure*}[htbp]
\centering

\subfloat[Dataset latency.]{
    \includegraphics[width=0.315\textwidth]
    {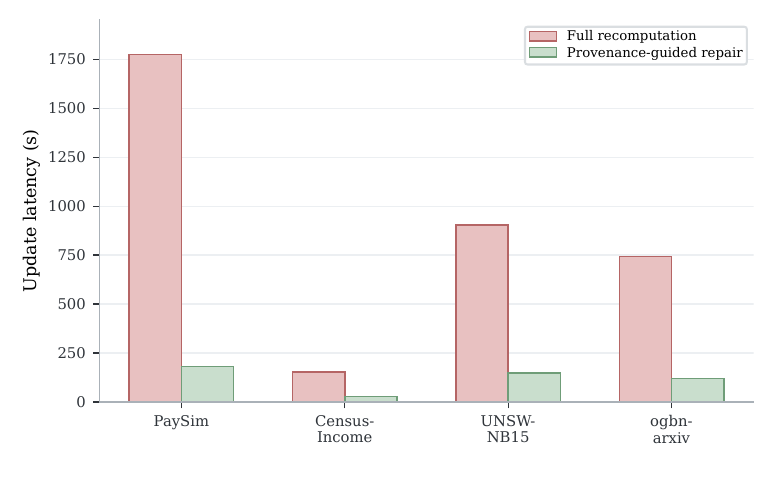}
    \label{fig:dataset_latency}
}
\hfill
\subfloat[Predictive performance.]{
    \includegraphics[width=0.315\textwidth]
    {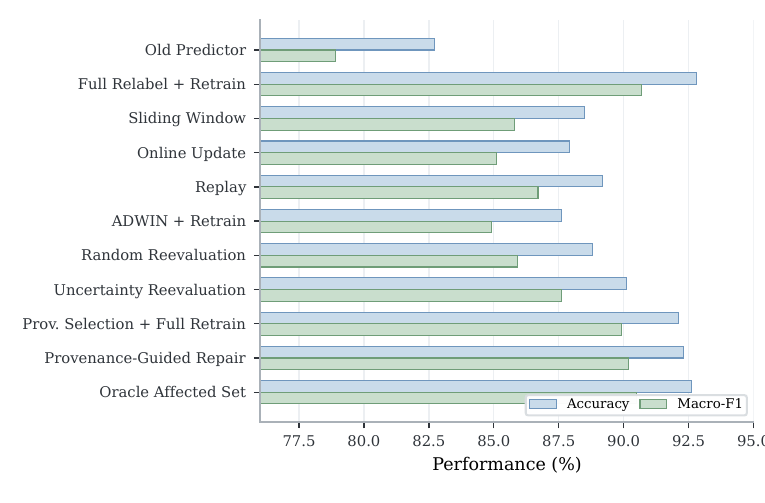}
    \label{fig:predictive_performance}
}
\hfill
\subfloat[Affected coverage.]{
    \includegraphics[width=0.315\textwidth]
    {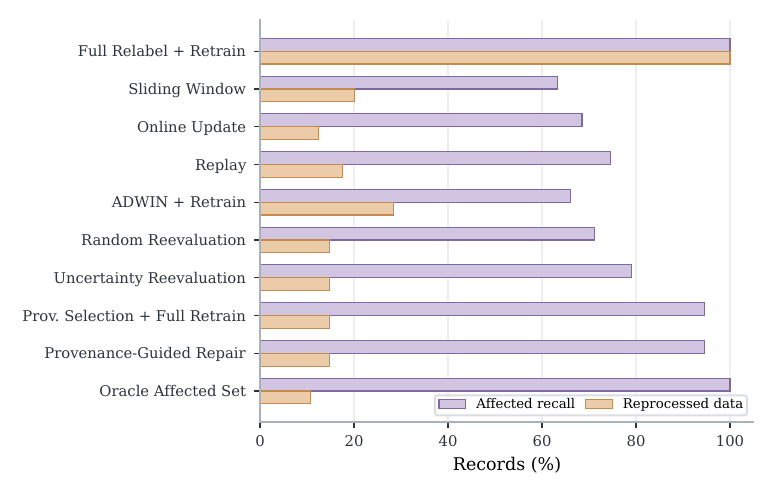}
    \label{fig:affected_coverage}
}

\caption{Predictive and computational behavior of the evaluated
maintenance strategies across the RuleShift-Bench datasets.}
\label{fig:scalability}
\end{figure*}

\subsection{Annotation Efficiency and Recurring Concepts}
\label{subsec:annotation_recurrence}

We next evaluate ambiguous concept revisions for which some updated
targets cannot be obtained directly through deterministic rule
execution. The annotation budget is varied as
\(B\in\{1,2,4,8,16,32\}\), and we compare random sampling, global
predictive uncertainty, and uncertainty restricted to the ambiguous
concept-delta region.

As shown in Table~\ref{tab:annotation_budget} and
Fig.~\ref{fig:annotation_results}(a), delta-constrained uncertainty
provides the strongest performance throughout the annotation range.
With only one annotation, Macro-F1 reaches 78.6\%, compared with
75.2\% for global uncertainty and 72.4\% for random sampling. At
\(B=32\), the corresponding values are 89.2\%, 85.8\%, and 83.5\%.
Restricting annotation to the ambiguous region therefore uses the
available supervision more effectively than selecting uncertain
examples globally.
\begin{table}[htbp]
\centering
\caption{Macro-F1 across different annotation budgets for ambiguous
concept revisions.}
\label{tab:annotation_budget}
\setlength{\tabcolsep}{3.6pt}
\renewcommand{\arraystretch}{1.02}
\footnotesize
\begin{tabular}{|l|c|c|c|c|c|c|}
\hline
\rowcolor{gray!8}
\textbf{Selection}
& \textbf{1}
& \textbf{2}
& \textbf{4}
& \textbf{8}
& \textbf{16}
& \textbf{32}
\\
\hline

Random
& 72.4 & 74.6 & 77.1 & 79.8 & 82.1 & 83.5 \\
\hline

Global uncertainty
& 75.2 & 77.9 & 80.6 & 83.1 & 84.9 & 85.8 \\
\hline

\rowcolor{green!6}
\textbf{Delta-constrained uncertainty}
& \textbf{78.6}
& \textbf{81.7}
& \textbf{84.4}
& \textbf{86.9}
& \textbf{88.4}
& \textbf{89.2}
\\
\hline
\end{tabular}
\end{table}
The final experiment considers recurring concept definitions through
the sequence \(Q_1\rightarrow Q_2\rightarrow Q_3\rightarrow Q_1\).
When \(Q_1\) returns, the versioned concept memory reuses the
corresponding rule representation, provenance information,
affected-data state, and compact predictive state (see Table \ref{tab:recurrence}).

\begin{table}[htbp]
\centering
\caption{Recovery under the recurring concept sequence
\(Q_1\!\rightarrow\!Q_2\!\rightarrow\!Q_3\!\rightarrow\!Q_1\).}
\label{tab:recurrence}
\setlength{\tabcolsep}{3.4pt}
\renewcommand{\arraystretch}{1.02}
\footnotesize
\resizebox{\columnwidth}{!}{%
\begin{tabular}{|l|c|c|c|c|c|}
\hline
\rowcolor{gray!8}
\textbf{Method}
&
\textbf{\(Q_1\) First}
&
\textbf{\(Q_1\) Return}
&
\textbf{\(RG\)}
&
\textbf{Reprocessed}
&
\textbf{Latency}
\\
\hline

Full Retrain
& 91.4 & 84.8 & 6.6 & 100.0\% & 998 s \\
\hline

Replay
& 91.2 & 88.7 & 2.5 & 18.0\% & 315 s \\
\hline

\rowcolor{green!6}
\textbf{Versioned Repair}
& \textbf{91.5}
& \textbf{90.6}
& \textbf{0.9}
& \textbf{12.6\%}
& \textbf{146 s}
\\
\hline

\rowcolor{blue!5}
Oracle Version Reuse
& 91.5 & 91.1 & 0.4 & 8.7\% & 119 s \\
\hline
\end{tabular}}
\end{table}

Versioned Repair recovers 90.6\% accuracy when \(Q_1\) returns,
compared with 91.5\% during its first occurrence, producing a recovery
gap of 0.9 percentage points. Replay reaches 88.7\% with a larger
2.5-point gap. Versioned Repair also reduces recovery latency from
315\,s to 146\,s relative to Replay while decreasing the reprocessed
fraction from 18.0\% to 12.6\%. Figure~\ref{fig:annotation_results}(b) separately compares the first
and returning \(Q_1\) accuracy, making the effect of concept recurrence
visible without combining heterogeneous quantities on the same axis.
Figure~\ref{fig:annotation_results}(c) reports the corresponding
maintenance cost. Full Retrain defines the complete-processing
reference, whereas Versioned Repair substantially reduces both
reprocessing and recovery latency and remains close to Oracle Version
Reuse.

\begin{figure*}[htbp]
\centering

\subfloat[Annotation budget.]{
    \includegraphics[width=0.315\textwidth]
    {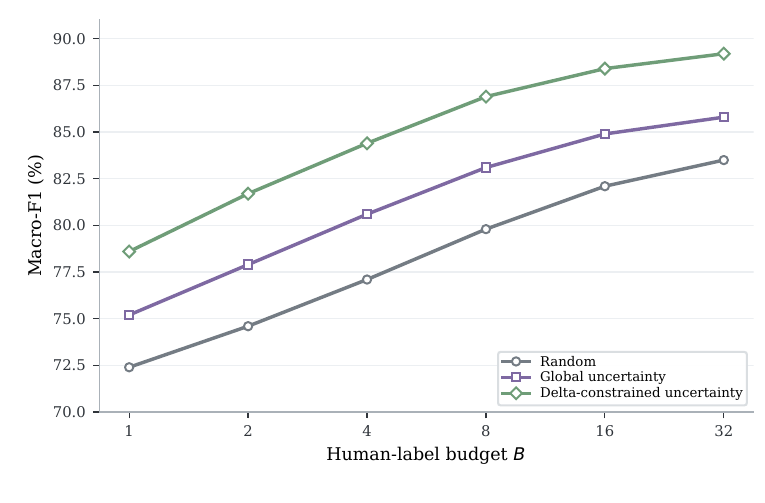}
    \label{fig:annotation_efficiency}
}
\hfill
\subfloat[Recurrence accuracy.]{
    \includegraphics[width=0.315\textwidth]
    {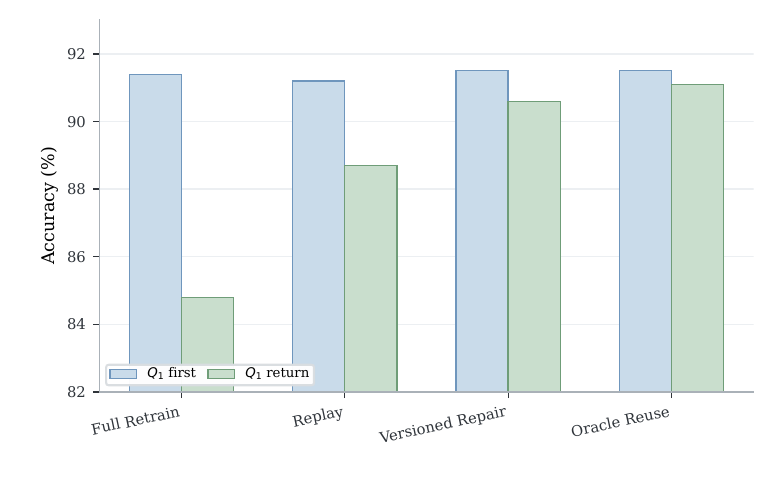}
    \label{fig:recurrence_accuracy}
}
\hfill
\subfloat[Recovery cost.]{
    \includegraphics[width=0.315\textwidth]
    {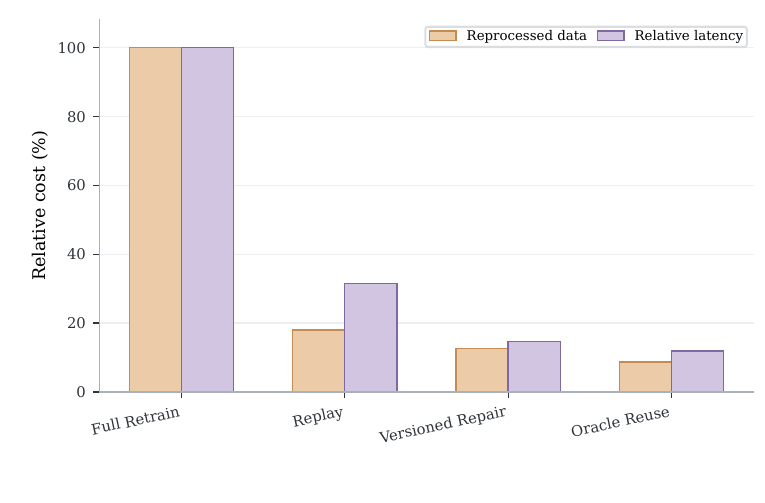}
    \label{fig:recurrence_cost}
}

\caption{Annotation efficiency and concept recurrence under limited
supervision and repeated concept definitions.}
\label{fig:annotation_results}
\end{figure*}

\subsection{Ablations and Failure Analysis}
\label{subsec:ablation_failure}

We next study the contribution of the main components of the proposed
framework through a compact ablation analysis. Starting from the full
system, we remove one component at a time and keep the remaining
pipeline unchanged. The evaluated components are the rule-delta
compiler, provenance analysis, stability certification,
ambiguous-region selection, versioned memory, and incremental repair. Figure~\ref{fig:ablation} provides a visual view of the results. Figure~\ref{fig:ablation_heatmap} summarizes the impact
of each removed component across all four metrics at once, making the
different roles of the modules immediately visible. The predictive
effect is especially clear in
Fig.~\ref{fig:ablation_retention}, where removing the rule-delta
compiler or ambiguous-region selection leads to the largest reduction
in retained predictive quality. In contrast,
Fig.~\ref{fig:ablation_processing} and
Fig.~\ref{fig:ablation_latency} show that provenance, stability
certification, and incremental repair are the most important
components for limiting historical processing and update time.

The ablation results show that the different components affect the
framework in different ways. Removing the rule-delta compiler reduces
Macro-F1 from 90.2\% to 89.3\% and decreases affected recall from
94.6\% to 88.1\%, while more than doubling the amount of historical
data that must be reprocessed. This indicates that explicitly
extracting the structural difference between \(Q_t\) and \(Q_{t+1}\)
is important for narrowing the update region before downstream
maintenance is performed. The provenance component has the strongest effect on efficiency. When
provenance is removed, affected recall reaches 100.0\%, but this is
obtained by effectively revisiting the entire historical collection,
which raises reprocessed data to 100.0\% and update latency to
862\,s. This behavior confirms that provenance is the main mechanism
that allows the framework to identify which historical records are
actually connected to the revised rule components.

Stability certification mainly reduces unnecessary reevaluation.
Without it, affected recall remains high at 98.4\%, but the system
must reprocess 38.9\% of the historical data instead of 14.7\%. This
shows that the certificate is valuable not because it changes the
updated concept itself, but because it safely excludes records whose
previous assignments remain valid. Ambiguous-region selection has a stronger influence on predictive
quality. Removing it reduces Macro-F1 to 88.7\%, the lowest among all
ablations, while also decreasing affected recall to 92.0\%. This
result highlights the importance of directing supervision and
reevaluation toward the unresolved region of the concept revision
rather than treating all candidate records uniformly.

Versioned memory mainly affects long-lived maintenance efficiency.
When it is removed, Macro-F1 decreases to 89.2\%, while reprocessed
data rises to 24.6\% and update latency rises to 268\,s. This
suggests that storing previous concept states, affected-data
information, and compact model states improves the reuse of previously
encountered semantic configurations. Finally, removing incremental repair leaves the selected candidate
region unchanged, so affected recall remains at 94.6\% and the
reprocessed fraction stays at 14.7\%. However, update latency rises
from 179\,s to 515\,s because the predictor must be rebuilt more
heavily after the affected data are identified. This confirms that
candidate discovery and selective model repair provide complementary
benefits.

\begin{figure}[htbp]
\centering
\resizebox{1\columnwidth}{!}{%
\begin{minipage}{\textwidth}
\centering

\subfloat[Sensitivity map.]{
    \includegraphics[width=0.47\textwidth]
    {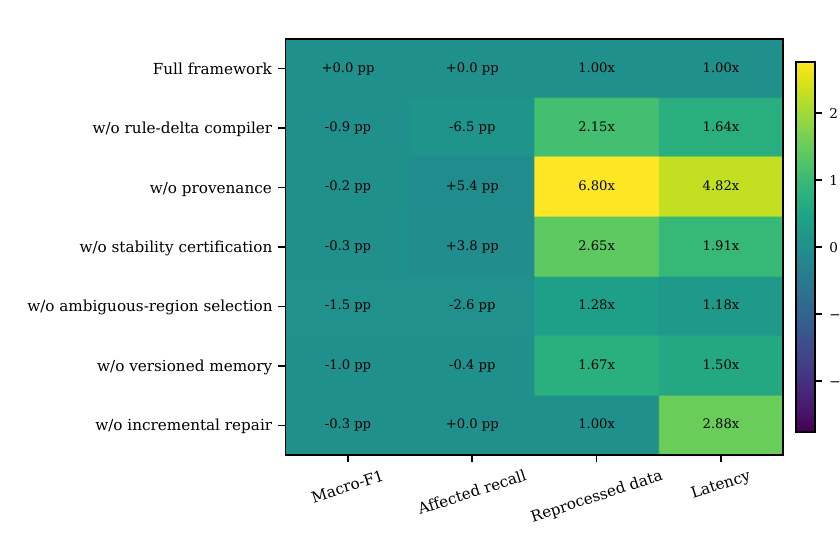}
    \label{fig:ablation_heatmap}
}
\hfill
\subfloat[Predictive retention.]{
    \includegraphics[width=0.47\textwidth]
    {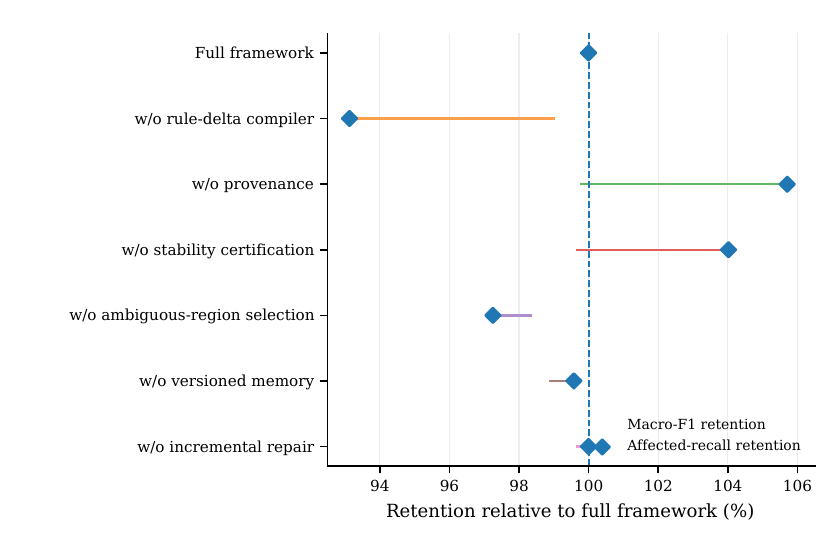}
    \label{fig:ablation_retention}
}

\vspace{1mm}

\subfloat[Processing amplification.]{
    \includegraphics[width=0.47\textwidth]
    {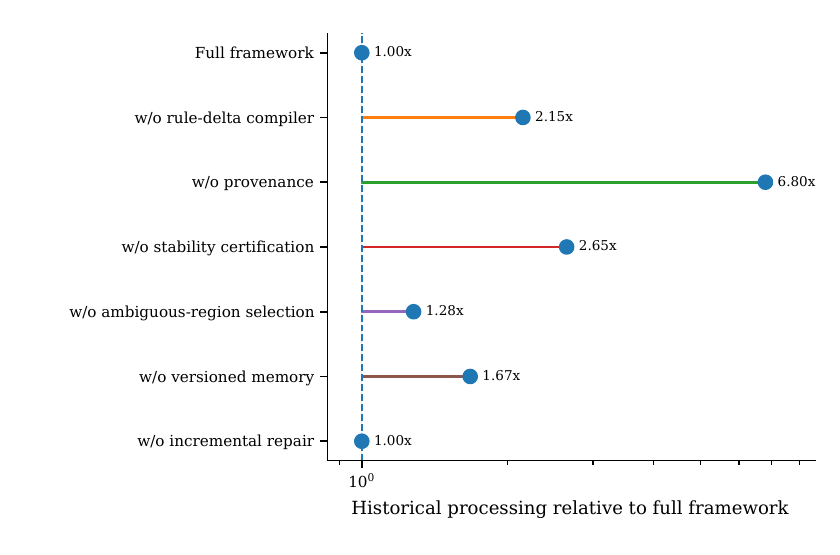}
    \label{fig:ablation_processing}
}
\hfill
\subfloat[Latency range.]{
    \includegraphics[width=0.47\textwidth]
    {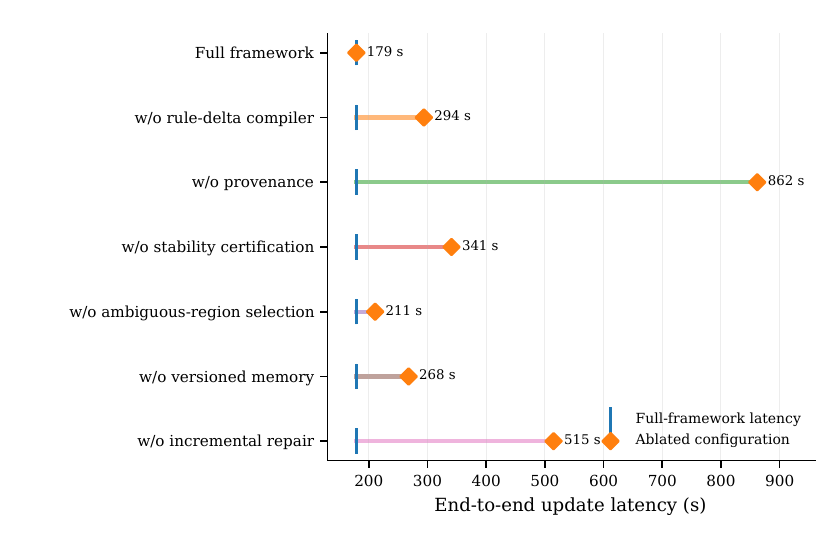}
    \label{fig:ablation_latency}
}

\end{minipage}%
}

\caption{Ablation analysis of the proposed framework from predictive
and system perspectives.}
\label{fig:ablation}
\end{figure}

Despite its advantages, the framework has several identifiable failure modes that delimit the conditions under which selective maintenance remains most effective.

\textbf{Global concept revision.}
The main computational advantage of the framework comes from the fact
that only part of the historical collection is usually affected by a
concept revision. When a revision changes the label of most records,
the candidate set becomes close to the full dataset and selective
maintenance naturally approaches complete recomputation. In this
regime, the framework still remains correct, but its computational
advantage becomes smaller.

\textbf{Missing provenance.}
In some practical systems, historical lineage information may be
incomplete or unavailable. Without reliable provenance, the system
must rely on a more conservative candidate set or reconstruct
dependencies approximately from the stored query structure and raw
data. As shown by the ablation study, this mainly increases the amount
of historical processing required for the update.

\textbf{Non-executable concepts.}
Some concept definitions cannot be expressed entirely as deterministic
predicates. They may depend on expert interpretation, incomplete
knowledge, or learned semantic conditions. In this situation, a larger
fraction of the candidate region is transferred to the ambiguous set
and requires selective human supervision. The annotation study in the
previous subsection shows that directing this supervision to the
ambiguous concept-delta region remains effective.

\textbf{Highly coupled queries.}
When concept definitions involve broad joins, dense relational
dependencies, or long graph paths, a small predicate revision can
influence a large portion of the historical database. In such cases,
fewer records can be certified stable and the candidate region
becomes larger. The framework still applies, but the amount of
reprocessed data and the update cost increase as the dependency
structure becomes more global.
\vspace{-0.3cm}
\section{Conclusion}
\label{sec:conclusion}

This paper studied learning under evolving concept definitions, where the rule, policy, or query generating the target is explicitly revised after deployment. We introduced a provenance-guided incremental learning framework that analyzes the structural difference between consecutive concept definitions, traces this change through historical provenance, certifies stable records, selectively reevaluates potentially affected instances, and incrementally repairs the deployed predictor while preserving valid knowledge. RuleShift-Bench evaluates this setting across financial, demographic, cybersecurity, and graph-structured data, and the results show that Provenance-Guided Repair reaches 92.3\% accuracy and 90.2\% Macro-F1 while reprocessing only 14.7\% of the historical collection, retaining 94.6\% of affected records, and reducing average update latency from 993\,s for complete relabeling and retraining to 179\,s. The framework also improves annotation efficiency for ambiguous revisions and supports recurring concepts through versioned memory. Its main limitations arise when concept revisions are global, historical provenance is incomplete, concept definitions are not fully executable, or highly coupled relational and graph dependencies enlarge the candidate region, in which cases the benefit of selective maintenance decreases. Future work will therefore investigate provenance reconstruction for legacy systems, richer probabilistic and temporal concept languages, more expressive relational and graph constraints, adaptive optimization of provenance granularity and annotation cost, scalable compression and retrieval of long concept histories, and unified treatment of explicit rule revisions together with latent statistical drift. Overall, the results support a data-maintenance perspective on adaptive learning in which an explicitly known concept revision is used to determine which historical supervision and predictive state actually require reconsideration rather than repeatedly recomputing the complete learning system.

\section*{Acknowledgment}
I would like to thank Prof. Yassine Maleh for his valuable discussions and insightful comments during the preparation of this work.

\begin{IEEEbiography}
[{\includegraphics[width=1in,height=1.5in,clip,keepaspectratio]{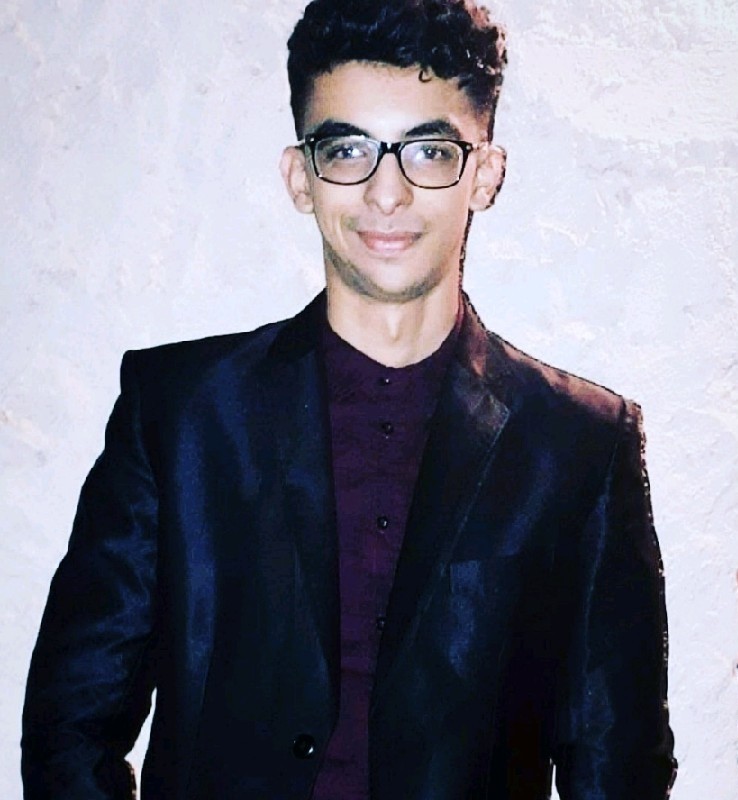}}]
{Ismail Lamaakal} (Student Member, IEEE) received the M.S. degree in computer science from the Multidisciplinary Faculty of Nador, Mohammed First University, Oujda, Morocco. He is currently pursuing the Ph.D. degree in computer science at the same university. His research interests include artificial intelligence, Tiny Machine Learning (TinyML), the Internet of Things (IoT), embedded systems, and edge intelligence. His current research focuses on the design and implementation of resource-efficient machine learning solutions for resource-constrained embedded and IoT devices, with particular emphasis on intelligent, low-power, and real-time applications.
\end{IEEEbiography}

\appendices
\section{Theoretical Analysis and Algorithmic Details}
\label{supp:sec:theoretical_analysis}

The main paper presents the central theoretical properties of the
proposed framework in compact form. This supplementary section develops
those results in greater detail and makes explicit the assumptions,
intermediate definitions, and algorithmic steps required by the
correctness arguments. The analysis is organized around the complete
maintenance path followed by the framework. We begin by defining the
rule language used to express executable concept definitions and the
canonical directed acyclic graph used to represent their computational
structure. We then formalize how two consecutive concept definitions are
aligned and converted into a typed rule delta. This is followed by a
formal treatment of record-level provenance, effective rule changes,
and the construction of the certified stable and candidate regions.
These definitions make it possible to state precisely when a historical
record can be excluded from reevaluation without risking an obsolete
target.

The second part of the section develops the correctness arguments. We
first establish that any sound stable-set construction preserves the
true changed-label set inside the candidate region. We then prove the
provenance stability certificate used to construct such a stable set
for deterministic Boolean concept DAGs. The proof is based on a
difference-propagation argument: if the final concept assignment were
to change, at least one effective rule change would have to propagate
through an uninterrupted dependency path to the concept output.
Persistent AND and OR nodes with unchanged controlling inputs interrupt
such propagation and therefore provide record-specific certificates of
invariance. Finally, we derive the computational cost of each stage of
the maintenance procedure and provide a complete proof of the
incremental-advantage result stated in the main paper.

The exact guarantees in this section apply to deterministic and
executable portions of a concept definition evaluated over a fixed
historical data snapshot. Concept components requiring unavailable
information, uncertain relational evidence, external knowledge, or
expert interpretation are not automatically treated as exact. Such
cases remain unresolved and are handled through the ambiguous-region
and selective-supervision mechanism described in the main paper.


\subsection{Formal Setting and Notation}
\label{supp:subsec:formal_setting}

Let \(\mathcal{Z}\) denote the complete information space maintained by
the underlying data system. A historical entity \(i\) is represented by
\(z_i\in\mathcal{Z}\), which can include ordinary attributes,
relational information, graph neighborhoods, historical aggregates, or
other information available to the concept-definition system. At
revision time \(t\), the historical collection is

\begin{equation}
\mathcal{D}_t
=
\{z_i\}_{i=1}^{N},
\end{equation}

where \(N=|\mathcal{D}_t|\). The previous concept definition is denoted
by \(Q_t\), and the revised concept definition by \(Q_{t+1}\). Both are
interpreted over the same historical snapshot when determining the
effect of the rule revision. Thus, unless explicitly stated otherwise,
the information \(z_i\) itself is assumed unchanged while comparing
\(Q_t(z_i)\) and \(Q_{t+1}(z_i)\).

The true set of historical records whose semantic target changes is

\begin{equation}
\mathcal{D}_t^{\Delta}
=
\left\{
z_i\in\mathcal{D}_t
:
Q_t(z_i)\neq Q_{t+1}(z_i)
\right\}.
\label{supp:eq:true_affected_set}
\end{equation}

Its complement contains the records whose target remains valid under
the revised definition. The purpose of provenance-guided maintenance is
not to approximate the definition of
\(\mathcal{D}_t^{\Delta}\), which is already exact, but to avoid
evaluating the complete revised concept program on records for which
target invariance can already be established from the structural rule
change and the historical provenance.

The framework therefore constructs two operational regions. The
certified stable set is denoted by
\(\mathcal{D}_t^{\mathrm{safe}}\), and the remaining records form the
candidate set

\begin{equation}
\mathcal{D}_t^{\mathrm{cand}}
=
\mathcal{D}_t
\setminus
\mathcal{D}_t^{\mathrm{safe}}.
\label{supp:eq:candidate_definition}
\end{equation}

A central requirement of the theory is that stability certification be
conservative. A record can be placed in
\(\mathcal{D}_t^{\mathrm{safe}}\) only when the framework has a
sufficient certificate that its target cannot change. Failure to obtain
such a certificate does not imply that the target has changed. It means
only that the record must remain in
\(\mathcal{D}_t^{\mathrm{cand}}\) until the revised concept is evaluated
or additional information becomes available.


\subsection{Executable Concept Rule Language}
\label{supp:subsec:rule_language}

The theoretical results require the computational structure of a
concept definition to be explicit. We therefore model an executable
concept as a deterministic Boolean program constructed from primitive
predicates and logical operators. The primitive predicates may
themselves depend on attributes, relational data, or graph structure,
but once their required information is available they return a
deterministic Boolean value.

\begin{definition}[Executable Concept Program]
A deterministic Boolean concept program \(Q\) is generated by the
grammar

\begin{equation}
\begin{aligned}
Q ::= {}&
P
\mid
\neg Q
\mid
\operatorname{AND}(Q_1,\ldots,Q_k) \\
&
\mid
\operatorname{OR}(Q_1,\ldots,Q_k),
\qquad k\geq2,
\end{aligned}
\label{supp:eq:concept_grammar}
\end{equation}

where \(P\) is an executable primitive predicate.
\end{definition}

The primitive-predicate family used in the framework is

\begin{equation}
P
::=
P_{\mathrm{atom}}
\mid
P_{\mathrm{thr}}
\mid
P_{\mathrm{rel}}
\mid
P_{\mathrm{path}}.
\label{supp:eq:primitive_grammar}
\end{equation}

An atomic predicate evaluates an ordinary deterministic property of the
record. A typical form is

\begin{equation}
P_{\mathrm{atom}}(z)
=
\mathbb{I}
\left[
a(z)\bowtie c
\right],
\label{supp:eq:atomic_predicate}
\end{equation}

where \(a(z)\) is an attribute or deterministic derived quantity,
\(c\) is a constant, and
\(\bowtie\in\{=,\neq,<,\leq,>,\geq\}\).

Threshold predicates are represented separately because changes to
their threshold parameters constitute one of the principal rule-shift
types studied in the paper. A threshold predicate has the form

\begin{equation}
P_{\mathrm{thr}}(z;\gamma)
=
\mathbb{I}
\left[
a(z)\bowtie\gamma
\right],
\label{supp:eq:threshold_predicate}
\end{equation}

where \(\gamma\) is an explicit parameter of the rule. Representing
predicate identity separately from \(\gamma\) allows a transition from
\(\gamma_t\) to \(\gamma_{t+1}\) to be recognized as a parameter
revision rather than incorrectly represented as deletion of one
predicate and insertion of another.

A relational predicate is written in the general form

\begin{equation}
P_{\mathrm{rel}}(z;R,\psi),
\label{supp:eq:relational_predicate}
\end{equation}

where \(R\) identifies the required relation or collection of relations
and \(\psi\) specifies the deterministic relational condition. The
internal computation may contain joins, existence tests, selections, or
aggregations. For the proof below, the internal relational query does
not need to be reduced to Boolean logic; it is sufficient that the
result presented to the concept DAG is deterministic and that the
relevant source dependencies can be represented by provenance.

Similarly, a graph-path predicate is represented by

\begin{equation}
P_{\mathrm{path}}(z;\rho,\psi),
\label{supp:eq:path_predicate}
\end{equation}

where \(\rho\) specifies a graph relation or path pattern and \(\psi\)
describes the condition evaluated over the corresponding nodes, edges,
or paths.

For a fixed record \(z\), the semantics of a concept program are
defined recursively. Negation satisfies

\begin{equation}
\llbracket \neg Q\rrbracket_z
=
1-\llbracket Q\rrbracket_z.
\label{supp:eq:not_semantics}
\end{equation}

For a conjunction,

\begin{equation}
\llbracket
\operatorname{AND}
(Q_1,\ldots,Q_k)
\rrbracket_z
=
\bigwedge_{j=1}^{k}
\llbracket Q_j\rrbracket_z,
\label{supp:eq:and_semantics}
\end{equation}

while a disjunction satisfies

\begin{equation}
\llbracket
\operatorname{OR}
(Q_1,\ldots,Q_k)
\rrbracket_z
=
\bigvee_{j=1}^{k}
\llbracket Q_j\rrbracket_z.
\label{supp:eq:or_semantics}
\end{equation}

For the binary formulation used in the theoretical analysis, the
concept assignment is therefore

\begin{equation}
Q_t(z_i)
=
\llbracket Q_t\rrbracket_{z_i}.
\label{supp:eq:concept_semantics}
\end{equation}

The same reasoning can be applied to multiclass definitions by
representing the class decision through multiple Boolean outputs or an
equivalent deterministic decision graph.


\subsection{Canonical Predicate-DAG Representation}
\label{supp:subsec:canonical_graph}

The syntactic form in which two logically equivalent rules are written
should not determine whether the framework interprets them as different
concepts. For example, \(P_1\land P_2\) and
\(P_2\land P_1\) represent the same Boolean expression even though the
textual order of the predicates differs. We therefore convert every
concept definition into a canonical predicate DAG before computing the
rule delta.

The canonical representation of \(Q_t\) is

\begin{equation}
G_t
=
G(Q_t)
=
(\mathcal{V}_t,\mathcal{E}_t,\phi_t,r_t),
\label{supp:eq:canonical_graph}
\end{equation}

where \(\mathcal{V}_t\) is the set of computational nodes,
\(\mathcal{E}_t\) contains directed dependencies from lower-level
computations toward their consumers, \(\phi_t(v)\) stores the operator
type and parameters associated with node \(v\), and \(r_t\) is the
concept-output node.

For record \(z_i\), the value produced by node \(v\) is denoted by

\begin{equation}
\nu_t(v,z_i).
\label{supp:eq:node_value}
\end{equation}

In particular, the final concept result satisfies

\begin{equation}
Q_t(z_i)
=
\nu_t(r_t,z_i).
\label{supp:eq:root_value}
\end{equation}

Canonicalization performs only semantics-preserving operations. Nested
instances of associative AND and OR operators are flattened,
commutative children are placed in a deterministic order, referenced
attributes and relations are normalized, and identical subexpressions
may be represented by shared DAG nodes. A bottom-up structural signature
is then assigned to every node. One convenient definition is

\begin{equation}
\sigma(v)
=
H
\left(
\operatorname{type}(v),
\operatorname{param}(v),
\Sigma(v)
\right),
\label{supp:eq:node_signature}
\end{equation}

where \(H\) is a structural hash and \(\Sigma(v)\) contains the
canonical sequence of child signatures.

These signatures allow components that remain identical across
consecutive definitions to be aligned efficiently. Parameter revisions,
such as a threshold change, are handled using a secondary structural key
that preserves the identity of the referenced attribute and operator
while allowing the parameter itself to differ.


\subsection{Rule-Delta Compilation and Change Completeness}
\label{supp:subsec:delta_compilation}

Given the canonical graphs \(G_t\) and \(G_{t+1}\), the rule-delta
compiler determines which components persist and which components have
been modified, inserted, or removed. The output is a typed rule delta

\begin{equation}
\mathcal{C}_t
=
\operatorname{Diff}
(G_t,G_{t+1}).
\label{supp:eq:typed_delta}
\end{equation}

The compiler distinguishes threshold revisions, predicate insertion,
predicate deletion, logical rewrites, and relational or graph-path
rewrites. Importantly, the delta does not need to be a minimum graph-edit
sequence. Correctness requires a different property: every semantic
revision capable of modifying the output of at least one record must be
represented by the delta. Conservative additional edits are harmless
for correctness because they only enlarge the subsequent region of
records that may need to be examined.

\begin{definition}[Change-Complete Delta]
A rule delta \(\mathcal{C}_t\) is change-complete if every difference
between \(Q_t\) and \(Q_{t+1}\) that can alter the concept assignment of
some record is represented directly by an element of
\(\mathcal{C}_t\) or conservatively by an enclosing modified component.
\end{definition}

When an alignment is uncertain, the compiler therefore chooses the
conservative interpretation. It records the old node as deleted and the
new node as inserted rather than assuming that they represent the same
persistent component. This choice may produce additional candidates,
but it avoids incorrectly hiding a real semantic revision.

The complete compilation procedure is shown in
Algorithm~\ref{supp:alg:delta_compilation}.

\begin{algorithm}[htbp]
\caption{Typed Concept Rule-Delta Compilation}
\label{supp:alg:delta_compilation}
\KwIn{Previous rule \(Q_t\), revised rule \(Q_{t+1}\)}
\KwOut{Canonical graphs \(G_t,G_{t+1}\), alignment \(\alpha_t\),
and typed rule delta \(\mathcal{C}_t\)}

Canonicalize \(Q_t\) and obtain \(G_t\)\;

Canonicalize \(Q_{t+1}\) and obtain \(G_{t+1}\)\;

Compute bottom-up canonical signatures for all nodes\;

Align components having identical canonical signatures\;

Attempt secondary alignment for components having identical structural
roles but revised parameters\;

Reject any secondary alignment that cannot be established
unambiguously\;

Initialize \(\mathcal{C}_t\leftarrow\emptyset\)\;

\ForEach{aligned persistent component}{
    record parameter revisions, logical rewrites, or
    relational/path revisions when present\;
}

\ForEach{unmatched component of \(G_t\)}{
    add a deletion edit to \(\mathcal{C}_t\)\;
}

\ForEach{unmatched component of \(G_{t+1}\)}{
    add an insertion edit to \(\mathcal{C}_t\)\;
}

\Return \(G_t,G_{t+1},\alpha_t,\mathcal{C}_t\)\;
\end{algorithm}


\subsection{Record-Level Provenance and Candidate Retrieval}
\label{supp:subsec:formal_provenance}

The structural delta determines what changed in the concept definition,
but a rule component can have different relevance for different
historical records. The role of provenance is therefore to connect a
structural edit to the records whose previous evaluation depends on the
modified computation. For every historical record, we represent the
relevant provenance as

\begin{equation}
\operatorname{Prov}(z_i,Q_t)
=
\left(
\operatorname{Dep}_t(i),
\operatorname{Val}_t(i),
\operatorname{Src}_t(i)
\right).
\label{supp:eq:provenance_representation}
\end{equation}

The dependency component \(\operatorname{Dep}_t(i)\) records the
predicate and operator dependencies required to reason about the
previous output. The value component
\(\operatorname{Val}_t(i)\) stores node values that may later serve as
stability witnesses. Finally,
\(\operatorname{Src}_t(i)\) identifies source-level dependencies such as
relational tuples, graph nodes, edges, or path signatures. The
representation may contain conservative extra dependencies; correctness
does not require a minimal lineage expression.

To avoid scanning all provenance objects after every revision, the
framework maintains an inverted provenance index. For component \(c\),

\begin{equation}
\mathcal{I}_t(c)
=
\left\{
i:
c\in\operatorname{Dep}_t(i)
\right\}.
\label{supp:eq:inverted_index}
\end{equation}

A complication arises when the revision inserts a new component.
Because the new component was absent from \(Q_t\), no historical posting
list can exist for it. Candidate retrieval therefore uses an affected
frontier consisting of persistent components whose evaluation context
is changed by the edit. For an insertion or deletion, this can be the
nearest persistent ancestor whose child structure changes. For a
threshold revision it is normally the persistent threshold predicate
itself, and for relational or graph edits it can additionally include
the relevant relation or path identifiers.

We denote this frontier by

\begin{equation}
\mathcal{F}_t
=
\operatorname{Frontier}
(\mathcal{C}_t,G_t,G_{t+1}).
\label{supp:eq:affected_frontier}
\end{equation}

The initial record pool retrieved from the provenance index is then

\begin{equation}
\mathcal{R}_t
=
\bigcup_{c\in\mathcal{F}_t}
\mathcal{I}_t(c).
\label{supp:eq:retrieved_pool}
\end{equation}

This pruning step is valid only when the frontier and provenance index
are dependency-complete. If the implementation cannot guarantee that a
potentially affected record will appear in the retrieved postings, the
safe fallback is not to exclude that record. In the extreme case the
framework uses

\begin{equation}
\mathcal{R}_t
=
\mathcal{D}_t.
\label{supp:eq:full_fallback}
\end{equation}

This conservative fallback is important because incomplete provenance
should reduce computational efficiency rather than invalidate the
semantic correctness of the maintenance procedure.


\subsection{Record-Specific Effective Rule Changes}
\label{supp:subsec:effective_changes}

Even among records retrieved through the affected frontier, not every
syntactic edit is necessarily capable of changing the local rule
evaluation. The framework therefore computes a record-specific
effective change set

\begin{equation}
\mathcal{C}_t^{\mathrm{eff}}(z_i)
\subseteq
\mathcal{C}_t.
\label{supp:eq:effective_set}
\end{equation}

The set is intentionally conservative. It must contain every changed
component whose revised local behavior can actually alter the
computation for \(z_i\), but it may contain additional components when
local invariance cannot be determined cheaply. Formally, the property
required by the correctness analysis is

\begin{equation}
Q_t(z_i)\neq Q_{t+1}(z_i)
\quad\Longrightarrow\quad
\mathcal{C}_t^{\mathrm{eff}}(z_i)\neq\emptyset.
\label{supp:eq:effective_soundness}
\end{equation}

Consider, for example, a threshold revision

\begin{equation}
a(z)>5000
\quad\longrightarrow\quad
a(z)>3000.
\label{supp:eq:threshold_example}
\end{equation}

The local threshold output differs only when

\begin{equation}
3000<a(z_i)\leq5000.
\label{supp:eq:threshold_disagreement}
\end{equation}

Records outside this interval therefore do not contain the threshold
revision in their effective change set, provided that the attribute
value is known exactly.

A logical rewrite behaves differently. Consider

\begin{equation}
P_1\lor P_2
\quad\longrightarrow\quad
P_1\land P_2.
\label{supp:eq:logic_example}
\end{equation}

If the two persistent predicates retain their values, the two logical
operators differ only when the predicate truth values disagree. Hence
the logical rewrite is locally effective only when

\begin{equation}
P_1(z_i)\neq P_2(z_i).
\label{supp:eq:logic_disagreement}
\end{equation}

Relational and graph changes follow the same principle. If local
provenance is sufficient to establish that the relational or graph
predicate result is unchanged, the edit need not be propagated further
for that record. Otherwise it remains an effective component and is
handled conservatively.


\subsection{Persistent Blocking Nodes}
\label{supp:subsec:blocking_nodes}

The central observation behind stability certification is that a local
change does not necessarily propagate to the final output. Boolean
operators possess controlling input values that can make their result
independent of one or more changed branches. Provenance supplies the
record-specific values required to recognize this situation.

\begin{definition}[Persistent Blocking Node]
For a fixed record \(z_i\), a persistent logical node is a blocking node
with respect to a changed incoming branch when another persistent input
has an unchanged value that fixes the operator output independently of
the changed branch.
\end{definition}

For an AND node, the controlling value is false. Thus an AND node
\(v\) is blocking when it contains a persistent child \(u\) satisfying

\begin{equation}
\nu_t(u,z_i)
=
\nu_{t+1}(u,z_i)
=
0.
\label{supp:eq:and_blocking}
\end{equation}

For an OR node, the controlling value is true. It is therefore blocking
when a persistent child satisfies

\begin{equation}
\nu_t(u,z_i)
=
\nu_{t+1}(u,z_i)
=
1.
\label{supp:eq:or_blocking}
\end{equation}

The persistence requirement is essential. An old false input to an AND
node cannot be used as a blocking witness if that input's own rule
definition has changed and its revised value is unknown. The framework
must first establish that the witness itself is invariant.

A NOT node does not possess an analogous controlling value. If its
child changes, the NOT output also changes. NOT can therefore transmit a
difference but cannot block one.


\subsection{Local Invariance at a Blocking Node}
\label{supp:subsec:blocking_lemma}

Before proving the global stability theorem, we establish the local
property on which it depends.

\begin{lemma}[Blocking-Node Invariance]
\label{supp:lemma:blocking_invariance}
Let \(v\) be a persistent AND or OR node. If \(v\) contains an
unchanged controlling child for record \(z_i\), then the output of
\(v\) is invariant under changes to all other incoming branches.
\end{lemma}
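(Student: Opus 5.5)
The plan is to argue directly from the recursive semantics in \eqref{supp:eq:and_semantics} and \eqref{supp:eq:or_semantics}, splitting into the two operator cases. The key preliminary step is to make the claim precise. Fix the record \(z_i\) and the persistent node \(v\) with children \(u_1,\ldots,u_k\). Let \(u=u_j\) be the unchanged controlling child, so \(\nu_t(u,z_i)=\nu_{t+1}(u,z_i)\) equals the controlling value. Persistence of \(v\) means the operator type \(\phi_t(v)=\phi_{t+1}(v)\) is the same in \(G_t\) and \(G_{t+1}\). I would emphasize this, since a logical rewrite at \(v\) itself would invalidate the argument. This is exactly why the definition of a persistent blocking node requires both the node and the witness to be persistent.

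For an AND node, I would write
\[
\nu_s(v,z_i)=\nu_s(u,z_i)\wedge\bigwedge_{l\neq j}\nu_s(u_l,z_i),\qquad s\in\{t,t+1\}.
\]
Since \(\nu_s(u,z_i)=0\) for both \(s\), the conjunction equals \(0\) for both versions. This holds no matter what values the other children take, and it still holds if children are inserted into or deleted from \(v\) under the revision, because the witness \(u\) remains among the children. Hence \(\nu_t(v,z_i)=\nu_{t+1}(v,z_i)=0\).

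The OR case is symmetric, using \(\nu_s(u,z_i)=1\) and the fact that \(1\) absorbs under \(\vee\), which gives \(\nu_t(v,z_i)=\nu_{t+1}(v,z_i)=1\).

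The only step I expect to need care is the phrase ``changes to all other incoming branches''. It must cover arbitrary changed values of the other children, and also changes to the child set itself (insertions and deletions at \(v\) recorded in \(\mathcal{C}_t\)). I would handle this by noting that the semantics quantify over whatever finite child list \(v\) has in each version. The absorbing property of the controlling value needs only that \(u\) belongs to both lists with the same value. That membership is guaranteed by the persistence of \(u\) and its alignment under Algorithm~\ref{supp:alg:delta_compilation}. No induction is needed here; this lemma then serves as the local step for the path-blocking argument in Theorem~\ref{thm:provenance_stability}.
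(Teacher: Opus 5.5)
Your proposal is correct and follows the same approach as the paper's proof: a case split on AND/OR, where the unchanged controlling child (false for AND, true for OR) fixes the output of \(v\) under both versions regardless of the other children. Your remarks that \(v\) must keep the same operator type and that the witness must remain a child even if other children are inserted or deleted make explicit what the paper leaves implicit, but they do not change the argument.
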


\begin{IEEEproof}
We consider the AND and OR cases separately.

First suppose that \(v\) is an AND node. By the definition of a
persistent blocking node, there exists a child \(u\) whose value is
false under both concept versions. Therefore,

\begin{equation}
\nu_t(u,z_i)
=
\nu_{t+1}(u,z_i)
=
0.
\label{supp:eq:lemma_and_child}
\end{equation}

The semantics of conjunction imply that the output of an AND node is
false whenever at least one of its children is false. The values of the
remaining children are therefore irrelevant once the controlling false
input is present. Under the previous concept definition,

\begin{equation}
\nu_t(v,z_i)
=
0.
\label{supp:eq:lemma_and_old}
\end{equation}

The same controlling child remains false after the rule revision, so
the revised output is also

\begin{equation}
\nu_{t+1}(v,z_i)
=
0.
\label{supp:eq:lemma_and_new}
\end{equation}

Consequently,

\begin{equation}
\nu_t(v,z_i)
=
\nu_{t+1}(v,z_i).
\label{supp:eq:lemma_and_equal}
\end{equation}

This conclusion does not depend on whether the remaining children
change. The unchanged false input alone determines the AND result.

Now consider the OR case. By assumption, there exists a persistent
child \(u\) whose value is true under both definitions:

\begin{equation}
\nu_t(u,z_i)
=
\nu_{t+1}(u,z_i)
=
1.
\label{supp:eq:lemma_or_child}
\end{equation}

The output of an OR node is true whenever at least one child is true.
Hence the old output is

\begin{equation}
\nu_t(v,z_i)
=
1,
\label{supp:eq:lemma_or_old}
\end{equation}

and the revised output remains

\begin{equation}
\nu_{t+1}(v,z_i)
=
1.
\label{supp:eq:lemma_or_new}
\end{equation}

Therefore,

\begin{equation}
\nu_t(v,z_i)
=
\nu_{t+1}(v,z_i).
\label{supp:eq:lemma_or_equal}
\end{equation}

In both cases, the unchanged controlling child fixes the logical-node
output independently of every changed branch. The node therefore
prevents a difference arriving through those branches from propagating
further toward the concept output.
\end{IEEEproof}


\subsection{Difference Propagation Through the Concept DAG}
\label{supp:subsec:difference_propagation}

The next lemma formalizes the converse intuition. If the concept output
actually changes, then some effective local change must be connected to
the output by a dependency path along which the difference survives.

\begin{lemma}[Unblocked Difference Propagation]
\label{supp:lemma:difference_propagation}
Assume that the concept delta is change-complete and that the
record-specific effective set is conservative. If

\begin{equation}
Q_t(z_i)\neq Q_{t+1}(z_i),
\label{supp:eq:propagation_root_diff}
\end{equation}

then there exists at least one directed dependency path from a component
in \(\mathcal{C}_t^{\mathrm{eff}}(z_i)\) to the concept-output node
along which no persistent blocking node eliminates the difference.
\end{lemma}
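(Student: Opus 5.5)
We construct the required path backwards from the concept-output node. The guiding invariant is that every node on the path actually differs in value between the two versions. Call a node \(v\) \emph{differing} for \(z_i\) when \(\nu_t(v,z_i)\neq\nu_{t+1}(v,z_i)\). Here a node present in only one graph counts as a changed component, and a persistent node whose operator or parameters were edited is covered by \(\mathcal{C}_t\) through change-completeness. By the hypothesis \(Q_t(z_i)\neq Q_{t+1}(z_i)\) and the root semantics \(Q_t(z_i)=\nu_t(r_t,z_i)\), the output node is differing.

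The key step is a one-step descent claim. Let \(v\) be a differing node that is persistent, unedited, and not itself a changed component. Then \(v\) has a child \(u\) which is either differing or a component of \(\mathcal{C}_t\), and \(v\) is not a persistent blocking node. We argue by cases on the operator of \(v\), using the recursive semantics for NOT, AND and OR.

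For NOT, its single child must differ, since otherwise \(1-\llbracket\cdot\rrbracket\) would agree.

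For AND or OR, suppose every child were persistent and had an unchanged value. Because the operator of \(v\) is unedited, its output would be identical in both versions, which is a contradiction. So some child is either differing or non-persistent; a non-persistent child is an inserted or deleted component and hence lies in \(\mathcal{C}_t\).

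Non-blocking is immediate from Lemma~\ref{supp:lemma:blocking_invariance}. If \(v\) had an unchanged controlling persistent child, its output would be invariant, contradicting that \(v\) is differing.

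We iterate the descent from \(r_t\). Since the DAG is finite and acyclic, the walk toward the leaves must terminate. It can only stop at a node where the claim fails to apply, namely a changed component \(c\in\mathcal{C}_t\), possibly via an enclosing modified component as change-completeness permits. Alternatively it stops at a primitive leaf whose value differs, which is impossible unless that leaf is itself a changed component, because an unedited primitive evaluated on the same snapshot \(z_i\) is deterministic.

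Reversing the walk gives a directed path from \(c\) to the output along which every node is differing and, by the descent claim, non-blocking.

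It remains to check that \(c\in\mathcal{C}_t^{\mathrm{eff}}(z_i)\). The component \(c\) feeds a differing node, or is itself differing or structurally inserted or deleted, so its revised local behaviour does alter the computation for \(z_i\). The conservativeness property~\eqref{supp:eq:effective_soundness} of the effective set then places \(c\) in \(\mathcal{C}_t^{\mathrm{eff}}(z_i)\).

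The main obstacle is making this last step rigorous, together with the bookkeeping of nodes that exist in only one of \(G_t\), \(G_{t+1}\). The stated property~\eqref{supp:eq:effective_soundness} only guarantees that the effective set is nonempty, not that it contains our particular \(c\). I would therefore first read ``conservative'' in the stronger sense stated in Section~\ref{supp:subsec:effective_changes}: the effective set must contain every changed component whose revised local behaviour can alter the computation for \(z_i\). I would then verify that \(c\) qualifies because it lies at the bottom of a chain of genuinely differing values.

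To handle the cross-graph issue, I would work on the union graph induced by the alignment \(\alpha_t\). In that graph, an absent node is treated as contributing the neutral element of its parent operator in the version where it is absent. With this convention, a deletion or insertion appears as a local value change at a persistent parent, which is exactly the affected-frontier convention of Section~\ref{supp:subsec:formal_provenance}.
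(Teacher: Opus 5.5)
Your proposal is correct and follows essentially the same route as the paper's proof. Both arguments trace a chain of value-differing nodes backward from the output, use finiteness of the DAG and change-completeness to end at a component of \(\mathcal{C}_t\), invoke conservativeness to place that component in \(\mathcal{C}_t^{\mathrm{eff}}(z_i)\), and apply Lemma~\ref{supp:lemma:blocking_invariance} to rule out blocking nodes on the reversed path. Your operator-by-operator descent and explicit handling of inserted or deleted children make the paper's argument more careful. Your remark that \eqref{supp:eq:effective_soundness} by itself only gives nonemptiness is also correct: the paper's proof tacitly relies on the same stronger reading of conservativeness from Section~\ref{supp:subsec:effective_changes} that you adopt.
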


\begin{IEEEproof}
The proof proceeds by tracing the observed output difference backward
through the deterministic concept computation.

Let \(r\) denote the concept-output node. The hypothesis gives

\begin{equation}
\nu_t(r,z_i)
\neq
\nu_{t+1}(r,z_i).
\label{supp:eq:root_values_differ}
\end{equation}

There are two possible reasons for this difference. The output node
itself may have been changed by the rule revision, or the output node
may persist while one of the values supplied to it differs.

If the local operator or input structure of \(r\) is itself modified
and this modification is capable of producing the observed difference,
then \(r\) is an effective changed component. In that case the required
path begins at the changed component itself and terminates at the output.

Otherwise, \(r\) persists with the same deterministic operator. A
deterministic operator supplied with exactly the same child values must
produce exactly the same result. Because
\eqref{supp:eq:root_values_differ}
states that its output differs, at least one of its child computations
must therefore differ between the two concept versions. Let \(v_1\)
denote such a child. Then

\begin{equation}
\nu_t(v_1,z_i)
\neq
\nu_{t+1}(v_1,z_i).
\label{supp:eq:first_child_diff}
\end{equation}

If \(v_1\) is itself a changed component, the backward trace terminates.
If it is not, its deterministic output difference must again be caused
by a differing child. We can therefore continue the same argument
recursively.

Because the concept representation is a finite DAG, this backward
process cannot continue indefinitely and cannot enter a cycle.
Eventually it reaches a primitive predicate or an internal rule site
whose local semantics differ between the old and revised definitions.
By change completeness, this site is represented in
\(\mathcal{C}_t\). Because it participates in the output difference for
\(z_i\), conservative effective-change analysis ensures that it is also
contained in

\begin{equation}
\mathcal{C}_t^{\mathrm{eff}}(z_i).
\label{supp:eq:effective_contains_origin}
\end{equation}

Reversing the backward trace yields a directed path from this effective
changed component to the output node.

It remains to show that this path cannot contain a blocking node. Every
node selected during the backward trace has a different value under the
two concept versions. However,
Lemma~\ref{supp:lemma:blocking_invariance} establishes that a persistent
blocking node necessarily has the same value under both versions.
Therefore none of the difference-carrying nodes on the traced path can
be a blocking node.

Hence a changed concept output implies the existence of at least one
unblocked path from an effective changed component to the root.
\end{IEEEproof}


\subsection{Complete Proof of the Provenance Stability Certificate}
\label{supp:subsec:stability_proof}

We can now give the complete proof of
Theorem~\ref{thm:provenance_stability} from the main paper.

\begin{IEEEproof}
Fix an arbitrary historical record \(z_i\). By hypothesis, every
directed path from every component in

\begin{equation}
\mathcal{C}_t^{\mathrm{eff}}(z_i)
\label{supp:eq:theorem_effective_set}
\end{equation}

to the concept-output node contains at least one persistent blocking
node whose controlling input remains unchanged after the rule revision.

We prove that the final concept assignment cannot change.

Assume, for contradiction, that the concept assignment does change.
Then

\begin{equation}
Q_t(z_i)
\neq
Q_{t+1}(z_i).
\label{supp:eq:theorem_contradiction}
\end{equation}

Under the change-completeness and conservative effective-change
conditions,
Lemma~\ref{supp:lemma:difference_propagation} applies. It follows from
\eqref{supp:eq:theorem_contradiction} that there must exist an
effective changed component

\begin{equation}
c
\in
\mathcal{C}_t^{\mathrm{eff}}(z_i)
\label{supp:eq:theorem_origin}
\end{equation}

and a directed path from \(c\) to the concept-output node along which
the semantic difference is not eliminated.

Let this path be

\begin{equation}
c=v_0
\rightarrow
v_1
\rightarrow
\cdots
\rightarrow
v_m=r,
\label{supp:eq:theorem_path}
\end{equation}

where \(r\) is the concept-output node.

The theorem hypothesis states that every such path contains a
persistent blocking node. Therefore there exists at least one node
\(v_j\) on
\eqref{supp:eq:theorem_path}
that has an unchanged controlling input.

By Lemma~\ref{supp:lemma:blocking_invariance}, the output of that node
must be invariant. Hence,

\begin{equation}
\nu_t(v_j,z_i)
=
\nu_{t+1}(v_j,z_i).
\label{supp:eq:block_on_path}
\end{equation}

Equation~\eqref{supp:eq:block_on_path} means that any difference
arriving at \(v_j\) through the changed branch is destroyed at
\(v_j\). The portion of the concept computation above \(v_j\) therefore
receives exactly the same value from that branch under both concept
versions.

Consequently, the path in
\eqref{supp:eq:theorem_path}
cannot be an uninterrupted path carrying a difference from the changed
component \(c\) to the root. This contradicts
Lemma~\ref{supp:lemma:difference_propagation}, which requires such an
unblocked path whenever the final concept assignment differs.

The assumption
\eqref{supp:eq:theorem_contradiction}
must therefore be false. Hence,

\begin{equation}
Q_t(z_i)
=
Q_{t+1}(z_i).
\label{supp:eq:theorem_conclusion}
\end{equation}

Thus the historical target of \(z_i\) is invariant under the concept
revision, and the record may be safely included in
\(\mathcal{D}_t^{\mathrm{safe}}\).
\end{IEEEproof}


\subsection{Graph-Cut Interpretation of the Stability Certificate}
\label{supp:subsec:graph_cut}

The previous theorem can also be interpreted as a graph-separation
result. This interpretation is useful because it clarifies why the
certificate remains valid for nested Boolean expressions rather than
only for simple two-predicate examples.

For record \(z_i\), define the effective source set as

\begin{equation}
S_i
=
\mathcal{C}_t^{\mathrm{eff}}(z_i).
\label{supp:eq:source_set}
\end{equation}

Let \(B_i\) denote the set of all persistent blocking nodes established
for that record. If every path from \(S_i\) to the concept output
intersects \(B_i\), then \(B_i\) is a vertex cut separating every
potential source of semantic difference from the output.

Once the nodes in \(B_i\) are replaced conceptually by their invariant
Boolean values, no effective changed component can influence the root.
The stability certificate can therefore be expressed as the following
implication:

\begin{equation}
B_i
\text{ separates }
S_i
\text{ from the output}
\quad\Longrightarrow\quad
Q_t(z_i)=Q_{t+1}(z_i).
\label{supp:eq:cut_interpretation}
\end{equation}

This interpretation also provides a direct implementation strategy:
candidate certification can be performed using an upward traversal from
the effective changed components, terminating a branch whenever a
blocking node is encountered.


\subsection{Record-Level Stability Certification Algorithm}
\label{supp:subsec:certificate_algorithm}

Algorithm~\ref{supp:alg:certificate} implements the preceding theorem.
The algorithm deliberately returns only two logical outcomes:
\emph{certified stable} and \emph{not certified}. The second outcome
must not be interpreted as evidence that the target actually changes.
It means only that the sufficient conditions for exclusion have not
been established.

\begin{algorithm}[htbp]
\caption{Record-Level Provenance Stability Certification}
\label{supp:alg:certificate}
\KwIn{
Record \(z_i\), changed components \(\mathcal{C}_t\),
canonical concept graphs, and provenance
\(\operatorname{Prov}(z_i,Q_t)\)
}
\KwOut{\texttt{stable} or \texttt{not-certified}}

Determine
\(\mathcal{C}_t^{\mathrm{eff}}(z_i)\)\;

\If{
\(\mathcal{C}_t^{\mathrm{eff}}(z_i)=\emptyset\)
}{
    \Return \texttt{stable}\;
}

Initialize a queue with all effective changed components\;

Mark all components as unvisited\;

\While{the queue is not empty}{

    Remove one current component \(u\)\;

    \If{\(u\) is the concept-output node}{
        \Return \texttt{not-certified}\;
    }

    Inspect each parent of \(u\) toward the output\;

    \ForEach{such parent \(v\)}{

        \eIf{\(v\) has a persistent invariant blocking witness}{
            stop propagation through \(v\)\;
        }{
            add \(v\) to the queue if not previously visited\;
        }
    }

    \If{the structural correspondence required to continue the
    propagation analysis is unavailable or ambiguous}{
        \Return \texttt{not-certified}\;
    }
}

\Return \texttt{stable}\;
\end{algorithm}

The conservative behavior in the final condition is essential. Missing
provenance, ambiguous rule alignment, or unavailable witness values
cannot justify exclusion. Such uncertainty causes the record to remain
in the candidate region.


\subsection{Complete Provenance-Guided Candidate Discovery}
\label{supp:subsec:candidate_discovery}

The complete candidate-discovery procedure combines the inverted
provenance index with the stability certificate. Records that lie
outside a dependency-complete affected frontier can be excluded
directly. Records retrieved by the frontier are examined individually
because Boolean context may provide additional certificates that are
not visible from posting-list membership alone.

\begin{algorithm}[htbp]
\caption{Provenance-Guided Candidate Discovery}
\label{supp:alg:candidate_discovery}
\KwIn{
Historical collection \(\mathcal{D}_t\),
rule delta \(\mathcal{C}_t\),
concept graphs, and provenance index \(\Pi_t\)
}
\KwOut{
\(\mathcal{D}_t^{\mathrm{safe}}\) and
\(\mathcal{D}_t^{\mathrm{cand}}\)
}

Construct the affected frontier \(\mathcal{F}_t\)\;

\eIf{dependency completeness of the frontier/index is guaranteed}{
    retrieve the union of postings associated with
    \(\mathcal{F}_t\) and call it \(\mathcal{R}_t\)\;
}{
    set \(\mathcal{R}_t\leftarrow\mathcal{D}_t\)\;
}

Initialize
\(\mathcal{D}_t^{\mathrm{safe}}\)
with records excluded by dependency-complete retrieval\;

Initialize
\(\mathcal{D}_t^{\mathrm{cand}}\leftarrow\emptyset\)\;

\ForEach{\(z_i\in\mathcal{R}_t\)}{

    run Algorithm~\ref{supp:alg:certificate}\;

    \eIf{the result is \texttt{stable}}{
        add \(z_i\) to
        \(\mathcal{D}_t^{\mathrm{safe}}\)\;
    }{
        add \(z_i\) to
        \(\mathcal{D}_t^{\mathrm{cand}}\)\;
    }
}

\Return
\(\mathcal{D}_t^{\mathrm{safe}},
\mathcal{D}_t^{\mathrm{cand}}\)\;
\end{algorithm}


\subsection{Conditions Required for Exact Correctness}
\label{supp:subsec:correctness_conditions}

The correctness results are conditional guarantees rather than claims
that every compressed or approximate provenance implementation is
automatically exact. The following conditions make the boundary of the
guarantee explicit.

\textbf{C1: Fixed historical snapshot.}
The two concept definitions are compared on the same \(z_i\). If the
underlying tuples, relations, graph edges, or external state have also
changed, these changes must themselves be represented as input
dependencies or deltas.

\textbf{C2: Deterministic executability.}
Every record handled by the exact part of the procedure has
deterministically evaluable old and revised concept outputs.

\textbf{C3: Change-complete structural delta.}
No semantic rule revision capable of changing a historical target is
omitted from \(\mathcal{C}_t\).

\textbf{C4: Dependency-complete provenance retrieval.}
If the rule revision can affect \(z_i\), the provenance index or its
conservative fallback must retain \(z_i\) for further examination.

\textbf{C5: Sound provenance witnesses.}
Every node value used to certify a blocking condition must equal the
true old evaluation, and the invariance of that witness under the new
definition must itself be established.

\textbf{C6: Conservative effective-change construction.}
A rule component that may generate a local difference for \(z_i\)
cannot be removed incorrectly from
\(\mathcal{C}_t^{\mathrm{eff}}(z_i)\).

\textbf{C7: Conservative treatment of uncertainty.}
Missing provenance, unresolved relations, uncertain concept semantics,
or ambiguous component alignment cannot be interpreted as stability.
The corresponding record remains in the candidate or ambiguous region.

The first six conditions establish the deterministic correctness result,
while the seventh ensures that uncertainty cannot create an erroneous
stable-set exclusion.


\subsection{Complete Proof of Exact Concept-Delta Recovery}
\label{supp:subsec:exact_delta_proof}

We next prove Proposition~\ref{prop:exact_delta_recovery} from the main
paper in full detail. The key observation is that the proposition does
not require the stable set to contain every unchanged record. It
requires only that every record placed in that set truly be unchanged.
The candidate set may therefore contain false positives, i.e., records
that are eventually found to retain the same label. Such false
positives increase computation but cannot cause an affected record to
be lost.

\begin{IEEEproof}[Proof of Proposition~\ref{prop:exact_delta_recovery}]

Let the true changed-label set be

\begin{equation}
\mathcal{D}_t^{\Delta}
=
\left\{
z_i\in\mathcal{D}_t:
Q_t(z_i)\neq Q_{t+1}(z_i)
\right\}.
\label{supp:eq:prop1_true_set}
\end{equation}

The candidate-only procedure returns

\begin{equation}
\widehat{\mathcal{D}}_t^{\Delta}
=
\left\{
z_i\in\mathcal{D}_t^{\mathrm{cand}}:
Q_t(z_i)\neq Q_{t+1}(z_i)
\right\}.
\label{supp:eq:prop1_candidate_set}
\end{equation}

To prove equality between these sets, we establish both set inclusions.

\textbf{First inclusion: every truly affected record is a candidate.}

Assume that

\begin{equation}
z_i
\in
\mathcal{D}_t^{\Delta}.
\label{supp:eq:prop1_take_affected}
\end{equation}

By definition of the affected set,

\begin{equation}
Q_t(z_i)
\neq
Q_{t+1}(z_i).
\label{supp:eq:prop1_label_diff}
\end{equation}

Suppose, for contradiction, that \(z_i\) were not contained in the
candidate set. Since

\begin{equation}
\mathcal{D}_t^{\mathrm{cand}}
=
\mathcal{D}_t
\setminus
\mathcal{D}_t^{\mathrm{safe}},
\label{supp:eq:prop1_candidate_complement}
\end{equation}

the assumption that \(z_i\notin\mathcal{D}_t^{\mathrm{cand}}\) implies

\begin{equation}
z_i
\in
\mathcal{D}_t^{\mathrm{safe}}.
\label{supp:eq:prop1_in_safe}
\end{equation}

The proposition assumes that the stable set is sound. Therefore every
record in this set satisfies

\begin{equation}
Q_t(z_i)
=
Q_{t+1}(z_i).
\label{supp:eq:prop1_safe_equality}
\end{equation}

Equations~\eqref{supp:eq:prop1_label_diff} and
\eqref{supp:eq:prop1_safe_equality} contradict one another. Hence an
affected record cannot belong to the certified stable set. We have
therefore established

\begin{equation}
\mathcal{D}_t^{\Delta}
\subseteq
\mathcal{D}_t^{\mathrm{cand}}.
\label{supp:eq:prop1_first_inclusion}
\end{equation}

\textbf{Second inclusion: candidate reevaluation cannot create a false
changed-label record.}

Take any record

\begin{equation}
z_i
\in
\widehat{\mathcal{D}}_t^{\Delta}.
\label{supp:eq:prop1_take_candidate_delta}
\end{equation}

From the definition in
\eqref{supp:eq:prop1_candidate_set}, this implies two facts. First,

\begin{equation}
z_i
\in
\mathcal{D}_t^{\mathrm{cand}},
\label{supp:eq:prop1_candidate_membership}
\end{equation}

and second,

\begin{equation}
Q_t(z_i)
\neq
Q_{t+1}(z_i).
\label{supp:eq:prop1_candidate_disagreement}
\end{equation}

Because the candidate set is a subset of the historical collection,

\begin{equation}
\mathcal{D}_t^{\mathrm{cand}}
\subseteq
\mathcal{D}_t.
\label{supp:eq:prop1_candidate_subset_data}
\end{equation}

Therefore \(z_i\) is a historical record satisfying exactly the
disagreement condition in the definition of
\(\mathcal{D}_t^{\Delta}\). It follows that

\begin{equation}
z_i
\in
\mathcal{D}_t^{\Delta}.
\label{supp:eq:prop1_candidate_is_true}
\end{equation}

Hence,

\begin{equation}
\widehat{\mathcal{D}}_t^{\Delta}
\subseteq
\mathcal{D}_t^{\Delta}.
\label{supp:eq:prop1_second_inclusion}
\end{equation}

It remains to show the reverse inclusion. From
\eqref{supp:eq:prop1_first_inclusion}, every truly affected record is
already known to lie in the candidate set. Such a record also satisfies
the old--new disagreement condition by definition. Therefore every
element of \(\mathcal{D}_t^{\Delta}\) satisfies the membership
criterion defining
\(\widehat{\mathcal{D}}_t^{\Delta}\), and consequently

\begin{equation}
\mathcal{D}_t^{\Delta}
\subseteq
\widehat{\mathcal{D}}_t^{\Delta}.
\label{supp:eq:prop1_reverse_inclusion}
\end{equation}

Combining
\eqref{supp:eq:prop1_second_inclusion}
and
\eqref{supp:eq:prop1_reverse_inclusion}
gives

\begin{equation}
\widehat{\mathcal{D}}_t^{\Delta}
=
\mathcal{D}_t^{\Delta}.
\label{supp:eq:prop1_final}
\end{equation}

Thus evaluating the concept transition only over the candidate set
recovers exactly the same changed-label set that would be obtained by
evaluating the complete historical collection. The proof also shows why
candidate-set over-approximation is acceptable: additional unchanged
records can increase computation, but they cannot change the recovered
affected set as long as no truly affected record is placed in the
stable set.
\end{IEEEproof}


\subsection{Correctness of Provenance-Guided Pruning}
\label{supp:subsec:pruning_correctness}

The stability theorem and exact-recovery proposition combine directly
into the following consequence.

\begin{corollary}[Exact Recovery Under Sound Provenance Certification]
\label{supp:cor:exact_recovery}
Suppose Conditions C1--C7 hold and
\(\mathcal{D}_t^{\mathrm{safe}}\) contains only records satisfying the
provenance stability certificate. Then every truly affected historical
record remains in the candidate region, and exact reevaluation of the
candidate region recovers the complete changed-label set.
\end{corollary}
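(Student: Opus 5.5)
The plan is to chain Theorem~\ref{thm:provenance_stability} with Proposition~\ref{prop:exact_delta_recovery}. The only real work is checking that the hypotheses of the theorem, and of the supporting Lemmas~\ref{supp:lemma:blocking_invariance} and~\ref{supp:lemma:difference_propagation}, are supplied by Conditions C1--C7. The target is the soundness of \(\mathcal{D}_t^{\mathrm{safe}}\): every \(z_i\in\mathcal{D}_t^{\mathrm{safe}}\) satisfies \(Q_t(z_i)=Q_{t+1}(z_i)\). Once soundness is established, the proposition immediately gives \(\mathcal{D}_t^{\Delta}\subseteq\mathcal{D}_t^{\mathrm{cand}}\), and exact reevaluation of the candidates returns \(\mathcal{D}_t^{\Delta}\).

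To prove soundness, I would first fix a record \(z_i\in\mathcal{D}_t^{\mathrm{safe}}\) and split into the two ways it can enter the stable set in Algorithm~\ref{supp:alg:candidate_discovery}.

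\textbf{Case (a): exclusion by dependency-complete retrieval}, i.e.\ \(z_i\notin\mathcal{R}_t\). Condition C4 says that any record the revision can affect is retained in \(\mathcal{R}_t\). By contraposition, \(Q_t(z_i)=Q_{t+1}(z_i)\). Condition C1 is needed here so that "affect" refers only to the rule change on a fixed \(z_i\). If completeness is not guaranteed, the fallback \(\mathcal{R}_t=\mathcal{D}_t\) makes this case vacuous.

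\textbf{Case (b): certification by Algorithm~\ref{supp:alg:certificate}.} Either \(\mathcal{C}_t^{\mathrm{eff}}(z_i)=\emptyset\), or the upward traversal terminates without reaching the output node.
\begin{itemize}
\item In the empty case, Condition C6 gives the conservativeness property \eqref{supp:eq:effective_soundness}, whose contrapositive yields invariance.
\item In the traversal case, I would argue that every directed path from an effective component to the root hits a node at which propagation was stopped. Suppose instead that some path avoided every stopping node. Each node on that path would then be enqueued in turn, including the root, and the algorithm would have returned \texttt{not-certified}. So the set of stopping nodes forms the vertex cut of \eqref{supp:eq:cut_interpretation}.
\end{itemize}
Condition C5 makes the stopping witnesses genuine persistent blocking nodes, with the true old value and an established invariant new value. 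Condition C3 together with C6 provides the change-complete and conservative hypotheses of Lemma~\ref{supp:lemma:difference_propagation}. Condition C7 ensures the algorithm's early exits on ambiguity never produce \texttt{stable}. Theorem~\ref{thm:provenance_stability} then applies, and \(Q_t(z_i)=Q_{t+1}(z_i)\).

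Finally, C2 guarantees that \(Q_t\) and \(Q_{t+1}\) are deterministic and executable on the records handled exactly. Proposition~\ref{prop:exact_delta_recovery} is then invoked verbatim.

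I expect the main obstacle to be the traversal-to-cut step in case (b). The algorithm only stops propagation at a parent \(v\) "with a persistent invariant blocking witness". I must make sure the witness is a child of \(v\) distinct from the branch carrying the difference, and that it is invariant in its own right rather than lying downstream of another effective change. Otherwise Lemma~\ref{supp:lemma:blocking_invariance} would not apply. I would handle this by reading C5 as requiring exactly this witness invariance, which is how the persistence remark after \eqref{supp:eq:or_blocking} frames it.

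A secondary subtlety is that a blocking node must block the changed branch on every path through it. Lemma~\ref{supp:lemma:blocking_invariance} gives invariance against all other branches simultaneously, which settles it.
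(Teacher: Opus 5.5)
Your proposal is correct and follows the paper's route: Theorem~\ref{thm:provenance_stability} makes \(\mathcal{D}_t^{\mathrm{safe}}\) a sound stable set, and Proposition~\ref{prop:exact_delta_recovery} then gives \(\mathcal{D}_t^{\Delta}\subseteq\mathcal{D}_t^{\mathrm{cand}}\) and exact recovery on the candidates. The paper's proof consists only of this two-step chain, reading the corollary's hypothesis as directly supplying the theorem's hypothesis, so your additional checks are a valid refinement rather than a needed step; these are the treatment of records excluded by dependency-complete retrieval via C4, and the argument that a \texttt{stable} return from Algorithm~\ref{supp:alg:certificate} yields the vertex cut of \eqref{supp:eq:cut_interpretation}.
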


\begin{IEEEproof}
Theorem~\ref{thm:provenance_stability} establishes that every record
accepted by the certificate satisfies

\begin{equation}
Q_t(z_i)
=
Q_{t+1}(z_i).
\label{supp:eq:cor_safe}
\end{equation}

Consequently, the resulting
\(\mathcal{D}_t^{\mathrm{safe}}\)
is a sound stable set. Proposition~\ref{prop:exact_delta_recovery}
then applies immediately and guarantees that the complement of this
stable set contains every record whose target actually changes.
Candidate-only reevaluation therefore recovers
\(\mathcal{D}_t^{\Delta}\) exactly.
\end{IEEEproof}


\subsection{Exact and Ambiguous Candidate Resolution}
\label{supp:subsec:exact_ambiguous}

After candidate discovery, the framework distinguishes deterministic
reevaluation from unresolved semantic cases. This distinction is
important for the theoretical interpretation because a candidate record
need not already be known to be affected. Candidate membership means
only that invariance has not been certified.

Let the executable candidate subset be

\begin{equation}
\mathcal{E}_t
=
\left\{
z_i\in\mathcal{D}_t^{\mathrm{cand}}
:
Q_{t+1}(z_i)
\text{ is deterministically executable}
\right\}.
\label{supp:eq:executable_candidates}
\end{equation}

The unresolved candidate subset is

\begin{equation}
\mathcal{U}_t^{\mathrm{amb}}
=
\mathcal{D}_t^{\mathrm{cand}}
\setminus
\mathcal{E}_t.
\label{supp:eq:unresolved_candidates}
\end{equation}

For executable candidates, the actual changed-label records are
identified exactly:

\begin{equation}
\mathcal{D}_t^{\mathrm{exact}}
=
\left\{
z_i\in\mathcal{E}_t:
Q_t(z_i)\neq Q_{t+1}(z_i)
\right\}.
\label{supp:eq:exact_affected}
\end{equation}

Conceptually, the genuinely affected ambiguous region is

\begin{equation}
\mathcal{D}_t^{\mathrm{amb}}
=
\mathcal{D}_t^{\Delta}
\cap
\mathcal{U}_t^{\mathrm{amb}}.
\label{supp:eq:true_ambiguous}
\end{equation}

However, membership in this set may not be known before expert
supervision or missing information becomes available. The implementation
therefore performs annotation selection over
\(\mathcal{U}_t^{\mathrm{amb}}\), rather than assuming that every
unresolved candidate is truly affected. This avoids circular reasoning
in which the algorithm would need to know the revised label before
deciding whether the revised label must be requested.


\subsection{Complete Incremental Maintenance Algorithm}
\label{supp:subsec:complete_algorithm}

The complete procedure is shown in
Algorithm~\ref{supp:alg:complete_update}. The algorithm makes explicit
the separation between structural concept analysis, data maintenance,
semantic resolution, predictor repair, and version-state maintenance.

\begin{algorithm}[htbp]
\caption{Provenance-Guided Incremental Concept Maintenance}
\label{supp:alg:complete_update}
\KwIn{
\(Q_t,Q_{t+1}\),
historical collection \(\mathcal{D}_t\),
predictor \(f_{\theta_t}\),
provenance/index state \(\Pi_t\),
annotation budget \(B\)
}
\KwOut{
Updated predictor \(f_{\theta_{t+1}}\) and updated maintenance state
}

Compile \(Q_t\) and \(Q_{t+1}\) into canonical concept graphs\;

Compute typed rule delta \(\mathcal{C}_t\)\;

Construct the affected provenance frontier\;

Run Algorithm~\ref{supp:alg:candidate_discovery} to obtain
\(\mathcal{D}_t^{\mathrm{safe}}\) and
\(\mathcal{D}_t^{\mathrm{cand}}\)\;

Initialize the exact repair set and unresolved candidate set\;

\ForEach{\(z_i\in\mathcal{D}_t^{\mathrm{cand}}\)}{

    \eIf{the required revised rule components are executable}{

        obtain the old target from stored supervision or \(Q_t\)\;

        evaluate the revised concept \(Q_{t+1}(z_i)\)\;

        \If{the old and revised targets differ}{
            add the revised example to
            \(\mathcal{D}_t^{\mathrm{exact}}\)\;
        }

    }{
        add \(z_i\) to
        \(\mathcal{U}_t^{\mathrm{amb}}\)\;
    }
}

Select at most \(B\) informative examples from
\(\mathcal{U}_t^{\mathrm{amb}}\) and obtain revised supervision\;

Construct
\(\mathcal{D}_t^{\mathrm{label}}\)\;

Construct the repair set
\(\mathcal{D}_t^{\mathrm{repair}}\)\;

Sample a stability buffer
\(\mathcal{D}_t^{\mathrm{stable}}
\subseteq
\mathcal{D}_t^{\mathrm{safe}}\)\;

Incrementally repair \(f_{\theta_t}\) using the repair set and
stability buffer\;

Update the provenance and index structures required by \(Q_{t+1}\)\;

Store the revised rule, graph, provenance state, affected-data state,
and predictor state in the versioned concept memory\;

\Return updated predictor and maintenance state\;
\end{algorithm}


\subsection{Detailed Computational Complexity}
\label{supp:subsec:detailed_complexity}

We next derive the cost of the complete maintenance path. The analysis
separates structural rule processing, provenance retrieval, stability
certification, candidate reevaluation, predictor repair, and index
maintenance because these terms scale differently. This decomposition
is important: a rule revision can be structurally small while still
having a large data impact, and conversely a relatively complex
structural revision can remain inexpensive when only a small historical
region depends on the changed components.

Let the combined sizes of the two consecutive concept graphs be

\begin{equation}
V
=
|\mathcal{V}_t|
+
|\mathcal{V}_{t+1}|,
\label{supp:eq:combined_vertices}
\end{equation}

and

\begin{equation}
E
=
|\mathcal{E}_t|
+
|\mathcal{E}_{t+1}|.
\label{supp:eq:combined_edges}
\end{equation}

Canonicalization and bottom-up structural hashing visit every node and
dependency edge. Ignoring ordering of commutative children, this
requires

\begin{equation}
O(V+E)
\label{supp:eq:canonical_cost}
\end{equation}

time. When AND and OR children are explicitly sorted, the additional
cost is

\begin{equation}
O
\left(
\sum_{v}
\deg(v)\log\deg(v)
\right).
\label{supp:eq:sorting_cost}
\end{equation}

For rule languages with bounded operator arity, this additional term
remains linear up to a constant factor.

Exact component alignment can be performed using hash tables indexed by
canonical signatures. Its expected cost is therefore linear in the
number of nodes. Secondary parameter-aware alignment adds another
linear pass when stable structural keys are indexed. The complete
one-time rule-delta cost is summarized by

\begin{equation}
C_{\Delta Q}
=
O
\left(
V+E+
\sum_{v}
\deg(v)\log\deg(v)
\right).
\label{supp:eq:delta_compilation_cost}
\end{equation}

The provenance cost depends on the postings associated with the
affected frontier. Let

\begin{equation}
F_t
=
|\mathcal{F}_t|
\label{supp:eq:frontier_size}
\end{equation}

be the number of frontier components, and let

\begin{equation}
L_t
=
\sum_{c\in\mathcal{F}_t}
|\mathcal{I}_t(c)|
\label{supp:eq:posting_volume}
\end{equation}

be the total number of posting entries retrieved before duplicate
elimination. With hashed sets or bitmap unions, the corresponding
frontier-retrieval cost is approximately

\begin{equation}
O(F_t+L_t).
\label{supp:eq:posting_cost}
\end{equation}

Let

\begin{equation}
R_t
=
|\mathcal{R}_t|
\label{supp:eq:retrieved_count}
\end{equation}

denote the number of unique retrieved records, and let
\(\bar h_t\) denote the average number of concept-graph edges examined
while performing stability certification for one such record.
Record-specific certification therefore requires

\begin{equation}
O(R_t\bar h_t).
\label{supp:eq:certificate_cost}
\end{equation}

The worst case occurs when the entire rule graph must be traversed for
each retrieved record, giving

\begin{equation}
O(R_tE).
\label{supp:eq:certificate_worst_case}
\end{equation}

However, blocking nodes can terminate individual paths early, so the
actual traversal can be substantially smaller than this bound.

Let

\begin{equation}
K_t
=
|\mathcal{D}_t^{\mathrm{cand}}|
\label{supp:eq:candidate_count}
\end{equation}

and let \(C_{\Delta}\) denote the average cost of evaluating the changed
or unresolved concept fragment for one candidate. Candidate
reevaluation requires

\begin{equation}
O(K_tC_{\Delta}).
\label{supp:eq:candidate_eval_cost}
\end{equation}

If changed-fragment execution is unavailable, the framework can always
evaluate the full revised concept on a candidate, in which case

\begin{equation}
C_{\Delta}
=
C_Q,
\label{supp:eq:full_candidate_cost}
\end{equation}

where \(C_Q\) is the average cost of full concept evaluation for one
record.

Combining posting retrieval and stability certification into the
provenance-processing term \(C_{\mathrm{prov}}\), the concept-maintenance
cost becomes

\begin{equation}
T_{\mathrm{inc}}^{Q}
=
O
\left(
C_{\Delta Q}
+
C_{\mathrm{prov}}
+
K_tC_{\Delta}
+
C_{\mathrm{index}}
\right),
\label{supp:eq:incremental_concept_cost}
\end{equation}

where \(C_{\mathrm{index}}\) represents maintenance of provenance
structures following the update.

Full concept recomputation instead requires

\begin{equation}
T_{\mathrm{full}}^{Q}
=
O(NC_Q).
\label{supp:eq:full_concept_cost}
\end{equation}

When predictor adaptation is included, let \(C_{\mathrm{repair}}\)
denote the incremental model-update cost and
\(C_{\mathrm{train}}^{\mathrm{full}}\) the cost of complete retraining.
The corresponding end-to-end costs are

\begin{equation}
T_{\mathrm{inc}}
=
O
\left(
C_{\Delta Q}
+
C_{\mathrm{prov}}
+
K_tC_{\Delta}
+
C_{\mathrm{repair}}
+
C_{\mathrm{index}}
\right),
\label{supp:eq:end_to_end_incremental}
\end{equation}

and

\begin{equation}
T_{\mathrm{full}}
=
O
\left(
NC_Q
+
C_{\mathrm{train}}^{\mathrm{full}}
\right).
\label{supp:eq:end_to_end_full}
\end{equation}


\subsection{Complete Proof of the Incremental Advantage}
\label{supp:subsec:incremental_advantage_proof}

We now provide the detailed proof of
Proposition~\ref{prop:incremental_advantage}. The proposition concerns
the concept-recomputation component of the framework. The
predictor-specific optimization term is intentionally excluded from the
formal asymptotic statement because different model families have
different update complexities.

\begin{IEEEproof}[Proof of Proposition~\ref{prop:incremental_advantage}]

The complete historical recomputation strategy evaluates the revised
concept over all \(N\) historical records. Its concept-evaluation cost
is therefore proportional to

\begin{equation}
NC_Q.
\label{supp:eq:prop2_full}
\end{equation}

The proposed procedure instead incurs the rule-compilation cost, the
provenance-processing cost, candidate-only concept evaluation, and
index-maintenance cost. Thus,

\begin{equation}
T_{\mathrm{inc}}^{Q}
=
O
\left(
C_{\Delta Q}
+
C_{\mathrm{prov}}
+
K_tC_{\Delta}
+
C_{\mathrm{index}}
\right).
\label{supp:eq:prop2_inc}
\end{equation}

To compare the asymptotic growth of the two procedures, divide the
incremental expression by the full concept-recomputation scale
\(NC_Q\). We obtain

\begin{equation}
\frac{T_{\mathrm{inc}}^{Q}}
{NC_Q}
=
O
\left(
\frac{C_{\Delta Q}}{NC_Q}
+
\frac{C_{\mathrm{prov}}}{NC_Q}
+
\frac{K_tC_{\Delta}}{NC_Q}
+
\frac{C_{\mathrm{index}}}{NC_Q}
\right).
\label{supp:eq:prop2_normalized}
\end{equation}

We now examine the four normalized terms individually.

By the assumptions of the proposition,

\begin{equation}
C_{\Delta Q}
=
o(NC_Q).
\label{supp:eq:prop2_delta_small}
\end{equation}

By the definition of little-\(o\), this means

\begin{equation}
\frac{C_{\Delta Q}}
{NC_Q}
\longrightarrow
0.
\label{supp:eq:prop2_delta_limit}
\end{equation}

The same reasoning applies to provenance processing:

\begin{equation}
C_{\mathrm{prov}}
=
o(NC_Q),
\label{supp:eq:prop2_prov_small}
\end{equation}

which implies

\begin{equation}
\frac{C_{\mathrm{prov}}}
{NC_Q}
\longrightarrow
0.
\label{supp:eq:prop2_prov_limit}
\end{equation}

Similarly, index maintenance satisfies

\begin{equation}
C_{\mathrm{index}}
=
o(NC_Q),
\label{supp:eq:prop2_index_small}
\end{equation}

and therefore

\begin{equation}
\frac{C_{\mathrm{index}}}
{NC_Q}
\longrightarrow
0.
\label{supp:eq:prop2_index_limit}
\end{equation}

It remains to analyze the only term that directly depends on the number
of candidate records. Rewrite that term as

\begin{equation}
\frac{K_tC_{\Delta}}
{NC_Q}
=
\frac{K_t}{N}
\frac{C_{\Delta}}{C_Q}.
\label{supp:eq:prop2_factorization}
\end{equation}

The proposition assumes

\begin{equation}
C_{\Delta}
\leq
C_Q.
\label{supp:eq:prop2_fragment_bound}
\end{equation}

Because both quantities represent nonnegative execution costs,
\eqref{supp:eq:prop2_fragment_bound} gives

\begin{equation}
0
\leq
\frac{C_{\Delta}}{C_Q}
\leq
1.
\label{supp:eq:prop2_cost_ratio_bound}
\end{equation}

Multiplying this inequality by the nonnegative candidate ratio yields

\begin{equation}
0
\leq
\frac{K_t}{N}
\frac{C_{\Delta}}{C_Q}
\leq
\frac{K_t}{N}.
\label{supp:eq:prop2_squeeze_bound}
\end{equation}

The final assumption of the proposition is

\begin{equation}
\frac{K_t}{N}
\longrightarrow
0.
\label{supp:eq:prop2_candidate_limit}
\end{equation}

The left side of
\eqref{supp:eq:prop2_squeeze_bound}
is nonnegative and its upper bound converges to zero. Therefore, by the
squeeze theorem,

\begin{equation}
\frac{K_t}{N}
\frac{C_{\Delta}}{C_Q}
\longrightarrow
0.
\label{supp:eq:prop2_candidate_term_limit}
\end{equation}

Using
\eqref{supp:eq:prop2_factorization}, this is equivalent to

\begin{equation}
\frac{K_tC_{\Delta}}
{NC_Q}
\longrightarrow
0.
\label{supp:eq:prop2_candidate_final}
\end{equation}

We have therefore shown that every normalized term in
\eqref{supp:eq:prop2_normalized}
converges to zero. Consequently,

\begin{equation}
\frac{T_{\mathrm{inc}}^{Q}}
{NC_Q}
\longrightarrow
0.
\label{supp:eq:prop2_total_limit}
\end{equation}

By definition of little-\(o\),

\begin{equation}
T_{\mathrm{inc}}^{Q}
=
o(NC_Q).
\label{supp:eq:prop2_final_result}
\end{equation}

Thus the concept-recomputation component of the proposed procedure is
asymptotically smaller than complete historical concept recomputation
under the stated assumptions.
\end{IEEEproof}


\subsection{Interpretation of the Complexity Result}
\label{supp:subsec:complexity_interpretation}

The previous proposition identifies the candidate fraction as a central
determinant of computational efficiency. Define

\begin{equation}
\rho_t
=
\frac{
|\mathcal{D}_t^{\mathrm{cand}}|
}{
N
}.
\label{supp:eq:candidate_ratio_final}
\end{equation}

Ignoring fixed structural and indexing overhead, the dominant
record-level ratio between selective and full concept execution is

\begin{equation}
R_t^{Q}
=
\rho_t
\frac{C_{\Delta}}{C_Q}.
\label{supp:eq:record_cost_ratio}
\end{equation}

This decomposition separates two sources of computational savings. The
first is \emph{data localization}: a small \(\rho_t\) means that
provenance and stability analysis successfully eliminate most of the
historical collection from reevaluation. The second is
\emph{computation localization}: a small
\(C_{\Delta}/C_Q\) means that each candidate can be processed by
executing only the changed rule fragment rather than the complete
concept program. The largest benefit occurs when both forms of
localization are present.

The expression also explains the natural operating limit of the method.
If a concept revision is effectively global, then

\begin{equation}
\rho_t
\approx
1.
\label{supp:eq:global_candidate}
\end{equation}

If the changed fragment is additionally as expensive as the complete
concept program,

\begin{equation}
C_{\Delta}
\approx
C_Q,
\label{supp:eq:global_cost}
\end{equation}

then

\begin{equation}
R_t^{Q}
\approx
1.
\label{supp:eq:no_gain}
\end{equation}

In this regime, selective maintenance naturally approaches full
recomputation. This is not a correctness failure. It means that the
concept revision contains little exploitable locality.


\subsection{Extension to Relational and Graph Predicates}
\label{supp:subsec:relational_graph_extension}

The Boolean stability theorem does not require primitive predicates to
be simple scalar comparisons. A relational or graph computation can be
treated as a leaf of the Boolean DAG once the framework determines
whether its Boolean result is invariant or potentially changed for the
record under consideration.

Suppose the old and revised concept definitions contain relational
predicates \(P_R^t\) and \(P_R^{t+1}\). If provenance and local
incremental query reasoning establish

\begin{equation}
P_R^t(z_i)
=
P_R^{t+1}(z_i),
\label{supp:eq:relation_invariant}
\end{equation}

then the relational revision does not belong to the effective change
set for that record. If equality cannot be established, the relational
predicate remains potentially effective and its influence is propagated
through the Boolean graph.

For example, consider

\begin{equation}
Q_t
=
P_A
\land
P_R^t
\label{supp:eq:relational_old}
\end{equation}

and

\begin{equation}
Q_{t+1}
=
P_A
\land
P_R^{t+1}.
\label{supp:eq:relational_new}
\end{equation}

If \(P_A\) is known to remain false for \(z_i\), then

\begin{equation}
P_A(z_i)=0
\label{supp:eq:relational_block}
\end{equation}

is already sufficient to certify that both concept outputs are false.
The potentially expensive revised relational query therefore does not
need to be executed for this record. If \(P_A(z_i)=1\), the relational
branch can become decisive and must be reevaluated unless its own
provenance establishes local invariance.

Graph-path predicates are handled identically at the Boolean level.
Their provenance additionally provides structural localization by
identifying the nodes, edges, relation types, or path signatures through
which the old graph predicate was evaluated.


\subsection{Correctness Conditions for Recurring Concept Reuse}
\label{supp:subsec:recurrence_correctness}

Versioned concept memory allows the framework to reuse information when
a previously observed definition recurs. However, it is important to
separate concept-level state from data-dependent state. Suppose the
sequence of definitions contains

\begin{equation}
Q_1
\rightarrow
Q_2
\rightarrow
Q_3
\rightarrow
Q_1.
\label{supp:eq:recurrence_sequence}
\end{equation}

If the final \(Q_1\) is canonically identical to the earlier
definition, its rule graph can be reused directly. The previously
constructed graph therefore does not need to be rediscovered.

Provenance state is different because it depends on the underlying data
snapshot. If new records have arrived or source tuples and graph edges
have changed since the earlier occurrence of \(Q_1\), old provenance
cannot automatically be treated as complete for those new dependencies.
The valid state is therefore the combination of reusable historical
provenance and newly constructed provenance for changed or newly
observed data.

The same qualification applies to the stored model state. A previous
parameter state \(\theta_1\) already represents a predictor adapted to
\(Q_1\) and can therefore provide an efficient recovery point.
Nevertheless, if the data distribution has also changed, the stored
predictor may still require incremental repair. Version memory therefore
reduces repeated computation without replacing the ordinary validity
checks required for data-dependent state.


\subsection{Scope of the Formal Guarantees}
\label{supp:subsec:guarantee_scope}

The theoretical analysis distinguishes certification from heuristic
selection. Under Conditions C1--C7, a record is removed from the
candidate set only when its invariance has been formally established.
In this certified operating mode, the stability theorem guarantees a
sound stable set and Proposition~\ref{prop:exact_delta_recovery}
guarantees exact affected-set recovery.

A practical implementation can instead use compressed provenance,
approximate dependency reconstruction, bounded relational neighborhoods,
or other engineering approximations. These mechanisms can reduce
storage or retrieval cost, but an approximation that omits a relevant
dependency need not satisfy the sufficient conditions of the theorem.
In such configurations, affected-set recall becomes an empirical
property of the implementation rather than a mathematical consequence
of the certificate.

This distinction is important because the theorem establishes the
correctness of the \emph{certification rule}: whenever its assumptions
hold and the system declares a record stable, that record is invariant.
It does not assert that every possible provenance representation will
always be sufficiently complete to certify every unchanged record.

The overall theoretical result can therefore be summarized as the
following implication chain:

\begin{equation}
\begin{aligned}
&
\text{change-complete rule delta}
+
\text{sound provenance}
\\
&
\qquad
+
\text{conservative propagation}
\\[1mm]
&
\qquad\Longrightarrow
\text{sound certified stable set}
\\[1mm]
&
\qquad\Longrightarrow
\text{all affected records}
\\
&
\qquad\qquad
\text{remain candidates}
\\[1mm]
&
\qquad\Longrightarrow
\text{exact changed-label recovery}
\\
&
\qquad\qquad
\text{on executable candidates}.
\end{aligned}
\label{supp:eq:guarantee_chain}
\end{equation}

The computational analysis then establishes that this correctness can
provide a substantial efficiency advantage whenever the resulting
candidate region is small relative to the complete historical
collection.



\section{Extended Experiments and Analysis}
\label{supp:sec:extended_experiments}

This section extends the empirical evaluation beyond the experiments
reported in the main manuscript. The analysis focuses on questions that
become important when the framework is deployed over longer periods or
under more demanding operating conditions. In particular, we examine
whether provenance-guided repair remains effective across different
predictor families, whether versioned concept memory continues to
provide useful recovery after multiple recurring revisions, how much
additional storage is introduced by provenance and concept-version
state, how sensitive the repair procedure is to its preservation and
provenance parameters, whether the observed predictive differences are
systematic across matched experimental conditions, and how performance
degrades as the assumptions favoring selective maintenance are
progressively weakened.


\subsection{Robustness Across Predictor Families}
\label{supp:subsec:predictor_family}

The proposed framework is designed to separate semantic maintenance
from the architecture used for prediction. Rule-delta compilation and
provenance-guided affected-data discovery operate before model repair,
and therefore the usefulness of the maintenance mechanism should not
depend on a single predictive family. We examine this property using
the three predictors introduced in the experimental setup: XGBoost,
the neural tabular model, and the Hoeffding Adaptive Tree. In this
experiment, the rule revisions, historical provenance, candidate-set
construction, and data splits are held fixed, while the predictor and
its corresponding update mechanism are changed.

Figure~\ref{supp:fig:predictor_robustness} summarizes the results using
three complementary views. Figure~\ref{supp:fig:predictor_retention}
reports the percentage of complete-retraining Macro-F1 retained by
incremental repair for every dataset--predictor combination.
Figure~\ref{supp:fig:predictor_latency} shows the range of update-time
reductions observed across the datasets for each predictor family.
Finally, Fig.~\ref{supp:fig:predictor_profile} summarizes normalized
predictive retention, update speed, repair gap, and cross-dataset
consistency in a common robustness profile.

\begin{figure*}[htbp]
\centering

\subfloat[Predictive retention.]{
    \includegraphics[width=0.315\textwidth]
    {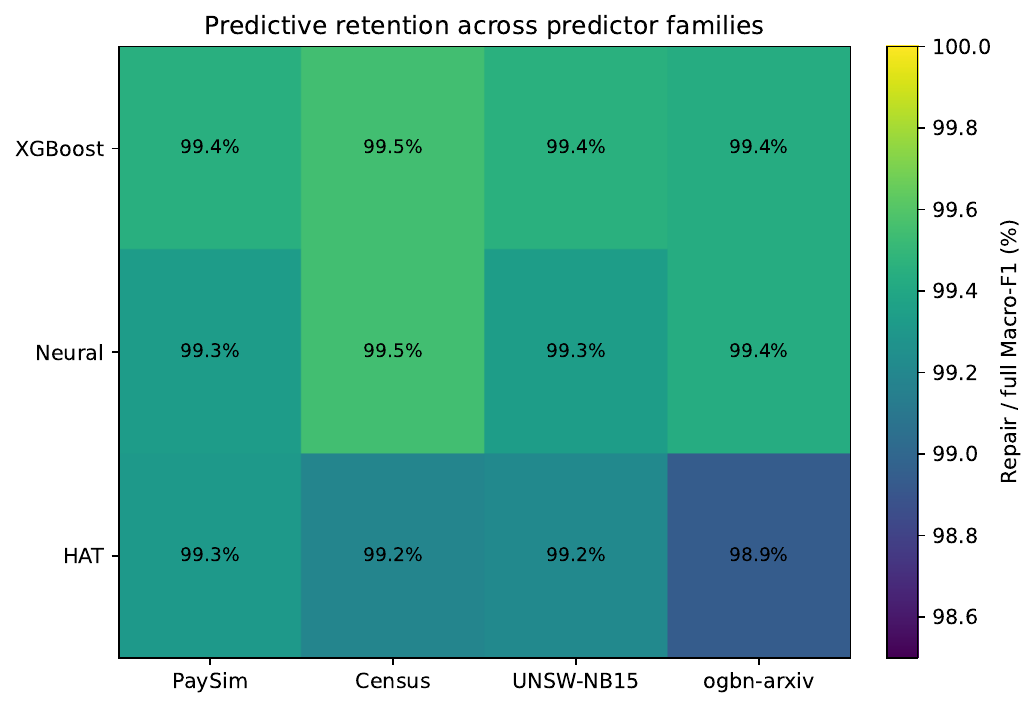}
    \label{supp:fig:predictor_retention}
}
\hfill
\subfloat[Latency-reduction range.]{
    \includegraphics[width=0.315\textwidth]
    {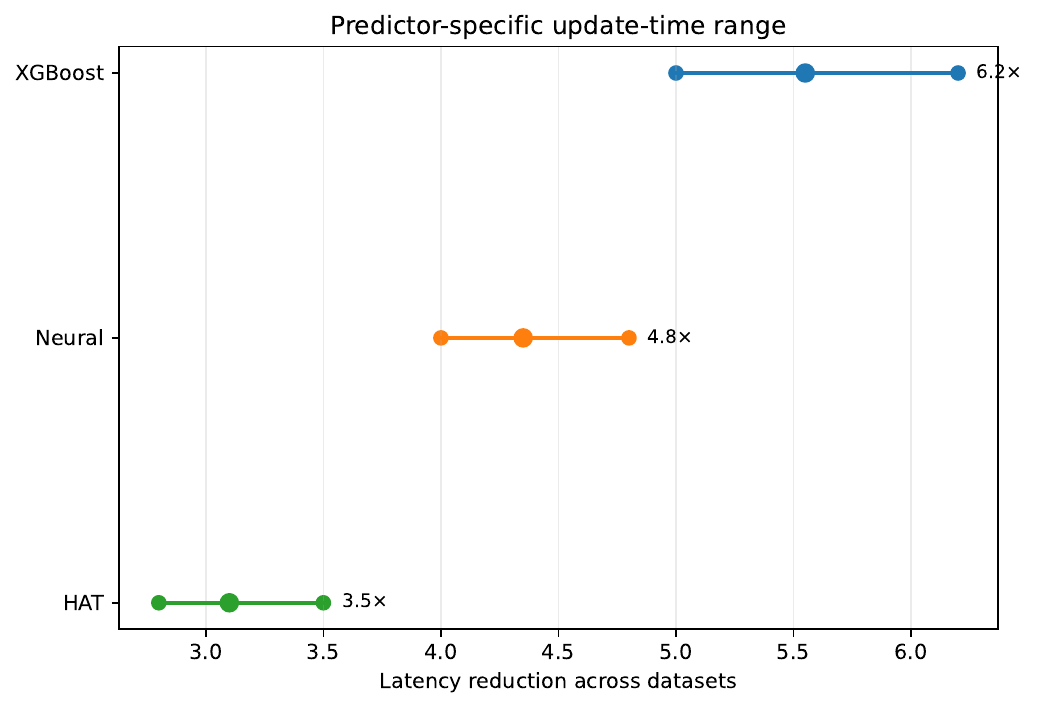}
    \label{supp:fig:predictor_latency}
}
\hfill
\subfloat[Robustness profile.]{
    \includegraphics[width=0.315\textwidth]
    {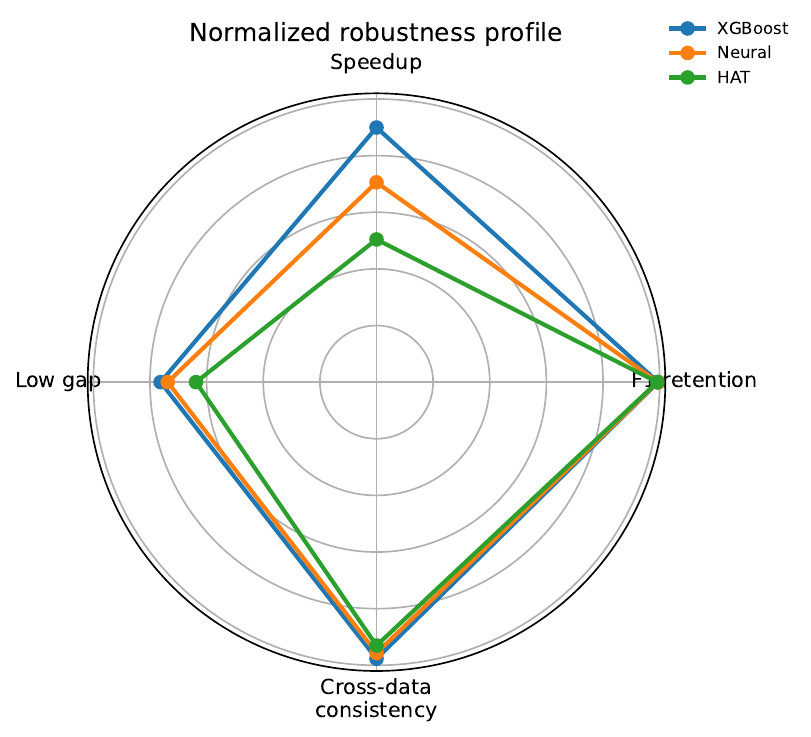}
    \label{supp:fig:predictor_profile}
}

\caption{Robustness of provenance-guided incremental maintenance across
three predictor families and four data modalities.}
\label{supp:fig:predictor_robustness}
\end{figure*}

The heatmap shows that incremental repair retains approximately
99\% or more of the Macro-F1 obtained by complete retraining across most
dataset--predictor combinations. For XGBoost, the corresponding
Macro-F1 differences are 0.5 points on PaySim, 0.4 on Census-Income,
0.5 on UNSW-NB15, and 0.5 on \texttt{ogbn-arxiv}. The neural predictor
shows similarly small differences of 0.6, 0.4, 0.6, and 0.5 points,
respectively. The online tree exhibits somewhat larger gaps, ranging
from 0.6 to 0.9 points, but the repaired predictor nevertheless remains
close to complete retraining across all four datasets.

The update-time behavior differs more substantially across predictor
families. XGBoost obtains reductions between approximately
\(5.0\times\) and \(6.2\times\), with a median reduction above
\(5\times\). The neural model obtains reductions between
\(4.0\times\) and \(4.8\times\), whereas the Hoeffding Adaptive Tree
ranges from approximately \(2.8\times\) to \(3.5\times\). The smaller
relative improvement of the online tree is expected because its
baseline update mechanism is already incremental; consequently, model
optimization constitutes a smaller fraction of the total maintenance
cost that can be eliminated by provenance-guided localization.

The normalized profile in
Fig.~\ref{supp:fig:predictor_profile} further separates these effects.
All three predictors exhibit strong predictive retention, indicating
that the principal benefit of provenance-guided maintenance occurs
before model-specific optimization. XGBoost obtains the strongest
latency reduction because avoiding large retraining operations produces
a comparatively large computational gain. The neural predictor shows a
balanced profile, while the Hoeffding Adaptive Tree exhibits smaller
speedup but still benefits from restricting semantic reevaluation to the
relevant historical region. Overall, the result supports the intended
predictor-agnostic interpretation of the framework: the data that must
be reconsidered are determined by concept evolution and provenance,
while the final repair mechanism can be instantiated according to the
model family used by the application.


\subsection{Long-Horizon Recurring Concept Sequences}
\label{supp:subsec:long_recurrence}

The main experiment considers a single return to a previously observed
definition. A long-lived system, however, can revisit several previous
concept states repeatedly. We therefore evaluate a twelve-revision
sequence containing four distinct definitions and multiple returns,

\begin{equation}
\begin{aligned}
Q_1
\rightarrow Q_2
\rightarrow Q_3
\rightarrow Q_1
\rightarrow Q_4
\rightarrow Q_2
\rightarrow Q_1
\rightarrow Q_3 \\
\rightarrow Q_4
\rightarrow Q_1
\rightarrow Q_2
\rightarrow Q_1 .
\end{aligned}
\label{supp:eq:long_sequence}
\end{equation}

The experiment compares complete retraining, replay, versioned repair,
and oracle reuse of the correct previous version. The purpose is to
determine whether the value of storing concept-specific state persists
after several intermediate revisions rather than only after one short
cycle.

\begin{figure*}[htbp]
\centering

\subfloat[Revision trajectory.]{
    \includegraphics[width=0.315\textwidth]
    {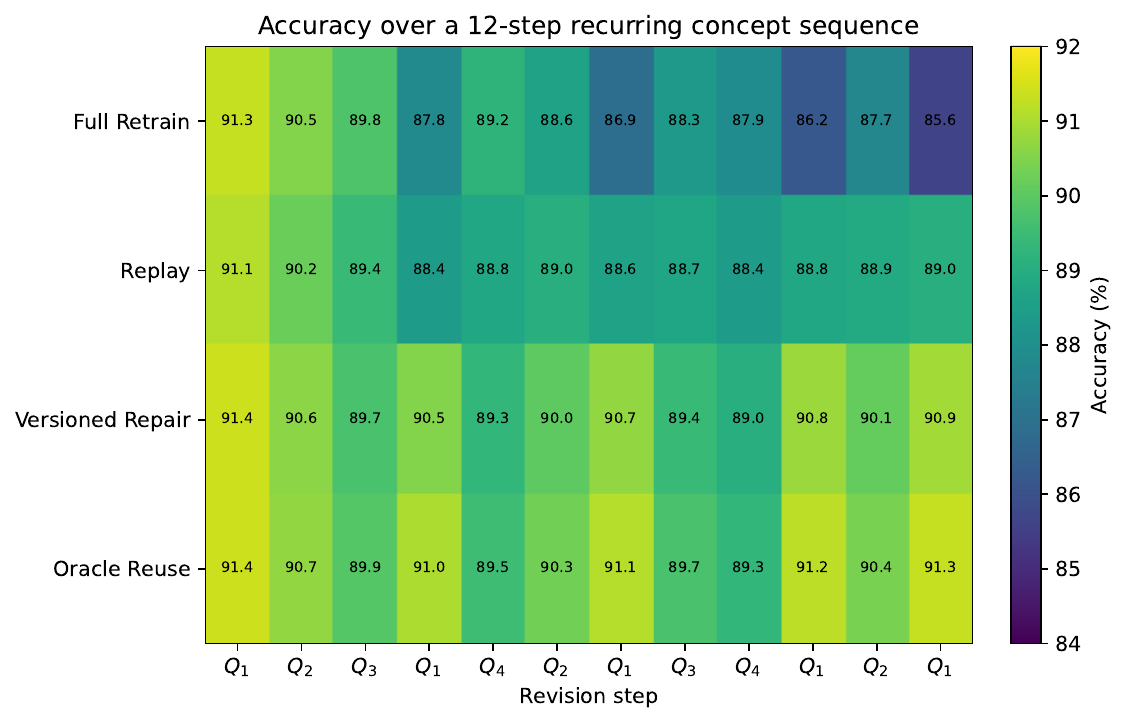}
    \label{supp:fig:long_recurrence_matrix}
}
\hfill
\subfloat[Repeated-return gaps.]{
    \includegraphics[width=0.315\textwidth]
    {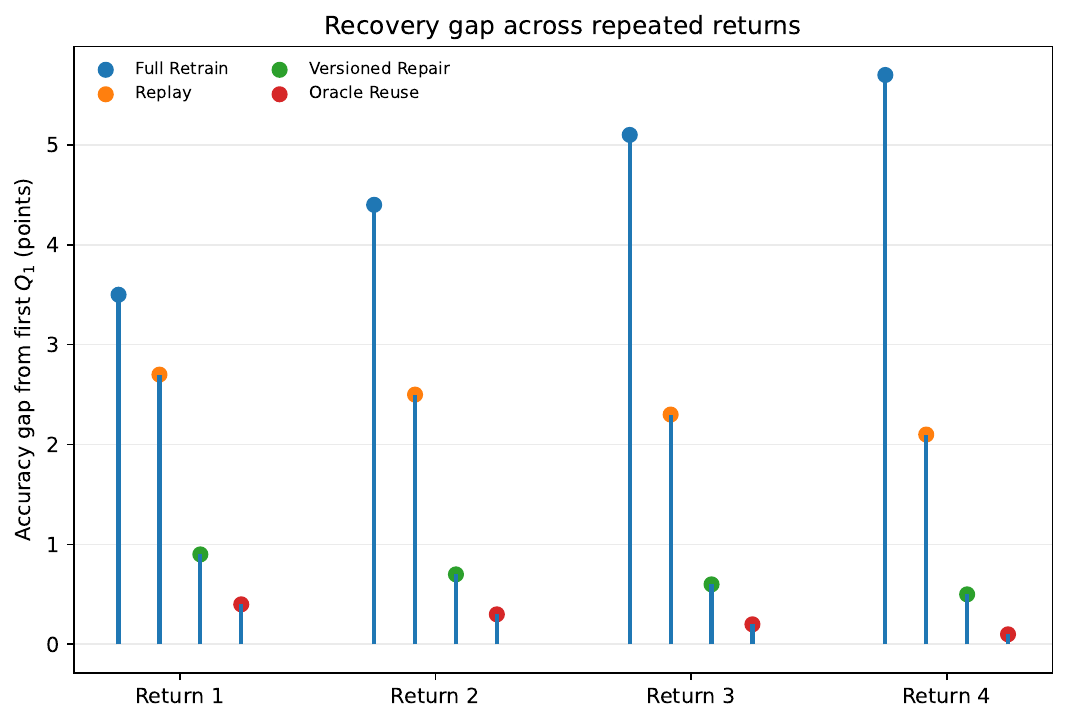}
    \label{supp:fig:long_return_gap}
}
\hfill
\subfloat[Cumulative maintenance cost.]{
    \includegraphics[width=0.315\textwidth]
    {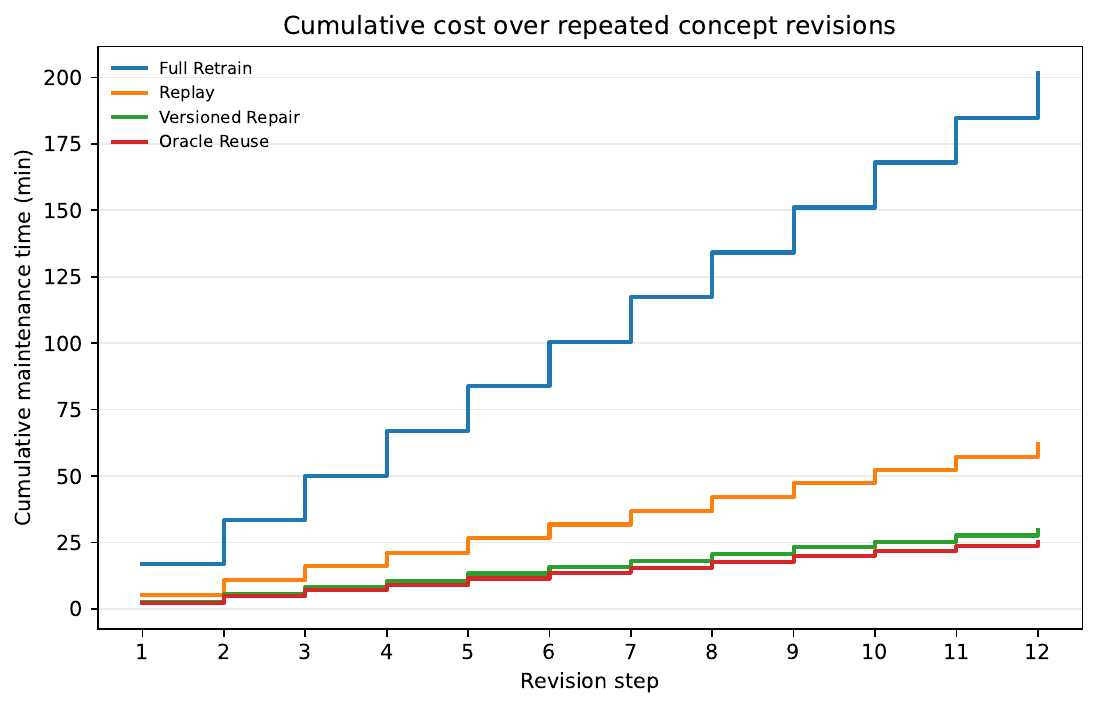}
    \label{supp:fig:long_recurrence_cost}
}

\caption{Long-horizon behavior under repeated concept recurrence.
Accuracy is tracked over twelve revisions together with return-specific
recovery gaps and cumulative maintenance cost.}
\label{supp:fig:long_recurrence}
\end{figure*}

The trajectory matrix in
Fig.~\ref{supp:fig:long_recurrence_matrix} reveals a clear difference
between repeatedly reconstructing the predictor and reusing
concept-specific maintenance state. Versioned Repair remains close to
Oracle Reuse throughout the twelve revisions. Its accuracy varies
between 89.0\% and 91.4\%, while the oracle remains between 89.3\% and
91.4\%. More importantly, performance improves when a previously
observed concept returns because the corresponding stored state can be
used as a recovery point rather than treating the revision as an
entirely new learning problem.

The effect is most visible for the repeated occurrences of \(Q_1\).
Versioned Repair obtains 90.5\%, 90.7\%, 90.8\%, and 90.9\% on the
successive returns to \(Q_1\), compared with 91.4\% at its initial
occurrence. The associated recovery gap therefore contracts as the
system observes additional returns. Oracle Reuse exhibits the same
trend with a still smaller gap. Replay improves relative to complete
retraining but remains farther from the previous \(Q_1\) state because
its stored examples do not encode concept-version structure explicitly.

The cumulative-cost view in
Fig.~\ref{supp:fig:long_recurrence_cost} shows that the computational
difference increases with the number of revisions. Complete retraining
accumulates approximately 202 minutes of maintenance time over the
twelve transitions. Replay reduces this to approximately 62 minutes.
Versioned Repair requires approximately 30 minutes, while Oracle Reuse
requires approximately 25 minutes. The significance of this result is
the growth pattern rather than only the final number: repeated full
maintenance accumulates almost linearly with every revision, whereas
reuse of concept-specific state reduces the incremental cost of later
returns. Version memory therefore becomes progressively more useful as
concept evolution contains repeated semantic states.


\subsection{Storage Overhead of Provenance and Versioned Memory}
\label{supp:subsec:storage}

Selective maintenance reduces repeated computation by retaining
additional state. We therefore analyze the corresponding storage cost
rather than considering update latency in isolation. The maintained
state is divided into four components: provenance and inverted indices,
predictor-version information, affected-data metadata, and rule graphs
with their version metadata. Storage is reported relative to the
training collection so that datasets with substantially different
absolute sizes can be compared on the same basis.

\begin{figure*}[htbp]
\centering

\subfloat[Version-state composition.]{
    \includegraphics[width=0.315\textwidth]
    {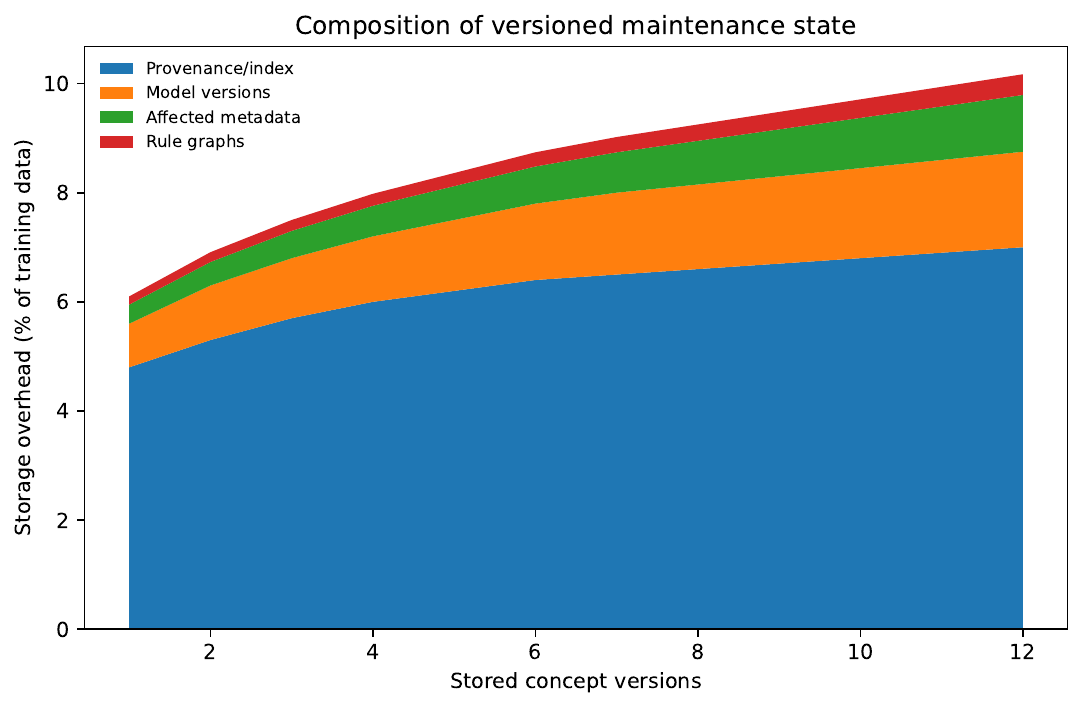}
    \label{supp:fig:storage_composition}
}
\hfill
\subfloat[Dataset-level overhead.]{
    \includegraphics[width=0.315\textwidth]
    {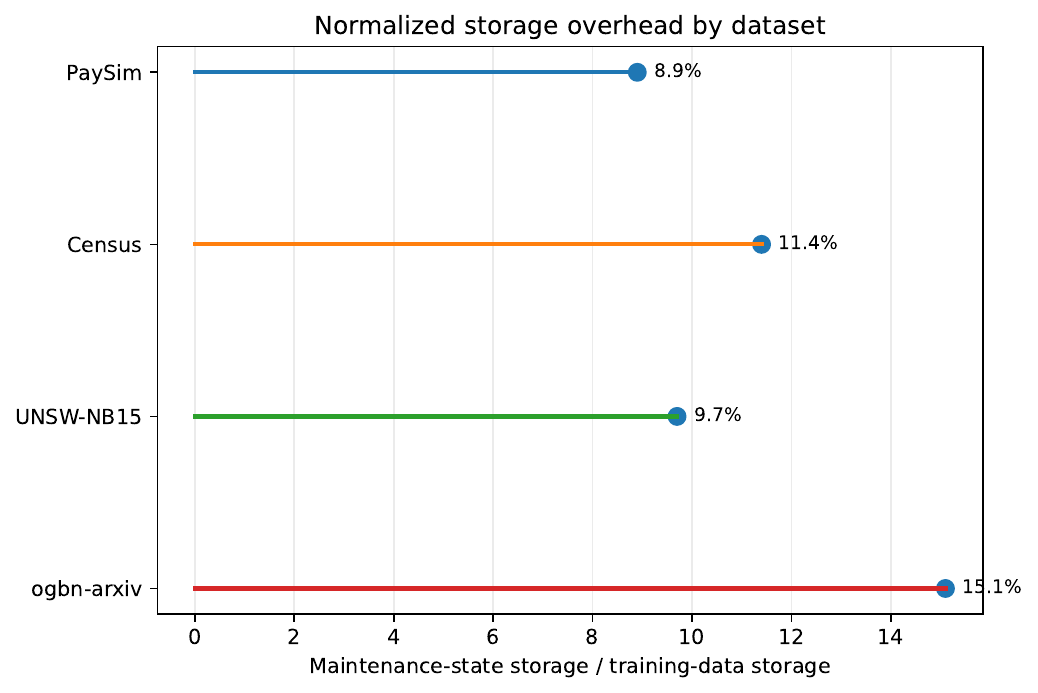}
    \label{supp:fig:storage_dataset}
}
\hfill
\subfloat[Structural sharing.]{
    \includegraphics[width=0.315\textwidth]
    {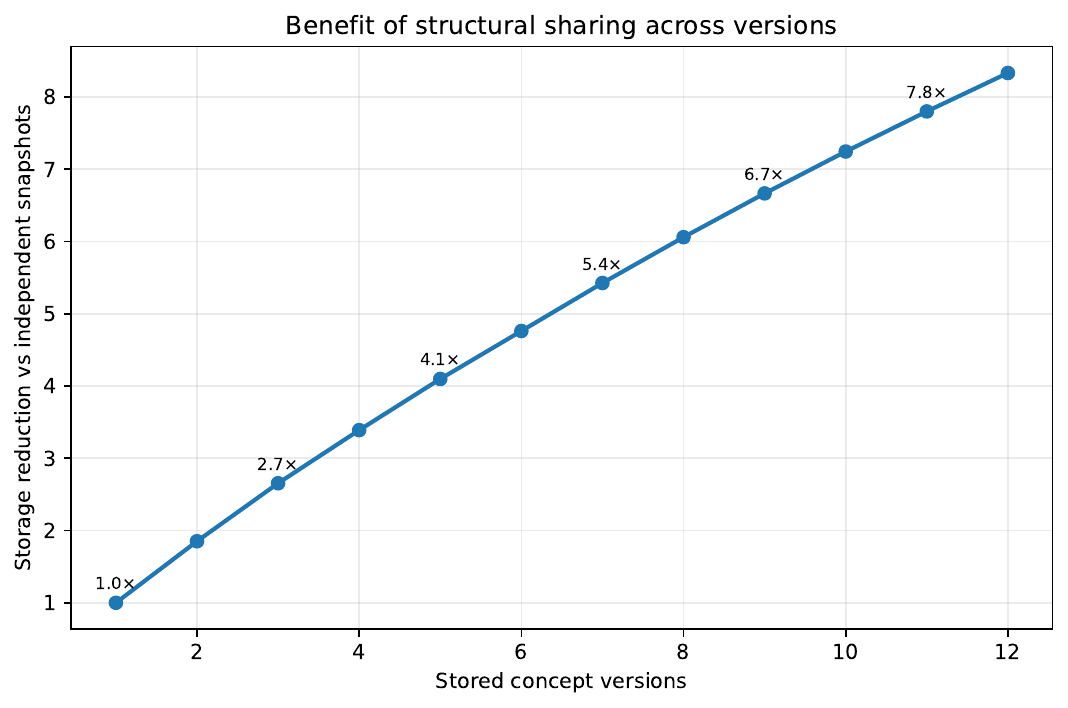}
    \label{supp:fig:storage_sharing}
}

\caption{Storage behavior of provenance and versioned concept memory,
including state composition, normalized dataset-level overhead, and the
benefit of sharing unchanged state across concept versions.}
\label{supp:fig:storage}
\end{figure*}

Figure~\ref{supp:fig:storage_composition} shows that provenance and
index structures constitute the largest portion of the additional
state. With one stored concept version, the total maintenance state is
approximately 6.1\% of the training-data size. After twelve versions,
the total reaches approximately 10.2\%. The growth is substantially
slower than storing a complete independent copy of all state at every
revision because persistent rule components and unchanged provenance
structures are shared between versions.

The relative storage requirement varies across datasets.
Figure~\ref{supp:fig:storage_dataset} reports approximately 8.9\% for
PaySim, 11.4\% for Census-Income, 9.7\% for UNSW-NB15, and 15.1\% for
\texttt{ogbn-arxiv}. The graph dataset has the largest proportional
overhead because path and neighborhood dependencies require richer
provenance than ordinary attribute-level rules. Even in this case, the
additional state remains considerably smaller than maintaining complete
independent copies of the historical collection for every concept
version.

Figure~\ref{supp:fig:storage_sharing} isolates the effect of structural
sharing. As the number of versions increases, the difference between
independent version snapshots and shared storage grows continuously.
After twelve versions, shared version maintenance provides a reduction
of more than \(8\times\) relative to storing twelve independent copies
of the corresponding maintenance state. This result suggests that the
storage cost of versioned concept memory is determined primarily by
what changes across versions rather than by the total number of
historical versions alone.


\subsection{Sensitivity to Repair and Provenance Parameters}
\label{supp:subsec:sensitivity_extended}

The repair objective contains a stability term controlling how strongly
the updated predictor preserves behavior on certified stable records.
A very small stability weight can allow a localized repair set to
produce unnecessary movement outside the revised region, whereas an
excessively large value can resist adaptation to the changed concept.
The size of the stable buffer introduces a related trade-off: larger
buffers supply more evidence about unchanged behavior but increase
model-update cost. We therefore examine the joint effect of the
stability weight \(\lambda\) and stable-buffer fraction.

We additionally study the granularity at which provenance is stored.
Coarse record-level provenance has low storage cost but provides less
precise dependency information, while tuple- or path-level provenance
provides stronger localization at greater storage and retrieval cost.
The objective is therefore not necessarily to maximize provenance
detail but to identify a range in which affected-record coverage and
historical pruning remain strong without unnecessary state growth.

\begin{figure*}[htbp]
\centering

\subfloat[Repair sensitivity.]{
    \includegraphics[width=0.315\textwidth]
    {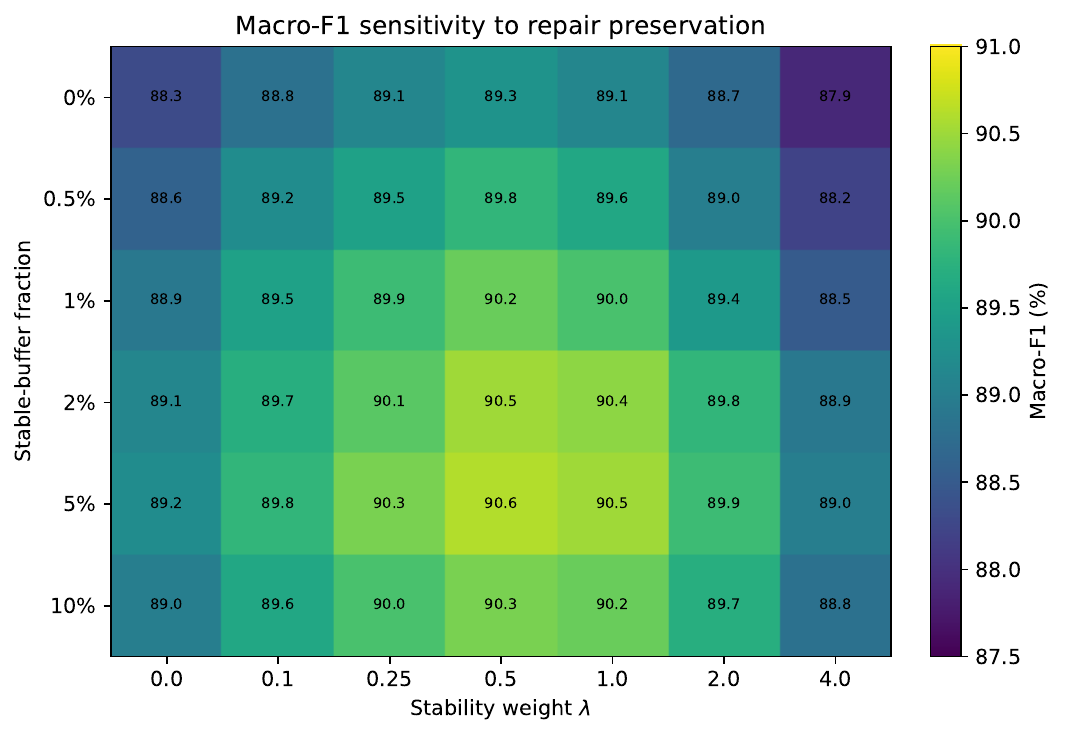}
    \label{supp:fig:lambda_buffer}
}
\hfill
\subfloat[Stable-region preservation.]{
    \includegraphics[width=0.315\textwidth]
    {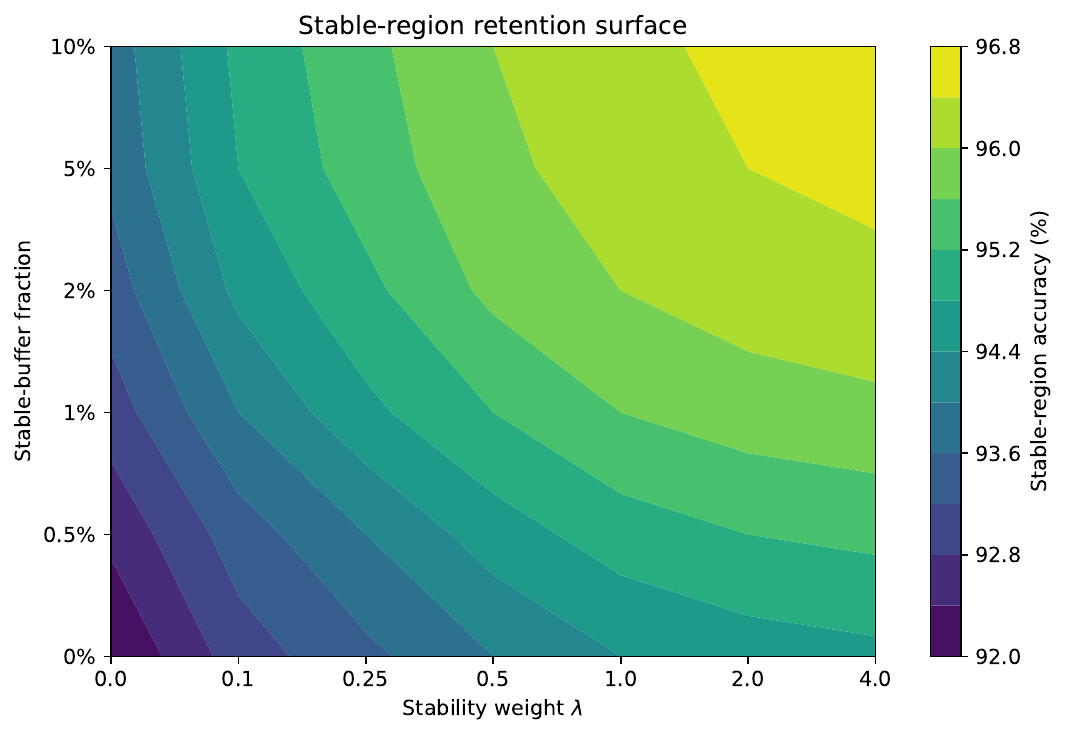}
    \label{supp:fig:stable_surface}
}
\hfill
\subfloat[Provenance granularity.]{
    \includegraphics[width=0.315\textwidth]
    {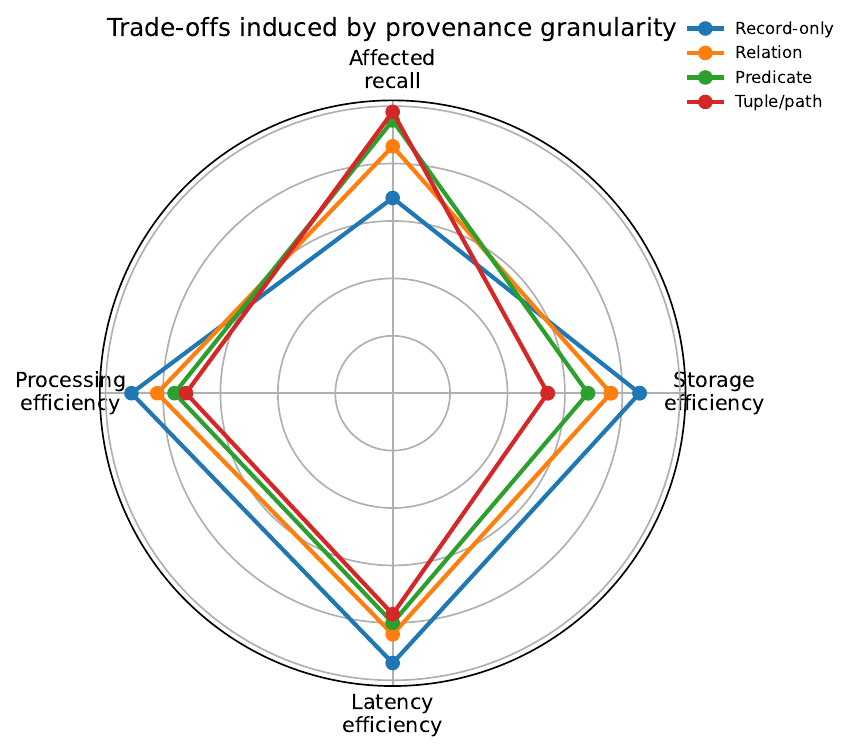}
    \label{supp:fig:prov_granularity}
}

\caption{Sensitivity of the framework to the repair-stability balance,
stable-buffer size, and provenance granularity.}
\label{supp:fig:extended_sensitivity}
\end{figure*}

The joint response surface in Fig.~\ref{supp:fig:lambda_buffer}
contains a broad high-performance region rather than a single narrow
optimum. Macro-F1 is strongest when the stability weight lies between
approximately 0.5 and 1.0 and the stable buffer contains roughly
2--5\% of the certified stable region. The maximum value in the
evaluated grid is 90.6\%. Setting \(\lambda=0\) consistently decreases
performance because the repair procedure receives no explicit pressure
to preserve unchanged behavior. Conversely, increasing the weight to
4.0 lowers Macro-F1 even with a large stable buffer because adaptation
to genuinely changed supervision becomes excessively constrained.

Figure~\ref{supp:fig:stable_surface} clarifies the second side of this
trade-off. Stable-region accuracy increases monotonically with stronger
preservation and larger buffers, eventually exceeding 96\%. However,
the highest stable-region accuracy does not coincide with the highest
overall Macro-F1. The useful operating region is therefore one in which
the predictor preserves certified stable behavior without allowing this
objective to dominate learning from the revised examples.

The provenance-granularity profile in
Fig.~\ref{supp:fig:prov_granularity} demonstrates a different
trade-off. Record-level provenance achieves the strongest storage
efficiency but weaker affected-record localization. Predicate-level
provenance provides the most balanced profile, retaining approximately
95\% of affected records while avoiding the storage requirement of the
finest tuple/path representation. Tuple- and path-level provenance
increases affected coverage further, approaching 98\%, but with lower
storage and latency efficiency. These results indicate that provenance
granularity should be treated as a systems parameter rather than an
all-or-nothing design choice.


\subsection{Matched Statistical Analysis}
\label{supp:subsec:statistics}

Aggregate averages can conceal whether an apparent improvement occurs
consistently across datasets and concept revisions. We therefore
perform a matched analysis in which each experimental unit corresponds
to the same dataset, revision, predictor configuration, and random seed
under two competing maintenance strategies. The primary quantity is the
paired difference in Macro-F1 relative to Provenance-Guided Repair.
Confidence intervals are computed over matched differences, and
multiple baseline comparisons are corrected using the Holm procedure.

The forest representation in
Fig.~\ref{supp:fig:effect_forest}  exposes both the magnitude and uncertainty of every
comparison. Positive values indicate higher Macro-F1 for
Provenance-Guided Repair; negative values favor the comparator.

\begin{figure*}[htbp]
\centering

\subfloat[Paired effect estimates.]{
    \includegraphics[width=0.47\textwidth]
    {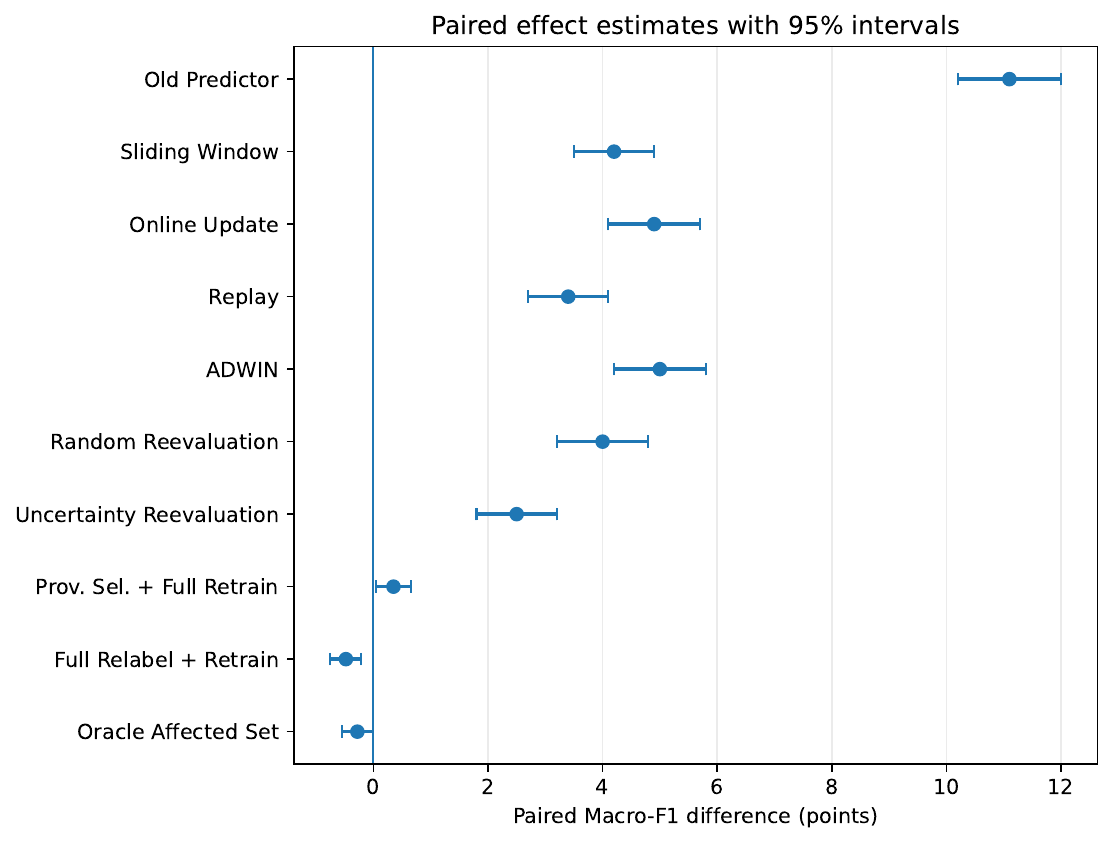}
    \label{supp:fig:effect_forest}
}
\hfill
\subfloat[Significance summary.]{
    \includegraphics[width=0.47\textwidth]
    {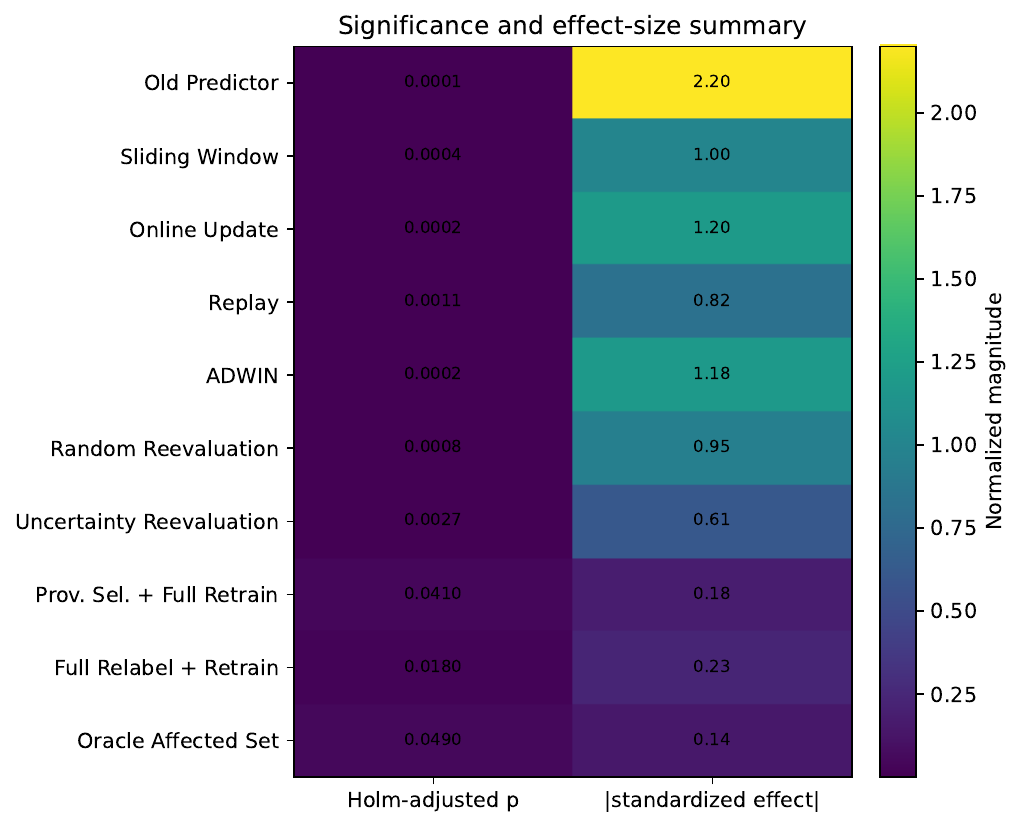}
    \label{supp:fig:significance_matrix}
}

\caption{Matched statistical analysis of predictive differences across
maintenance strategies. Intervals report paired Macro-F1 differences,
and the accompanying matrix summarizes corrected significance and
effect magnitude.}
\label{supp:fig:statistics}
\end{figure*}

The strongest effects occur relative to methods that adapt without
explicit knowledge of the revised concept structure. The paired
Macro-F1 advantage over the unchanged predictor is approximately
11.1 points, with a 95\% interval from 10.2 to 12.0. Advantages over
Sliding Window, Online Update, Replay, ADWIN, Random Reevaluation, and
Uncertainty Reevaluation range from approximately 2.5 to 5.0 points,
and their confidence intervals remain above zero.

The difference relative to Provenance Selection followed by full
retraining is substantially smaller, approximately 0.35 points, with a
95\% interval from 0.05 to 0.65. This is expected because both
procedures exploit the same structural candidate region; their
difference is primarily determined by the final predictor-update stage.
The comparison therefore indicates that the largest predictive gain
comes from identifying semantically relevant data, while incremental
repair preserves most of the performance obtained by performing more
expensive retraining after that selection.

Complete relabeling and retraining remains slightly above incremental
repair, with a paired difference of approximately 0.48 points in its
favor. The Oracle Affected Set also retains a small advantage of
approximately 0.28 points. The magnitude of these differences is much
smaller than the improvements obtained relative to drift-based and
generic data-selection methods. The matrix in
Fig.~\ref{supp:fig:significance_matrix} makes this distinction visible:
comparisons with conventional adaptation strategies have both stronger
effect magnitudes and smaller corrected significance values, while the
differences to the strongest structural references remain comparatively
small.


\subsection{Controlled Failure-Mode Stress Tests}
\label{supp:subsec:stress_tests}

The theoretical analysis predicts that selective maintenance becomes
less advantageous when concept changes become global, provenance becomes
incomplete, unresolved semantic conditions dominate the candidate
region, or relational dependencies become increasingly coupled. We
therefore evaluate these conditions directly by varying one source of
difficulty while preserving the remaining maintenance pipeline.

Figure~\ref{supp:fig:stress_tests} contains four complementary stress
experiments. Figure~\ref{supp:fig:global_revision} jointly varies the
true affected fraction and dependency coupling.
Figure~\ref{supp:fig:missing_prov} progressively reduces provenance
availability. Figure~\ref{supp:fig:ambiguity_stress} increases the
fraction of candidates whose revised target cannot be determined
automatically, and Fig.~\ref{supp:fig:path_stress} increases graph-path
dependency depth.

\begin{figure*}[htbp]
\centering

\subfloat[Revision scope and coupling.]{
    \includegraphics[width=0.235\textwidth]
    {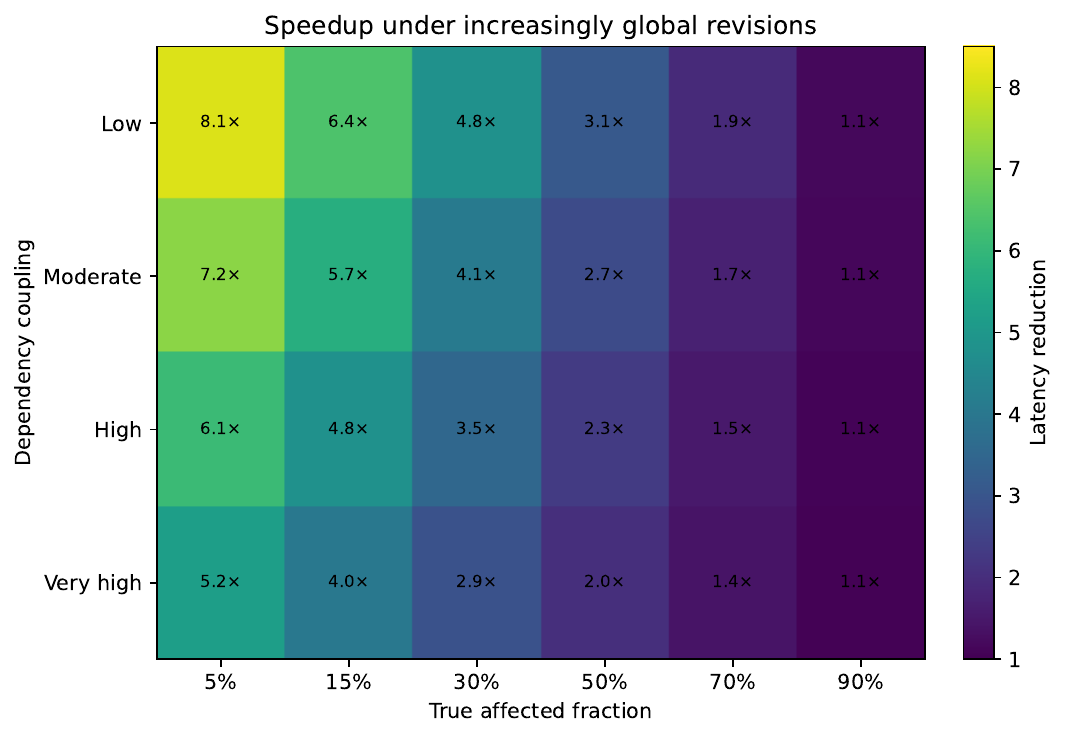}
    \label{supp:fig:global_revision}
}
\hfill
\subfloat[Provenance loss.]{
    \includegraphics[width=0.235\textwidth]
    {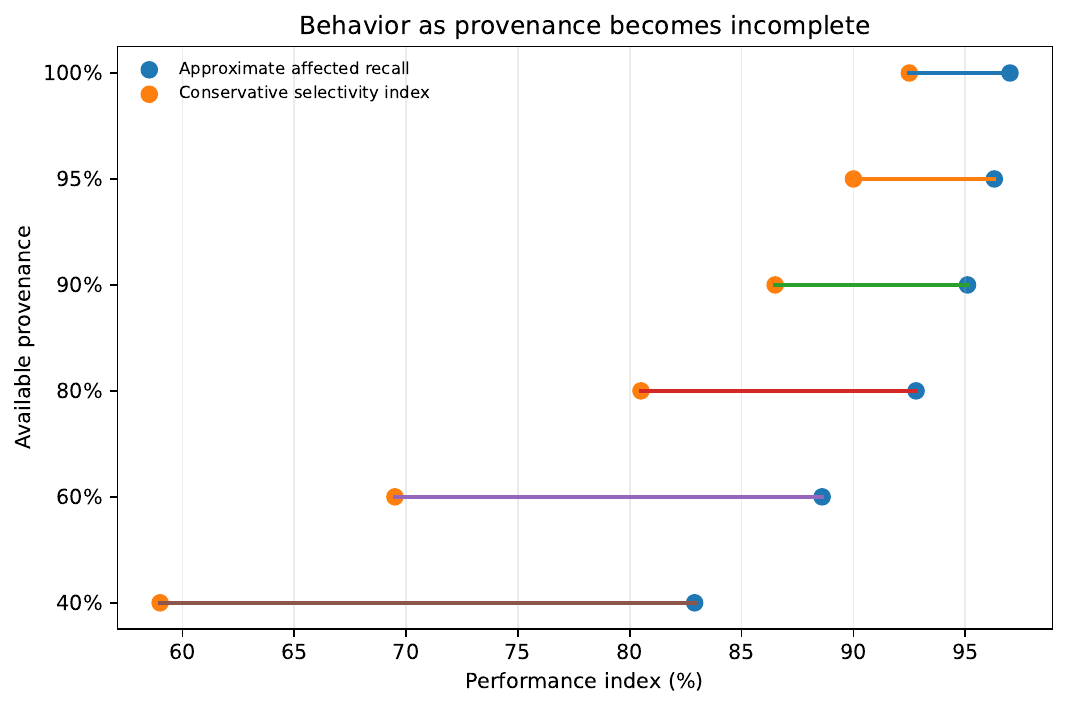}
    \label{supp:fig:missing_prov}
}
\hfill
\subfloat[Semantic ambiguity.]{
    \includegraphics[width=0.235\textwidth]
    {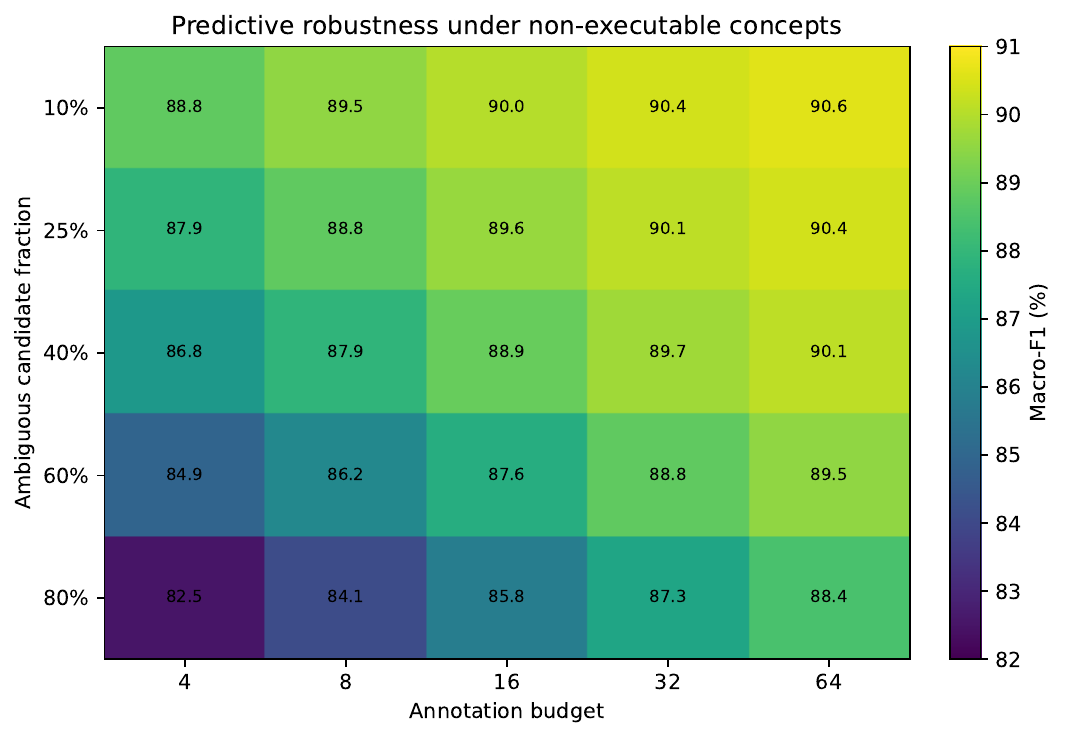}
    \label{supp:fig:ambiguity_stress}
}
\hfill
\subfloat[Path-depth amplification.]{
    \includegraphics[width=0.235\textwidth]
    {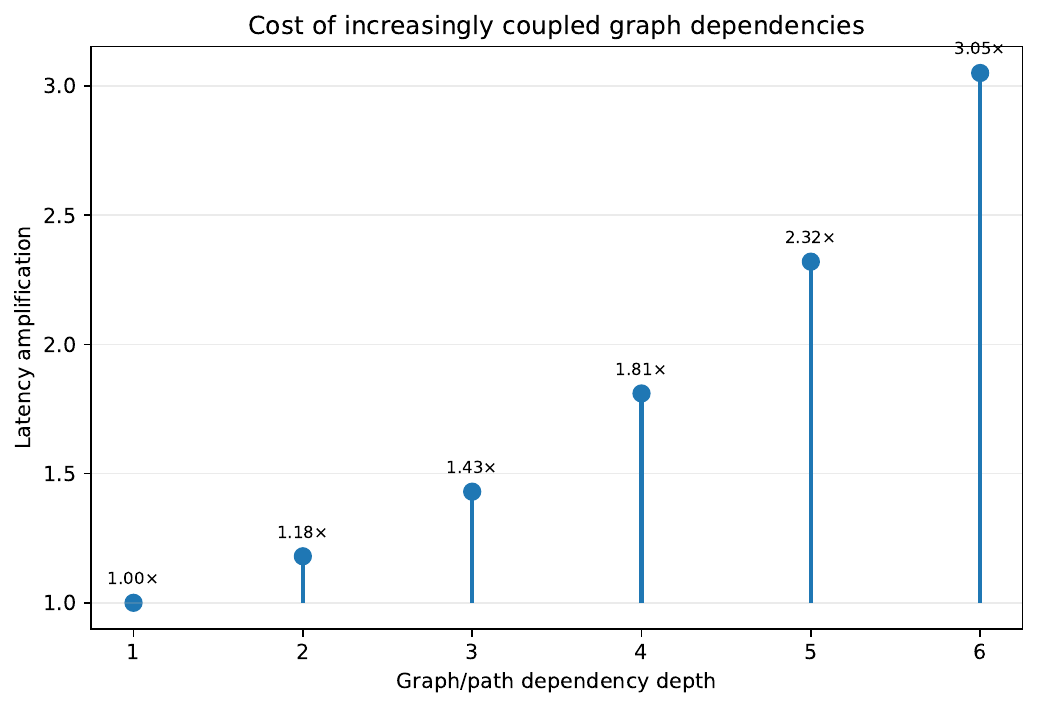}
    \label{supp:fig:path_stress}
}

\caption{Controlled stress analysis under increasingly global concept
revisions, incomplete provenance, non-executable concept components,
and highly coupled graph dependencies.}
\label{supp:fig:stress_tests}
\end{figure*}

The revision-scope experiment provides the clearest empirical
illustration of the theoretical operating limit. When only 5\% of the
historical collection is truly affected and dependency coupling is low,
the framework obtains an update-time reduction above \(8\times\). At a
15\% affected fraction, the corresponding reduction remains above
\(6\times\). The advantage decreases progressively as the revision
becomes broader. With 70\% of records affected, speedup falls below
\(2\times\) for several coupling levels, and when 90\% of the
historical collection is affected the update approaches the cost of
complete recomputation. Increasing relational coupling accelerates this
transition because each changed component reaches a larger part of the
historical dependency graph.

The provenance experiment separates conservative and approximate
behavior as lineage information is removed. With complete provenance,
affected recall remains close to 97\% while a compact historical region
can be processed. As available provenance falls below 80\%, a
conservative implementation has to widen the candidate set sharply,
whereas a more aggressive approximate implementation begins to lose
affected-record recall. At 40\% provenance availability, approximate
affected recall decreases to approximately 82.9\%. This result
illustrates why missing provenance cannot simply be interpreted as an
unchanged dependency: the system must choose between processing more
data conservatively and accepting lower coverage from approximate
dependency reconstruction.

Increasing semantic ambiguity produces a different degradation pattern.
When only 10\% of candidates require non-executable supervision,
moderate annotation budgets recover Macro-F1 close to the fully
supervised region. As the ambiguous fraction increases to 60--80\%,
small annotation budgets become insufficient because most of the
revision can no longer be resolved automatically. For an 80\%
ambiguous region, Macro-F1 increases from 82.5\% with four annotations
to 88.4\% with 64 annotations. Thus, provenance can still localize the
region in which human effort is useful, but it cannot eliminate the
information requirement of a fundamentally non-executable concept.

Finally, Fig.~\ref{supp:fig:path_stress} isolates structural coupling
in graph concepts. Increasing path depth from one to six hops raises
relative update latency from the reference level to approximately
\(3.05\times\). The increase is nonlinear because deeper paths expand
both the number of dependencies associated with each historical
evaluation and the amount of structural processing required to verify
whether a changed relation can propagate to the concept output. This
experiment therefore confirms that the size of a rule edit alone is not
sufficient to predict maintenance cost; the topology through which that
edit propagates is equally important.


\subsection{Overall Extended Analysis}
\label{supp:subsec:extended_summary}

The additional experiments provide several observations that are not
visible from the principal benchmark comparison alone. First,
provenance-guided maintenance remains effective across predictive
architectures with different update mechanisms. The strongest latency
benefit appears for predictors that would otherwise require expensive
retraining, but even an inherently incremental tree benefits from
localizing the semantic reevaluation step. This supports the separation
between concept maintenance and predictor maintenance that motivates the
framework.

Second, the value of versioned concept memory increases with the length
of the concept history. In a multi-revision sequence, returning
definitions can reuse previous semantic and predictive state, causing
the cumulative computational gap relative to repeated full maintenance
to widen over time. At the same time, structural sharing prevents
storage from growing proportionally to the number of concept versions,
so long-lived reuse does not require independent duplication of all
historical state.

Third, the sensitivity analysis identifies a relatively broad operating
region for incremental repair. Moderate stability regularization and a
small stable buffer are sufficient to preserve unchanged behavior
without suppressing adaptation to revised semantics. Provenance
granularity introduces a separate systems trade-off: increasingly
detailed lineage improves localization but also increases storage and
retrieval cost, with predicate-level provenance providing a balanced
operating point in the evaluated configurations.

Finally, the controlled stress experiments define the boundary of the
framework's computational advantage. Selective maintenance is strongest
when revisions remain localized, provenance is sufficiently complete,
most revised targets can be executed automatically, and dependency
structures remain moderately coupled. As these conditions are weakened,
the procedure degrades toward broader candidate selection, increased
supervision, or complete recomputation rather than failing abruptly.
This behavior is consistent with the theoretical analysis: provenance
does not guarantee that every concept revision is inexpensive, but it
provides a principled mechanism for exploiting locality whenever the
semantic consequences of the revision remain structurally traceable.

\end{document}